\theoremstyle{plain}
\newtheorem{theorem}{Theorem}[section]
\newtheorem{proposition}[theorem]{Proposition}
\newtheorem{lemma}[theorem]{Lemma}
\theoremstyle{definition}
\theoremstyle{remark}
\DeclareMathOperator*{\argmin}{argmin}
\newcommand{\E}{\mathbb{E}}
\newcommand{\R}{\mathbb{R}}
\newcommand{\Rd}{\mathbb{R}^d}
\newcommand{\cP}{\mathcal{P}}
\newcommand{\eps}{\varepsilon}
\newcommand{\dd}{{\mathrm{d}}}
\newcommand{\norm}[1]{\left\Vert#1\right\Vert}
\renewcommand{\phi}{\psi}
\newcommand*{\defeq}{\coloneqq}
\newcommand{\ra}[1]{\renewcommand{\arraystretch}{#1}}
\newcolumntype{L}[1]{>{\raggedright\let\newline\\\arraybackslash\hspace{0pt}}m{#1}}
\newcolumntype{C}[1]{>{\centering\let\newline\\\arraybackslash\hspace{0pt}}m{#1}}
\newcolumntype{R}[1]{>{\raggedleft\let\newline\\\arraybackslash\hspace{0pt}}m{#1}}
\newcommand*{\eg}{{\it e.g.}\@\xspace}
\newcommand*{\ie}{{\it i.e.}\@\xspace}
\newcommand*{\tran}{^{\mkern-1.5mu\mathsf{T}}}
\newcommand{\Real}{\mathbb R}
\newcommand{\too}{\rightarrow}
\newcommand{\brac}[1]{\left [#1\right ]}
\newcommand{\bigO}{\mathcal{O}}
\definecolor{mygray}{gray}{0.95}
\icmltitlerunning{Multisample Flow Matching}
\begin{document}

\twocolumn[
\icmltitle{Multisample Flow Matching: Straightening Flows with Minibatch Couplings}



\icmlsetsymbol{equal}{*}

\begin{icmlauthorlist}
\icmlauthor{Aram-Alexandre Pooladian}{equal,meta,cds}
\icmlauthor{Heli Ben-Hamu}{equal,weizmann}
\icmlauthor{Carles Domingo-Enrich}{equal,meta,nyu}
\icmlauthor{Brandon Amos}{meta}
\icmlauthor{Yaron Lipman}{meta,weizmann}
\icmlauthor{Ricky T. Q. Chen}{meta}
\end{icmlauthorlist}

\icmlaffiliation{cds}{Center for Data Science, NYU}
\icmlaffiliation{nyu}{Courant Institute of Mathematical Sciences, NYU}
\icmlaffiliation{weizmann}{Weizmann Institute of Science}
\icmlaffiliation{meta}{Meta AI (FAIR)}

\icmlcorrespondingauthor{Ricky T. Q. Chen}{rtqichen@meta.com}

\icmlkeywords{Machine Learning, ICML}

\vskip 0.3in
]



\printAffiliationsAndNotice{\icmlEqualContribution} 

\begin{abstract}
Simulation-free methods for training continuous-time generative models construct probability paths that go between noise distributions and individual data samples. Recent works, such as Flow Matching, derived paths that are optimal for each data sample. However, these algorithms rely on independent data and noise samples, and do not exploit underlying structure in the data distribution for constructing probability paths. We propose Multisample Flow Matching, a more general framework that uses non-trivial couplings between data and noise samples while satisfying the correct marginal constraints. At very small overhead costs, this generalization allows us to (i) reduce gradient variance during training, (ii) obtain straighter flows for the learned vector field, which allows us to generate high-quality samples using fewer function evaluations, and (iii) obtain transport maps with lower cost in high dimensions, which has applications beyond generative modeling. Importantly, we do so in a completely simulation-free manner with a simple minimization objective. We show that our proposed methods improve sample consistency on downsampled ImageNet data sets, and lead to better low-cost sample generation.\looseness=-1
\end{abstract}

\section{Introduction}

\begin{figure}[t]
    \centering
    \begin{subfigure}[b]{0.49\linewidth}
        {\tiny \; NFE=400 \;\;\;\;\;\; 12 \;\;\;\;\;\;\;\;\;\;\;\;\; 8 \;\;\;\;\;\;\;\;\;\;\;\;\;\; 6} \\
        \includegraphics[width=\linewidth]{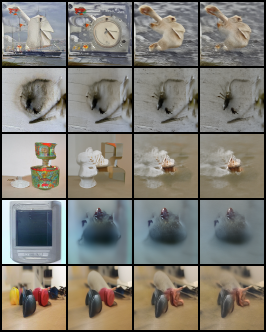}
        \vspace{-1.5em}
        \caption*{Flow Matching}
    \end{subfigure}
    \begin{subfigure}[b]{0.49\linewidth}
        {\tiny \; NFE=400 \;\;\;\;\;\; 12 \;\;\;\;\;\;\;\;\;\;\;\;\; 8 \;\;\;\;\;\;\;\;\;\;\;\;\;\; 6} \\
        \includegraphics[width=\linewidth]{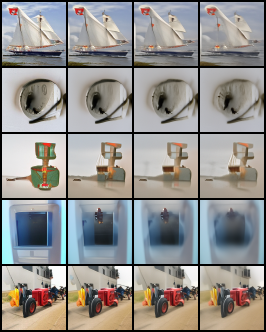}
        \vspace{-1.5em}
        \caption*{Multisample Flow Matching}
    \end{subfigure}
    \caption{Multisample Flow Matching trained with batch optimal couplings produces more consistent samples across varying NFEs. Note that both flows on each row start from the same noise sample.}
    \label{fig:samples_vs_nfe_imagenet64}
\end{figure}

Deep generative models offer an attractive family of
paradigms that can approximate a data distribution and produce high quality samples, with impressive results in recent years \citep{ramesh2022hierarchical,saharia2022photorealistic,gafni2022make}. 
In particular, these works have made use of simulation-free training methods for diffusion models \citep{ho2020denoising,song2020score}. 
A number of works have also adopted and generalized these simulation-free methods \citep{lipman2022flow,albergo2022building,liu2022flow,neklyudov2022action} for continuous normalizing flows (CNF; \citet{chen2018neural}), a family of continuous-time deep generative models that parameterizes a vector field which flows noise samples into 
data samples. 

Recently, \citet{lipman2022flow} proposed \emph{Flow Matching} (FM), a method to train CNFs based on constructing explicit \emph{conditional probability paths} between the noise distribution (at time $t=0$) and each data sample (at time $t=1$). Furthermore, they showed that these conditional probability paths can be taken to be the optimal transport path when the noise distribution is a standard Gaussian, a typical assumption in generative modeling. However, this does not imply that the \emph{marginal probability path} (marginalized over the data distribution) is anywhere close to the optimal transport path between the noise and data distributions.

Most existing works, including diffusion models and Flow Matching, have only considered conditional sample paths where the endpoints (a noise sample and a data sample) are sampled independently. However, this results in non-zero gradient variances even at convergence, slow training times, and in particular limits the design of probability paths. In turn, it becomes difficult to create paths that are fast to simulate, a desirable property for both likelihood evaluation and sampling.

\textbf{Contributions:} We present a tractable instance of Flow Matching with joint distributions, which we call \emph{Multisample Flow Matching}. Our proposed method generalizes the construction of probability paths by considering non-independent couplings of $k$-sample empirical distributions. 

Among other theoretical results, we show that if an appropriate optimal transport (OT) inspired coupling is chosen, then sample paths become straight as the batch size $k\to\infty$, leading to more efficient simulation. In practice, we observe both improved sample quality on ImageNet using adaptive ODE solvers and using simple Euler discretizations with a low budget number of function evaluations. 
Empirically, we find that on ImageNet, we can \emph{reduce the required sampling cost by 30\% to 60\%} for achieving a low Fr{\'e}chet Inception Distance (FID) compared to a baseline Flow Matching model, while introducing only 4\% more training time. This improvement in sample efficiency comes at no degradation in performance, \eg log-likelihood and sample quality. 

Within the deep generative modeling paradigm, this allows us to regularize towards the optimal vector field in a \textit{completely simulation-free manner} (unlike \eg \citet{finlay2020train,liu2022flow}), and avoids adversarial formulations (unlike \eg \citet{makkuva2020optimal,albergo2022building}). 
In particular, we are the first work to be able to make use of solutions from optimal solutions on minibatches while preserving the correct marginal distributions, whereas prior works would only fit to the barycentric average (see detailed discussion in \cref{sec:related_batchot}). Beyond generative modeling, we also show how our method can be seen as a new way to compute approximately optimal transport maps between arbitrary distributions in settings where the cost function is completely unknown and only minibatch optimal transport solutions are provided.

\section{Preliminaries}

\subsection{Continuous Normalizing Flow}\label{sec: cnf}
Let $\Real^d$ denote the data space with data points $x=(x^1,\ldots,x^d) \in \Real^d$. Two important objects we use in this paper are: the \emph{probability path} $p_t: \Real^d \too \Real_{>0}$, which is a time dependent (for $t \in [0,1]$) probability density function, \ie, $\int p_t(x)dx = 1$, and a \emph{time-dependent vector field}, $u_t:[0,1]\times \Real^d \too \Real^d$. A vector field $u_t$ constructs a time-dependent diffeomorphic map, called a \emph{flow}, $\phi:[0,1]\times \Real^d \too \Real^d$, defined via the ordinary differential equation (ODE):\looseness=-1
\begin{align}\label{e:ode}
    \frac{d}{dt}\phi_t(x_0) &= u_t(\phi_t(x_0))\,, \quad \phi_0(x_0) = x_0\,.
\end{align}
To create a deep generative model, \citet{chen2018neural} suggested modeling the vector field $u_t$ with a neural network, 
leading to a deep parametric model of the flow $\phi_t$, referred to as a \emph{Continuous Normalizing Flow} (CNF). A CNF is often used to transform a density $p_0$ to a different one, $p_1$, via the push-forward equation
\begin{equation}\label{e:push_forward}
\!\!\! p_t(x) = [\phi_t]_\sharp p_0(x) = p_0(\phi_t^{-1}(x)) \left| \det \brac{ \frac{\partial \phi_t^{-1}}{\partial x}(x)} \right| ,
\end{equation}
where the second equality defines the push-forward (or change of variables) operator $\sharp$.
A vector field $u_t$ is said to \emph{generate} a probability path $p_t$ if its flow $\phi_t$ satisfies \eqref{e:push_forward}. 

\subsection{Flow Matching}\label{sec: flowmatching} A simple simulation-free method for training CNFs is the \emph{Flow Matching} algorithm~\citep{lipman2022flow}, which regresses onto an (implicitly-defined) target vector field that generates the desired probability density path $p_t$. Given two marginal distributions $q_0(x_0)$ and $q_1(x_1)$ for which we would like to learn a CNF to transport between, Flow Matching seeks to optimize the simple regression objective,
\begin{equation}\label{eq:fm}
    \E_{t, p_t(x)} \norm{ v_t(x;\theta) - u_t(x) }^2,
\end{equation}
where $v_t(x;\theta)$ is the parametric vector field for the CNF, and $u_t(x)$ is a vector field that generates a probability path $p_t$ under the two marginal constraints that $p_{t=0} = q_0$ and $p_{t=1} = q_1$. While \cref{eq:fm} is the ideal objective function to optimize, not knowing $(p_t,u_t)$ makes this computationally intractable.

\citet{lipman2022flow} proposed a tractable method of optimizing \eqref{eq:fm}, which first defines \emph{conditional} probability paths and vector fields, such that when marginalized over $q_0(x_0)$ and $q_1(x_1)$, provide both $p_t(x)$ and $u_t(x)$. 
When targeted towards generative modeling, $q_0(x_0)$ is a simple noise distribution and easy to directly enforce, leading to a one-sided construction:
\begin{align}
    \label{eq:marg_prob}
    p_t(x) &= \int p_t(x | x_1) q_1(x_1) \;d x_1 \\
    \label{eq:marg_vf}
    u_t(x) &= \int u_t(x | x_1) \frac{p_t(x | x_1) q_1(x_1)}{p_t(x)} \;d x_1,
\end{align}
where the conditional probability path is chosen such that 
\begin{equation}\label{eq:cond_prob_conditions}
    p_{t=0}(x | x_1) = q_0(x) \text{\;\; and \;\;} p_{t=1}(x | x_1) = \delta(x - x_1),
\end{equation}
where $\delta(x-a)$ is a Dirac mass centered at $a \in \Rd$. By construction, $p_t(x|x_1)$ now satisfies both marginal constraints.

\citet{lipman2022flow} shows that if $u_t(x | x_1)$ generates $p_t(x | x_1)$, then the marginalized $u_t(x)$ generates $p_t(x)$, and furthermore, one can train using the much simpler objective of \emph{Conditional Flow Matching} (CFM):
\begin{equation}\label{eq:cfm}
\E_{t, q_1(x_1), p_t(x | x_1)} \norm{ v_t(x;\theta) - u_t(x_t | x_1) }^2,
\end{equation}
with $x_t = \psi_t(x_0|x_1)$; see \ref{sec: condot} for more details.
Note that this objective has the same gradient with respect to the model parameters $\theta$ as Eq. \eqref{eq:fm} ~\citep[Theorem 2]{lipman2022flow}.

\subsubsection{Conditional OT (CondOT) path}\label{sec: condot} One particular choice of conditional path $p_t(x | x_1)$ is to use the flow that corresponds to the optimal transport displacement interpolant \citep{mccann1997convexity} when $q_0(x_0)$ is the standard Gaussian, a common convention in generative modeling.
The vector field that corresponds to this is
\begin{equation}\label{eq:cond_vf}
    u_t(x_t | x_1) =  \frac{x_1 - x}{1 - t}.
\end{equation}
Using this conditional vector field in \eqref{e:ode}, this gives the conditional flow 
\begin{equation}\label{eq:cond_flow}
x_t = \phi_t(x_0 | x_1) =  (1 - t)x_0 + t x_1\,.
\end{equation}
Substituting \eqref{eq:cond_flow} into \eqref{eq:cond_vf}, one can also express the value of this vector field using a simpler expression,
\begin{equation}
\label{eq:cond_vf_simplified}
    u_t( x_t | x_1) = x_1 - x_0\,.
\end{equation} 
It is evident that this results in conditional flows that \textit{(i)} tranports all points $x_0$ from $t=0$ to $x_1$ at exactly $t=1$ and \textit{(ii)} are straight paths between the samples $x_0$ and $x_1$. This particular case of straight paths was also studied by \citet{liu2022flow} and \citet{albergo2022building}, where the conditional flow \eqref{eq:cond_flow} is referred to as a stochastic interpolant.
\citet{lipman2022flow} additionally showed that the conditional construction can be applied to a large class of Gaussian conditional probability paths, 
namely when $p_t(x | x_1) = \mathcal{N}(x | \mu_t(x_1), \sigma_t(x_1)^2I )$. This family of probability paths encompasses most prior diffusion models where probability paths are induced by simple diffusion processes with linear drift and constant diffusion~(\eg \citet{ho2020denoising,song2020score}). However, existing works mostly consider settings where $q_0(x_0)$ and $q_1(x_1)$ are sampled independently when computing training objectives such as \eqref{eq:cfm}.

\subsection{Optimal Transport: Static \& Dynamic}\label{sec: ot} 
Optimal transport generally considers methodologies that define some notion of distance on the space of probability measures \cite{villani2008optimal,villani2003topics,San15}. 
Letting $\cP(\Rd)$ be the space of probability measures over $\Rd$, we define the Wasserstein distance with respect to a cost function $c : \Rd \times \Rd \to \R_+$  between two measures $q_0,q_1 \in \cP(\Rd)$ as \citep{Kan42}
\begin{align}\label{eq: ot_linear}
    W_c(q_0,q_1) \defeq \min_{q \in \Gamma(q_0,q_1)}\E_{q(x_0,x_1)}[c(x_0, x_1)]\,,
\end{align}
where $\Gamma(q_0,q_1)$ is the set of joint measures with left marginal equal to $q_0$ and right marginal equal to $q_1$, called the set of \emph{couplings}. The minimizer to \cref{eq: ot_linear} is called the optimal coupling, which we denote by $q^*_c$. In the case where $c(x_0,x_1) \defeq \|x_0 - x_1\|^2$, the squared-Euclidean distance, \cref{eq: ot_linear} amounts to the (squared) $2$-Wasserstein distance $W_2^2(q_0,q_1)$, and we simply write the optimal transport plan as $q^*$.

Considering again the squared-Euclidean cost, in the case where $q_0$ exhibits a density over $\Rd$ (e.g. if $q_0$ is the standard normal distribution), \citet{benamou2000computational} states that $W_2^2(q_0,q_1)$ can be equivalently expressed as a \emph{dynamic} formulation,\looseness=-1
\begin{equation}\label{eq:dyn_ot}
    W_2^2(q_0,q_1) = \min_{p_t, u_t} \int_{0}^1 \int_{\Rd} \norm{u_t(x)}^2 p_t(x) \dd x_0 \dd t.
\end{equation}
where $u_t$ generates $p_t$, and $p_t$ satisfies boundary conditions $p_{t=0} = q_0$ and $p_{t=1} = q_1$. The optimality condition ensures that sample paths $x_t$ are straight lines, i.e. minimize the length of the path, and leads to paths that are much easier to simulate. Some prior approaches have sought to regularize the model using this optimality objective (\eg \citet{tong2020trajectorynet,finlay2020train}).
In contrast, instead of directly minimizing \eqref{eq:dyn_ot}, we will discuss an approach based on using solutions of the optimal coupling $q^*$ on minibatch problems, while leaving the marginal constraints intact. 

\section{Flow Matching with Joint Distributions} \label{sec:fm_joint}

While Conditional Flow Matching in \eqref{eq:cfm} leads to an unbiased gradient estimator for the Flow Matching objective, it was designed with independently sampled $x_0$ and $x_1$ in mind. 
We generalize the framework from Subsection~\ref{sec: flowmatching} to a construction that uses arbitrary joint distributions of $q(x_0, x_1)$ which satisfy the correct marginal constraints, \ie 
\begin{align} \label{eq:q_marginals}
   \!\!\!\! \int \!\! q(x_0, x_1) \dd x_1 \!=\!q_0(x_0)\,, \ \! \int \!\! q(x_0, x_1) \dd x_0 \!=\!q_1(x_1). 
\end{align}
We will show in Subsection~\ref{sec:multisample_fm} that this can potentially lead to lower gradient variance during training and allow us to design more optimal marginal vector fields $u_t(x)$ with desirable properties such as improved sample efficiency.

Building on top of Flow Matching, we propose modifying the conditional probability path construction \eqref{eq:cond_prob_conditions} so that at $t=0$, we define
\begin{equation}
    p_{t=0}(x_0 | x_1) = q(x_0 | x_1).
\end{equation}
where $q(x_0 | x_1)$ is the conditional distribution $\tfrac{q(x_0, x_1)}{q_1(x_1)}$.
Using this construction, we still satisfy the marginal constraint, 
$$p_{0}(x) = \int p_{0}(x|x_1)q_1(x_1)dx_1 =\int q(x, x_1) dx_1 = q_0(x)$$
\ie $p_{t=0}(x) = \int q(x, x_1) dx_1 = q_0(x)$ by the assumption made in \eqref{eq:q_marginals}. Then similar to \citet{chen2023riemannian}, we note that the conditional probability path $p_t(x | x_1)$ \emph{need not be explicitly formulated} for training, and that only an appropriate conditional vector field $u_t(x | x_1)$ needs to be chosen such that all points arrive at $x_1$ at $t=1$, which ensures $p_{t=1}(x|x_1) = \delta(x - x_1)$. As such, we can make use of the same conditional vector field as prior works, \eg the choice in  \cref{eq:cond_vf,eq:cond_flow,eq:cond_vf_simplified}.

We then propose the \textbf{Joint CFM} objective as
\begin{center}\vspace{-1.5em}			
    \colorbox{mygray} {		
      \begin{minipage}{0.977\linewidth} 	
       \centering
       \vspace{-1em}
    \begin{equation}\label{eq:cfm_joint}
    \mathcal{L}_\text{JCFM} = \E_{t, q(x_0, x_1)} \norm{ v_t(x_t;\theta) - u_t( x_t | x_1) }^2,
    \end{equation}   
      \end{minipage}}			
      \vspace{-1em}
\end{center}
where $x_t = \phi_t(x_0 | x_1)$ is the conditional flow. Training only involves sampling from $q(x_0, x_1)$ and does not require explicitly knowing the densities of $q(x_0, x_1)$ or $p_t(x | x_1)$.
Note that Equation \eqref{eq:cfm_joint} reduces to the original CFM objective \eqref{eq:cfm} when $q(x_0,x_1) = q_0(x_0) q_1(x_1)$.

A quick sanity check shows that this objective can be used with any choice of joint distribution $q(x_0, x_1)$.
\begin{lemma} \label{lem:marginals}
    The optimal vector field $v_t(\cdot;\theta)$ in \eqref{eq:cfm_joint}, which is the marginal vector field $u_t$, maps between the marginal distributions $q_0(x_0)$ and $q_1(x_1)$.
\end{lemma}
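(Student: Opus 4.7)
The plan is to break the statement into two assertions: (a) the pointwise minimizer of the JCFM loss in the function space of vector fields is precisely the marginal field $u_t(x) = \int u_t(x|x_1) \frac{p_t(x|x_1) q_1(x_1)}{p_t(x)} dx_1$, where $p_t(x)$ is defined by \cref{eq:marg_prob}, and (b) this $u_t$ generates a probability path $p_t$ satisfying $p_{t=0} = q_0$ and $p_{t=1} = q_1$.

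For (a), I would argue exactly as in the classical $L^2$-projection / conditional expectation argument behind CFM. The JCFM loss at a fixed time $t$ can be rewritten by first sampling $x_1 \sim q_1$, then $x_0 \sim q(x_0 \mid x_1)$, and setting $x_t = \phi_t(x_0 \mid x_1)$; by construction the joint density of $(x_t, x_1)$ is $p_t(x \mid x_1) q_1(x_1)$, so the marginal of $x_t$ is $p_t(x)$. Minimizing $\E\lVert v_t(x_t) - u_t(x_t \mid x_1)\rVert^2$ over measurable $v_t$ therefore yields $v_t^\star(x) = \E[u_t(x \mid x_1) \mid x_t = x]$, and an application of Bayes' rule turns this conditional expectation into the integral defining $u_t(x)$.

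For (b), I first check the boundary conditions. At $t = 0$, the construction $p_{t=0}(x_0\mid x_1) = q(x_0\mid x_1)$ combined with the marginal property \eqref{eq:q_marginals} gives $p_0(x) = \int q(x\mid x_1)q_1(x_1)\,dx_1 = \int q(x,x_1)\,dx_1 = q_0(x)$, which is exactly the computation already sketched in the text. At $t=1$, since the conditional flow places all mass at $x_1$, we have $p_{t=1}(x\mid x_1) = \delta(x - x_1)$, so $p_1(x) = \int \delta(x-x_1) q_1(x_1) dx_1 = q_1(x)$. It then remains to verify that $u_t$ generates $p_t$, i.e.\ they satisfy the continuity equation $\partial_t p_t + \nabla \cdot (p_t u_t) = 0$. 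I would do this by differentiating \cref{eq:marg_prob} under the integral sign, using that by construction $u_t(x\mid x_1)$ generates $p_t(x\mid x_1)$ (so $\partial_t p_t(x\mid x_1) = -\nabla\cdot(p_t(x\mid x_1) u_t(x\mid x_1))$), and then recognizing the integrand as $\nabla\cdot(p_t(x) u_t(x))$ via the definition of $u_t(x)$.

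The only subtle step is the last one: justifying the interchange of $\partial_t$ and $\nabla\cdot$ with the integral over $x_1$, which requires mild regularity of $p_t(x\mid x_1)$ and $u_t(x\mid x_1)$ and sufficient decay so that boundary terms vanish. This is essentially the same technical hypothesis that was used in the original Flow Matching paper, and under the standard assumptions (smooth conditional Gaussian paths with $u_t(x\mid x_1)$ as in \eqref{eq:cond_vf}) it holds automatically; the only new ingredient here is the initial condition being $q(x_0\mid x_1)$ rather than $q_0(x_0)$, which does not affect the differentiation argument since the proof that $u_t(x\mid x_1)$ generates $p_t(x\mid x_1)$ is intrinsic to the flow and independent of the initial distribution.
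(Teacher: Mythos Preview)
Your proposal is correct and follows essentially the same approach as the paper: verify the boundary conditions $p_0=q_0$ and $p_1=q_1$, then argue that $u_t$ generates $p_t$ (continuity equation) and that $u_t$ is the minimizer of the JCFM loss. The only difference is cosmetic: the paper dispatches both of these last points by citing Theorems~1 and~2 of \citet{lipman2022flow}, whereas you spell out the conditional-expectation and continuity-equation arguments explicitly.
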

In the remainder of the section, we highlight some motivations for using joint distributions $q(x_0, x_1)$ that are different from the independent distribution $q_0(x_0) q_1(x_1)$.

\paragraph{Variance reduction} Choosing a good joint distribution can be seen as a way to reduce the variance of the gradient estimate, which improves and speeds up training.
We develop the gradient covariance at a fixed $x$ and $t$, and bound its total variance:
\begin{lemma} \label{lem:var_bound}
The total variance (i.e. the trace of the covariance) of the gradient at a fixed $x$ and $t$ is bounded as:
\begin{align} \label{eq:bound_tr_cov}
&\sigma^2_{t,x}=\mathrm{Tr}\big[\mathrm{Cov}_{p_t(x_1|x)} \left( \nabla_\theta \norm{v_t(x;\theta) - u_t(x | x_1)}^2 \right)\big]\\ 
&\leq \|\nabla_\theta v_t(x;\theta)\|^2 \, \mathbb{E}_{p_t(x_1|x)} \|u_t(x) - u_t(x | x_1) \|^2 \nonumber
\end{align}
Then $\E_{t, p_t(x)}[\sigma^2_{t,x}]$ is bounded above by:
\begin{align} 
\label{eq:avg_total_variance}
    \max_{t,x} \norm{\nabla_\theta v_t(x;\theta)}^2 \times \mathcal{L}_\mathrm{JCFM}
\end{align}
\end{lemma}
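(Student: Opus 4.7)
The plan is two-step: first establish the pointwise covariance bound \eqref{eq:bound_tr_cov} by a direct Jacobian computation, then derive the averaged bound \eqref{eq:avg_total_variance} by combining the pointwise inequality with the Pythagorean decomposition of $\mathcal{L}_\mathrm{JCFM}$ around the marginal vector field $u_t$.

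For the pointwise bound, I would first differentiate the conditional squared-norm loss,
\[
\nabla_\theta \norm{v_t(x;\theta) - u_t(x | x_1)}^2 = 2\, J(x,\theta)^{\!\tran}\bigl(v_t(x;\theta) - u_t(x | x_1)\bigr),
\]
with $J(x,\theta) \defeq \nabla_\theta v_t(x;\theta)$, and note that at fixed $x$ and $t$ both $v_t(x;\theta)$ and $J(x,\theta)$ are deterministic with respect to $x_1 \sim p_t(x_1|x)$. Hence the covariance factors as $\mathrm{Cov}_{p_t(x_1|x)}[\nabla_\theta \norm{v_t - u_t(x|x_1)}^2] = 4\, J^{\!\tran}\, \Sigma\, J$, where $\Sigma \defeq \mathrm{Cov}_{p_t(x_1|x)}[u_t(x|x_1)]$. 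The key identification is that by the marginalization formula \eqref{eq:marg_vf}, $u_t(x) = \mathbb{E}_{p_t(x_1|x)}[u_t(x|x_1)]$, so $\mathrm{Tr}(\Sigma) = \mathbb{E}_{p_t(x_1|x)}\norm{u_t(x|x_1) - u_t(x)}^2$. Combining with the standard inequality $\mathrm{Tr}(J^{\!\tran} \Sigma J) = \mathrm{Tr}(JJ^{\!\tran} \Sigma) \leq \norm{J}^2 \, \mathrm{Tr}(\Sigma)$, valid since $\Sigma$ is positive semidefinite, yields \eqref{eq:bound_tr_cov}.

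For the averaged bound, I would take $\mathbb{E}_{t, p_t(x)}$ of the pointwise inequality, factor out the supremum $\max_{t,x}\norm{\nabla_\theta v_t(x;\theta)}^2$, and invoke Tonelli (nonnegative integrand) to rewrite the residual expectation as $\mathbb{E}_{t, p_t(x, x_1)} \norm{u_t(x|x_1) - u_t(x)}^2$. Since $u_t(x) = \mathbb{E}[u_t(x|x_1) \mid x]$ is the $L^2$-projection of $u_t(x|x_1)$ onto functions of $x$, and $v_t(x;\theta)$ is itself a function of $x$, the Pythagorean identity gives
\[
\mathcal{L}_\mathrm{JCFM} = \mathbb{E}_{t, p_t(x)}\norm{v_t(x;\theta) - u_t(x)}^2 + \mathbb{E}_{t, p_t(x, x_1)}\norm{u_t(x|x_1) - u_t(x)}^2,
\]
so the residual is dominated by $\mathcal{L}_\mathrm{JCFM}$, which yields \eqref{eq:avg_total_variance}.

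The main obstacle I anticipate is purely notational bookkeeping: reconciling the chain-rule constant $4$ with the stated lemma, and pinning down whether $\norm{\nabla_\theta v_t}^2$ refers to operator or Frobenius norm (the trace inequality goes through under either reading, with the Frobenius interpretation giving a strictly valid bound and absorbing constants). The mathematical core is a one-line covariance-of-a-linear-function calculation plus the Pythagorean (tower) identity for the regression loss around $u_t$; everything else is standard.
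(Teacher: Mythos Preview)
Your proposal is correct and follows essentially the same two-step structure as the paper: factor the gradient covariance as $J^{\!\tran}\Sigma J$ (the paper silently drops the chain-rule constant, as you anticipated), bound the trace, then invoke that $u_t$ minimizes $\mathcal{L}_\mathrm{JCFM}$ --- which is exactly your Pythagorean decomposition restated.

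The only technical divergence is in the trace-bounding step, and it resolves your norm question. The paper writes $\mathrm{Tr}(J^{\!\tran}\Sigma J)=\langle \Sigma, JJ^{\!\tran}\rangle_F$, applies Cauchy--Schwarz to get $\|\Sigma\|_F\|JJ^{\!\tran}\|_F$, then bounds $\|\Sigma\|_F$ via the triangle inequality on the rank-one summands and identifies $\|JJ^{\!\tran}\|_F$ with $\|\nabla_\theta v_t\|^2$. Your route --- the PSD inequality $\mathrm{Tr}(JJ^{\!\tran}\Sigma)\le \mathrm{Tr}(JJ^{\!\tran})\,\mathrm{Tr}(\Sigma)$ --- is more direct and lands cleanly on the Frobenius interpretation $\|J\|_F^2$; since $\|JJ^{\!\tran}\|_F\le \|J\|_F^2$, both routes are valid under that reading.
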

This proves that $\E_{t, p_t(x)}[\sigma^2_{t,x}]$, which is the average gradient variance at fixed $x$ and $t$, is upper bounded in terms of the Joint CFM objective. That means that minimizing the Joint CFM objective help in decreasing $\E_{t, p_t(x)}[\sigma^2_{t,x}]$. Note also that $\E_{t, p_t(x)}[\sigma^2_{t,x}]$ is not the gradient variance and is always smaller, as it does not account for variability over $x$ and $t$, but it is a good proxy for it. The proof is in App.~\ref{subsec:grad_variance}. 

Sampling $x_0$ and $x_1$ independently generally cannot achieve value zero for $\E_{t, p_t(x)}[\sigma^2_{t,x}]$ 
\emph{even at the optimum}, since there are an infinite number of pairs $(x_0,x_1)$ whose conditional path crosses any particular $x$ at a time $t$.
As shown in \eqref{eq:avg_total_variance}, having a low optimal value for the Joint CFM objective is a good proxy for low gradient variance and hence a desirable property for choosing a joint distribution $q(x_0,x_1)$. In \cref{sec:multisample_fm}, we show that certain joint distributions have optimal Joint CFM values close to zero.

\paragraph{Straight flows} Ideally, the flow $\phi_t$ of the marginal vector field $u_t$ (and of the learned $v_{\theta}$ by extension) should be close to a straight line. The reason is that ODEs with straight trajectories can be solved with high accuracy using fewer steps (i.e. function evaluations), which speeds up sample generation. The quantity  
\begin{align}\label{eq: straightness}
    S = \mathbb{E}_{t, q_0(x_0)} \big[ \|u_t( \phi_t(x_0))\|^2 - \| \phi_1(x_0) - x_0\|^2 \big],
\end{align}
which we call the \emph{straightness} of the flow and was also studied by \citet{liu2022rectified}, measures how straight the trajectories are. Namely, we can rewrite it as 
\begin{align} \label{eq:straightness_2}
    \!\!\! S = \mathbb{E}_{t, q_0(x_0)} \left[ \|u_t( \phi_t(x_0)) - \mathbb{E}_{t'} \left[ u_{t'}( \phi_{t'}(x_0)) \right] \|^2 \right],
\end{align}
which shows that $S \geq 0$ and only zero if $u_t( \phi_t(x_0))$ is constant along $t$, which is equivalent to $\phi_t(x_0)$ being a straight line.\looseness=-1

When $x_0$ and $x_1$ are sampled independently, the straightness is in general far from zero. This can be seen in the CondOT plots in Figure~\ref{fig:2D_1} (right); if flows were close to straight lines, samples generated with one function evaluation (NFE=1) would be of high quality. In \cref{sec:multisample_fm}, we show that for certain joint distributions, the straightness of the flow is close to zero. 

\paragraph{Near-optimal transport cost} By Lemma~\ref{lem:marginals}, the flow $\phi_t$ corresponding to the optimal $u_t$ satisfies that $\phi_0(x_0) = x_0 \sim q_0$ and $\phi_1(x_0) \sim q_1$. Hence, $x_0 \mapsto \phi_1(x_0)$ is a transport map between $q_0$ and $q_1$ with an associated transport cost \looseness=-1
\begin{align} \label{eq:transport_cost_phi_1}
\E_{q_0(x_0)} \|\phi_1(x_0) - x_0\|^2.
\end{align}
There is no reason to believe that when $x_0$ and $x_1$ are sampled independently, the transport cost $\E_{q_0(x_0)} \|\phi_1(x_0) - x_0\|^2$ will be anywhere near the optimal transport cost $W_2^2(p_0,p_1)$. Yet, in Section~\ref{sec:multisample_fm} we show that for well chosen $q$, the transport cost for $\phi_1$ does approach its optimal value. Computing optimal (or near-optimal) transport maps in high dimensions is a challenging task \cite{makkuva2020optimal,amos2022amortizing} that extends beyond generative modeling and into the field of optimal transport, and it has applications in computer vision \cite{feydy2017optimal,SolGoePey15,SolPeyKim16,liu20232} and computational biology \cite{lubeck2022neural,bunne2021learning,bunne2022supervised,schiebinger2019optimal}, for instance. Hence, Joint CFM may also be viewed as a practical way to obtain approximately optimal transport maps in this context.\looseness=-1

\section{Multisample Flow Matching}\label{sec:multisample_fm}

Constructing a joint distribution satisfying the marginal constraints is difficult, especially since at least one of the marginal distributions is based on empirical data. 
We thus discuss a method to construct the joint distribution $q(x_0, x_1)$ implictly by designing a suitable sampling procedure that leaves the marginal distributions invariant. Note that training with \eqref{eq:cfm_joint} only requires sampling from $q(x_0, x_1)$.

We use a multisample construction for $q(x_0, x_1)$ in the following manner:\looseness=-1
\begin{center}\vspace{-1.5em}			
    \colorbox{mygray} {		
      \begin{minipage}{0.977\linewidth} 
       \centering
\begin{enumerate}[wide, labelwidth=!, labelindent=0pt]
    \item Sample $\smash{\{x_0^{(i)}\}_{i=1}^k \sim q_0(x_0)}$ and $\smash{\{x_1^{(i)}\}_{i=1}^k \sim q_1(x_1)}$.
    \label{item:1}
    \item Construct a doubly-stochastic matrix with probabilities $\pi(i, j)$ dependent on the samples $\smash{\{x_0^{(i)}\}_{i=1}^k}$ and $\smash{\{x_1^{(i)}\}_{i=1}^k}$.
    \item Sample from the discrete distribution,\\ $\smash{q^k(x_0, x_1) = \frac{1}{k}
    \sum_{i,j=1}^k \delta(x_0 - x_0^{i}) \delta(x_1 - x_1^{j}) \pi(i, j)}$.
    \label{item:3}
\end{enumerate} 
      \end{minipage}}			 
      \vspace{-0.75em}
\end{center}
Marginalizing $q^k(x_0, x_1)$ over samples from Step 1, we obtain the implicitly defined $q(x_0, x_1)$. By choosing different \emph{couplings} $\pi(i, j)$, we induce different joint distributions. In this work, we focus on couplings that induce joint distributions which approximates, or at least partially satisfies, the optimal transport joint distribution. The following result, proven in App.~\ref{subsec:q_marginals}, guarantees that $q$ has the right marginals.\looseness=-1
\begin{lemma} \label{lem:q_marginals}
    The joint distribution $q(x_0,x_1)$ constructed in Steps [\ref{item:1}-\ref{item:3}] has marginals $q_0(x_0)$ and $q_1(x_1)$.
\end{lemma}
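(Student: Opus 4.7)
The plan is to compute the marginal of the implicitly defined $q(x_0, x_1) = \mathbb{E}[q^k(x_0, x_1)]$, where the outer expectation is over the i.i.d.\ draws $\{x_0^{(i)}\}_{i=1}^k \sim q_0$ and $\{x_1^{(i)}\}_{i=1}^k \sim q_1$ in Step 1, together with any randomness used when constructing $\pi$ in Step 2. The key observation is that the doubly-stochastic property of $\pi$ lets us collapse one of the two summations \emph{pathwise} in the realization of the batches, regardless of how $\pi$ depends on the samples.

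First, I would fix the two batches and integrate $q^k$ against $x_1$:
\[
\int q^k(x_0, x_1)\, dx_1 \;=\; \frac{1}{k}\sum_{i,j=1}^k \delta(x_0 - x_0^{(i)})\, \pi(i,j) \;=\; \frac{1}{k}\sum_{i=1}^k \delta(x_0 - x_0^{(i)}),
\]
using the row-sum condition $\sum_j \pi(i,j) = 1$. Next, I would take the outer expectation over the random batches: since each $x_0^{(i)} \sim q_0$ i.i.d., we have $\mathbb{E}[\delta(x_0 - x_0^{(i)})] = q_0(x_0)$ for every $i$, and averaging over $i$ still yields $q_0(x_0)$. The symmetric argument, using the column-sum condition $\sum_i \pi(i,j) = 1$ together with $x_1^{(j)} \sim q_1$ i.i.d., gives $\int q(x_0, x_1)\, dx_0 = q_1(x_1)$.

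The lemma is essentially a sanity check that the design choice of Step 2 — namely, requiring $\pi$ to be doubly stochastic — is exactly what makes the procedure marginal-preserving, so there is no real obstacle. The only subtlety worth flagging is that $\pi(i,j)$ may be a complicated (and possibly stochastic) function of the drawn samples, for instance when it comes from a batch-level OT solver; but since doubly-stochasticity holds pathwise for every realization of the two batches, the collapse step above is unaffected by this dependence, and one does not need to know anything further about the joint law of $\pi$ and the samples.
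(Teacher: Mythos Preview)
Your proof is correct and follows essentially the same approach as the paper: both arguments use the doubly-stochastic property of $\pi$ to collapse one summation pathwise, yielding the empirical marginal $\tfrac{1}{k}\sum_i \delta(x_0 - x_0^{(i)})$, and then average over the i.i.d.\ batch draws to recover $q_0$ (and symmetrically $q_1$). The only cosmetic difference is that the paper phrases the computation via an arbitrary test function $f$ rather than manipulating the Dirac densities directly, but the logical content is identical.
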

That is, the marginal constraints \eqref{eq:q_marginals} are satisfied and consequently we are allowed to use the framework of Section~\ref{sec:fm_joint}.

\subsection{CondOT is Uniform Coupling}\label{sec: condot_unif}
The aforementioned multisample construction subsumes the independent joint distribution used by prior works, when the joint coupling is taken to be uniformly distributed, \ie $\pi(i, j) = \frac{1}{k}$. This is precisely the coupling used by \cite{lipman2022flow} under our introduced notion of Multisample Flow Matching, and acts as a natural reference point.

\begin{figure*}
    \centering
    \begin{minipage}[c]{0.730\textwidth}
    \centering
        \rotatebox{90}{\hspace{0.4em}\tiny Diffusion}
        \includegraphics[width=0.98\textwidth]{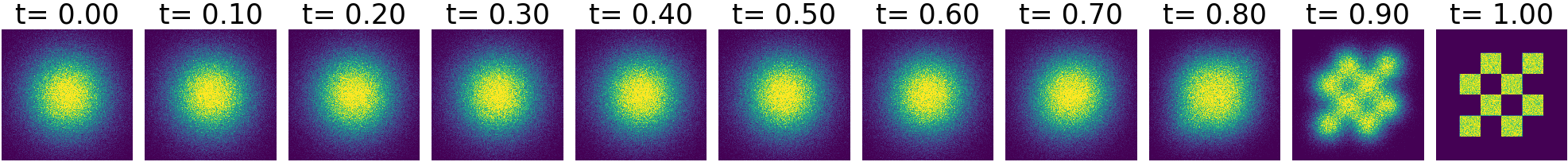} \\
        \rotatebox{90}{\hspace{0.5em}\tiny CondOT}
        \includegraphics[width=0.98\textwidth, trim=0 0 0 30px, clip]{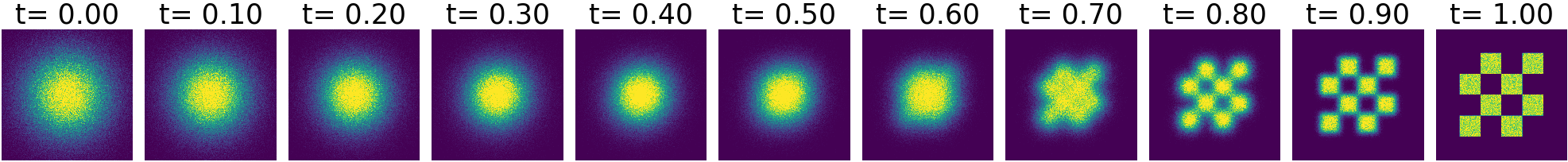} \\
        \rotatebox{90}{\hspace{0.7em}\tiny Stable}
        \includegraphics[width=0.98\textwidth]{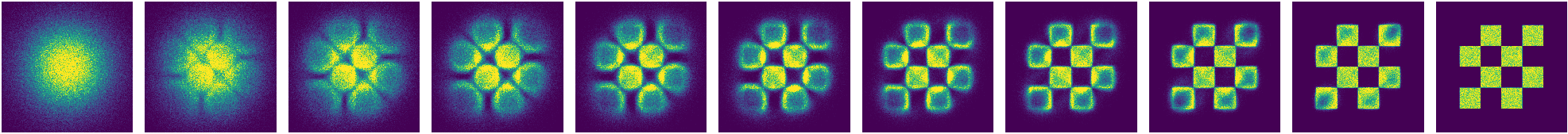} \\
        \rotatebox{90}{\hspace{0.5em}\tiny Heuristic}
        \includegraphics[width=0.98\textwidth]{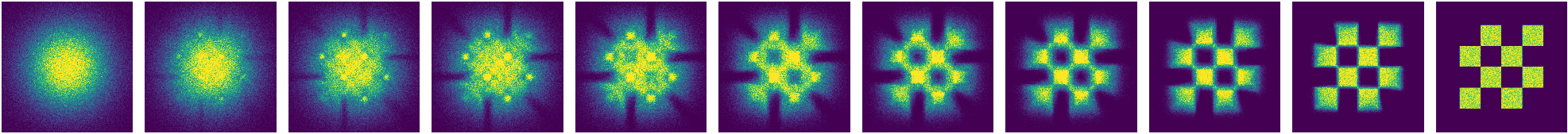} \\ 
        \rotatebox{90}{\hspace{0.3em}\tiny BatchEOT}
        \includegraphics[width=0.98\textwidth]{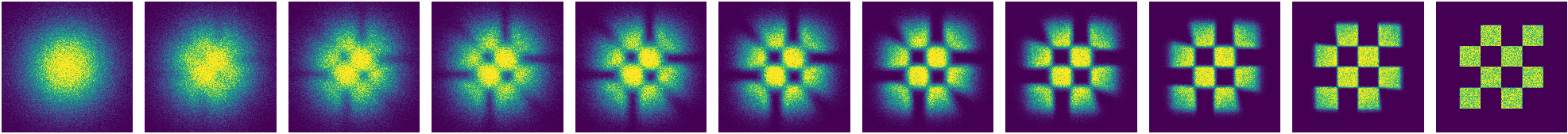} \\
        \rotatebox{90}{\hspace{0.5em}\tiny BatchOT}
        \includegraphics[width=0.98\textwidth]{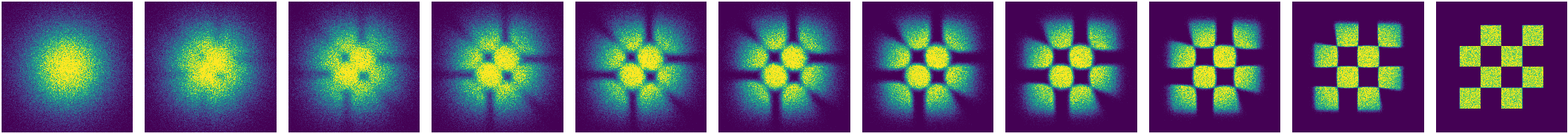}
    \end{minipage}
    \hfill\vline\hfill
    \begin{minipage}[c]{0.255\textwidth}
    \centering
        \includegraphics[width=\textwidth]{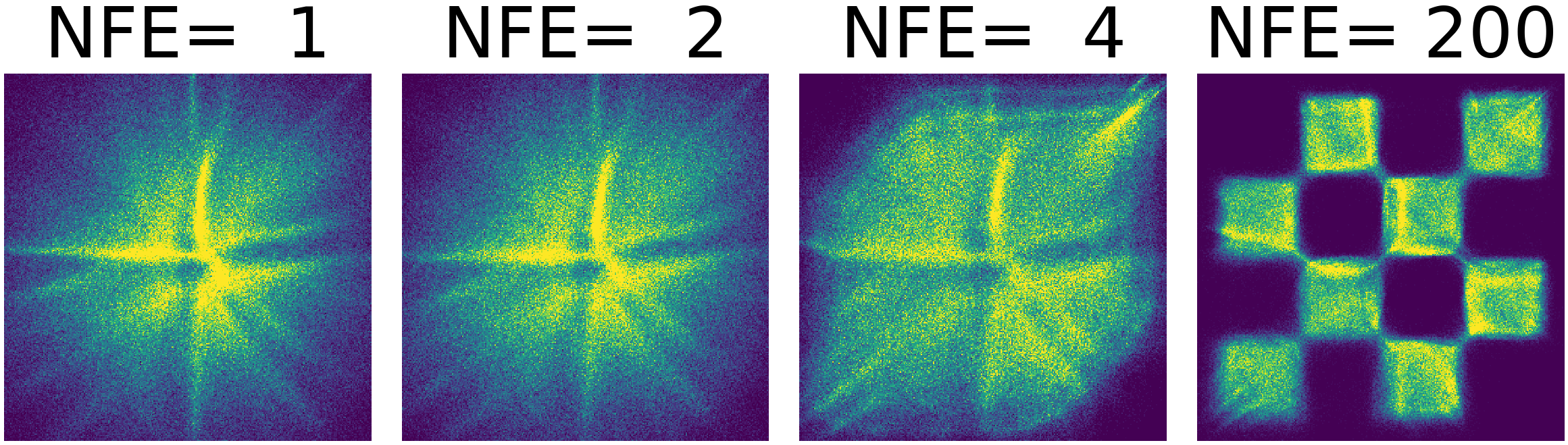} \\
        \includegraphics[width=\textwidth, trim=0 0 0 30px, clip]{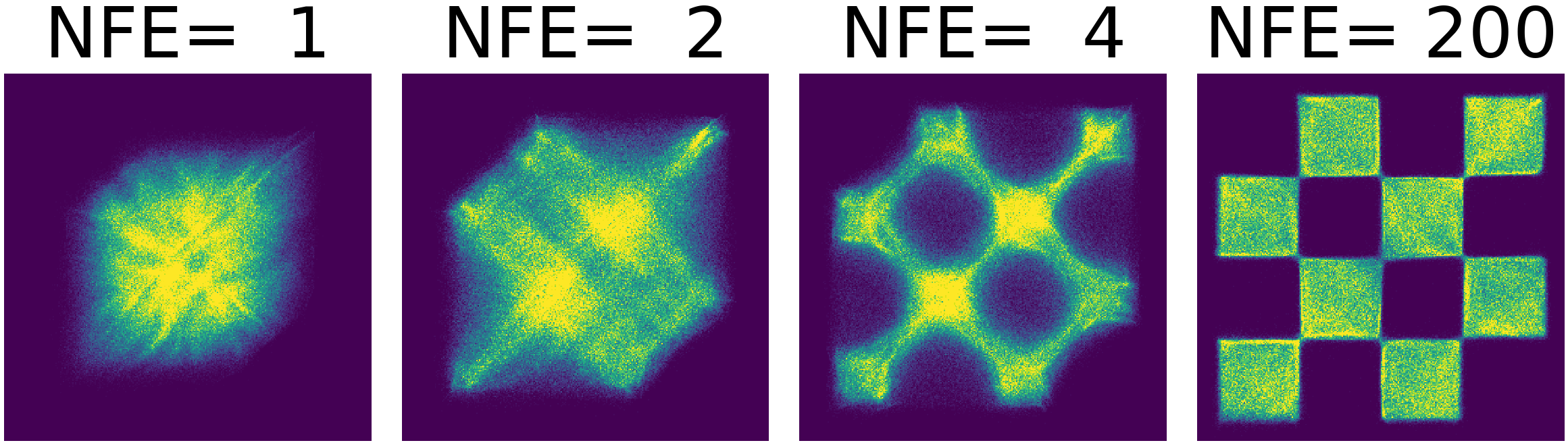} \\
        \includegraphics[width=\textwidth]{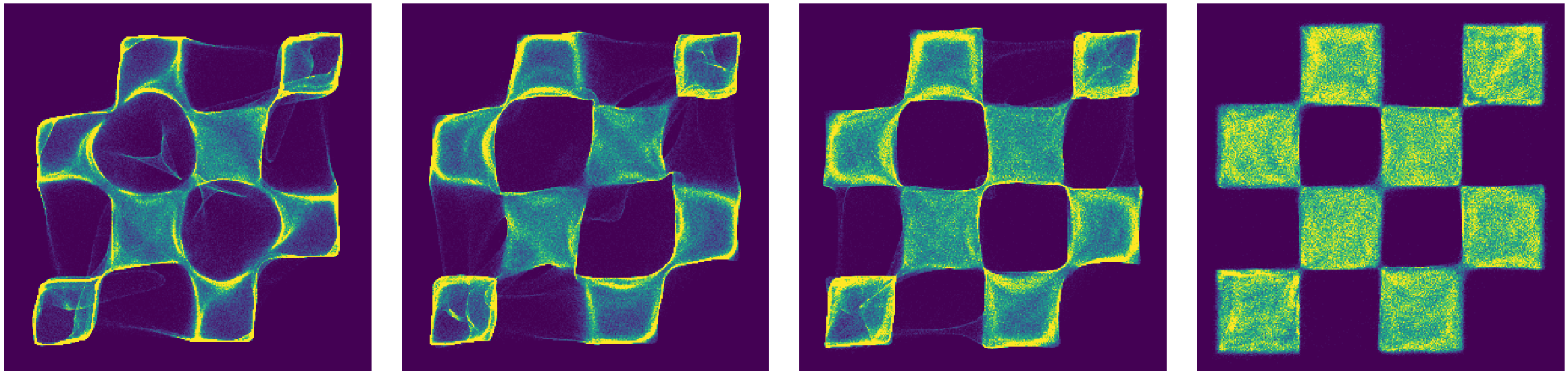} \\
        \includegraphics[width=\textwidth]{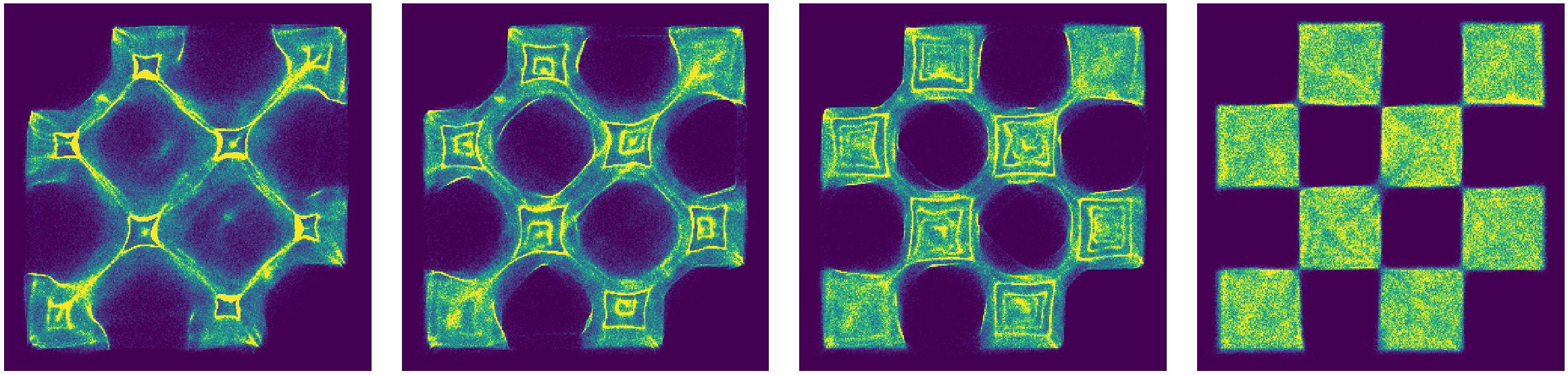} \\
        \includegraphics[width=\textwidth]{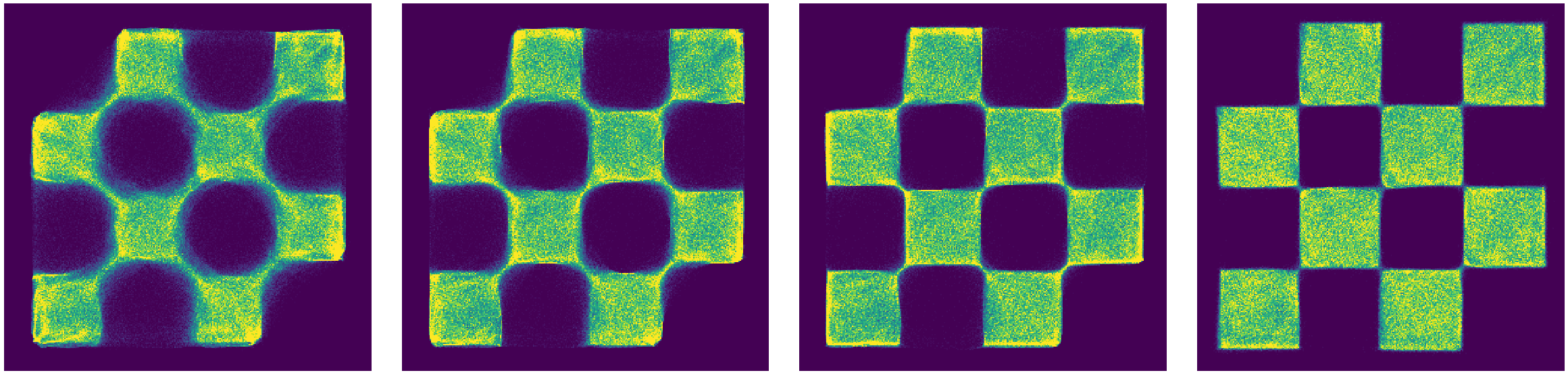} \\
        \includegraphics[width=\textwidth]{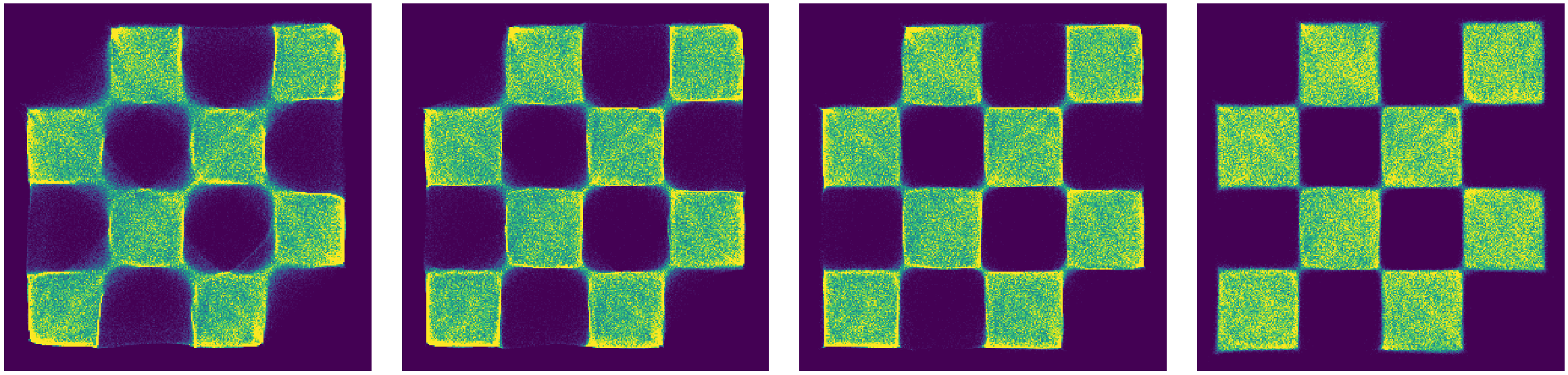}
    \end{minipage}
    \caption{Multisample Flow Matching learn probability paths that are much closer to an optimal transport path than baselines such as Diffusion and CondOT paths. (\emph{Left}) Exact marginal probability paths. (\emph{Right}) Samples from trained models at $t=1$ for different numbers of function evaluations (NFE), using Euler discretization.
    Furthermore, the final values of the Joint CFM objective \eqref{eq:cfm_joint}---upper bounds on the variance of $u_t$ at convergence---are: CondOT: 10.72; Stable: 1.60, Heuristic: 1.56; BatchEOT: 0.57, BatchOT: 0.24.}
    \label{fig:2D_1}
\end{figure*}

\subsection{Batch Optimal Transport (BatchOT) Couplings} \label{sec:batch_ot}
The natural connections between optimal transport theory and optimal sampling paths in terms of straight-line interpolations, lead us to the following pseudo-deterministic coupling, which we call Batch Optimal Transport (BatchOT).
While it is difficult to solve \eqref{eq: ot_linear} at the population level, it can efficiently solved on the level of samples. Let $\smash{\{x_0^{(i)}\}_{i=1}^k} \sim q_0(x_0)$ and $\smash{\{x_1^{(i)}\}_{i=1}^k} \sim q_1(x_1)$. When defined on batches of samples, the OT problem \eqref{eq: ot_linear} can be solved exactly and efficiently using standard solvers, as in \texttt{POT} \citep[Python Optimal Transport]{flamary2021pot}. On a batch of $k$ samples, the runtime complexity is well-understood via either the Hungarian algorithm or network simplex algorithm, with an overall complexity of $\mathcal{O}(k^3)$ \citep[Chapter 3]{PeyCut19}. The resulting coupling $\pi^{k,*}$ from the algorithm is a \textit{permutation matrix}, which is a type of doubly-stochastic matrix that we can incorporate into Step \ref{item:3} of our procedure. 

We consider the effect that the sample size $k$ has on the marginal vector field $u_t(x)$. The following theorem shows that in the limit of $k \to \infty$, BatchOT satisfies the three criteria that motivate Joint CFM: variance reduction, straight flows, and near-optimal transport cost.

\begin{theorem}[Informal]
\label{thm:limiting}
    Suppose that Multisample Flow Matching is run with BatchOT. Then, as $k \to \infty$,
    \begin{enumerate}[label=(\roman*),noitemsep,topsep=0pt]
        \item The value of the Joint CFM objective (Equation \eqref{eq:cfm_joint}) for the optimal $u_t$ converges to 0.
        \item The straightness $S$ for the optimal marginal vector field $u_t$ (Equation \eqref{eq: straightness}) converges to zero.
        \item The transport cost $\E_{q_0(x_0)} \|\phi_1(x_0) - x_0\|^2$ (Equation \eqref{eq:transport_cost_phi_1}) associated to $u_t$ converges to the optimal transport cost $W_2^2(p_0,p_1)$.  
    \end{enumerate}
\end{theorem}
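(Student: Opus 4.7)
I would reduce all three claims to a single limiting statement together with a chain of inequalities that holds for \emph{any} coupling $q\in\Gamma(q_0,q_1)$ inserted into the Joint CFM construction. For such a $q$, let $u_t(x)=\E_q[x_1-x_0\mid x_t=x]$ be the (optimal) marginal vector field, $\phi_t$ its flow with $\phi_0=\mathrm{id}$, $p_t=[\phi_t]_\sharp q_0$, and abbreviate
\[K(q)\defeq\int_0^1\E_{p_t}\norm{u_t}^2\,\dd t,\qquad C(q)\defeq\E_{q_0}\norm{\phi_1(x_0)-x_0}^2.\]
I would first establish the chain
\[W_2^2(q_0,q_1)\;\le\;C(q)\;\le\;K(q)\;\le\;\E_q\norm{x_1-x_0}^2.\]
The first inequality is immediate from the definition of $W_2^2$ using the coupling $(x_0,\phi_1(x_0))\in\Gamma(q_0,q_1)$. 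The second follows from $\phi_1(x_0)-x_0=\int_0^1 u_t(\phi_t(x_0))\,\dd t$ and Cauchy--Schwarz inside the $q_0$-expectation. The third is Jensen's inequality applied to the defining conditional expectation of $u_t$, combined with the change of variables $p_t=[\phi_t]_\sharp q_0$.

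\textbf{Algebraic identification.} Next I would write $\mathcal{L}_{\mathrm{JCFM}}^\star$ and $S$ in closed form in terms of the same three quantities. The conditional variance identity $\E[\norm{Y-\E[Y\mid X]}^2]=\E\norm{Y}^2-\E\norm{\E[Y\mid X]}^2$ with $Y=x_1-x_0$, $X=x_t$, integrated in $t$, yields
\[\mathcal{L}_{\mathrm{JCFM}}^\star(q)=\E_q\norm{x_1-x_0}^2-K(q).\]
Along each flow line, $\int_0^1 u_{t'}(\phi_{t'}(x_0))\,\dd t'=\phi_1(x_0)-x_0$, so the expansion of the square in \eqref{eq:straightness_2} (equivalently the definition \eqref{eq: straightness}) gives
\[S(q)=K(q)-C(q).\]
Hence, if both ends of the chain are squeezed against $W_2^2(q_0,q_1)$, the middle $K(q)$ is squeezed as well, and all three conclusions of the theorem fall out: (iii) is $C\to W_2^2$, (ii) is $S=K-C\to 0$, and (i) is $\mathcal{L}_{\mathrm{JCFM}}^\star=\E_q\norm{x_1-x_0}^2-K\to 0$.

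\textbf{The BatchOT limit.} What remains is the limiting input
\[\E_{q^{(k)}}\norm{x_1-x_0}^2\;\longrightarrow\;W_2^2(q_0,q_1)\quad\text{as }k\to\infty.\]
By construction, a sample from $q^{(k)}$ is obtained by first drawing the empirical measures $\hat q_0^{(k)},\hat q_1^{(k)}$ and then drawing from their exact OT plan, whence $\E_{q^{(k)}}[\norm{x_1-x_0}^2\mid \hat q_0^{(k)},\hat q_1^{(k)}]=W_2^2(\hat q_0^{(k)},\hat q_1^{(k)})$. Under finite second moments one has $W_2(\hat q_i^{(k)},q_i)\to 0$ a.s., so by the triangle inequality and continuity of $W_2$, $W_2^2(\hat q_0^{(k)},\hat q_1^{(k)})\to W_2^2(q_0,q_1)$ a.s.

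\textbf{Main obstacle.} The principal technical step is upgrading this almost sure empirical convergence to convergence \emph{in expectation} of the squared costs; this is a uniform-integrability argument that is standard once second moments are assumed (e.g.\ via Fournier--Guillin-type rates). A minor bookkeeping point is that the ``optimal $u_t$'' in \eqref{eq:cfm_joint} should be interpreted as the ideal $L^2$ minimizer, which is exactly the marginal vector field furnished by Lemma~\ref{lem:marginals}. Notably, the squeeze argument requires \emph{no} Brenier-type regularity of the OT map between $q_0$ and $q_1$: all three convergences are read off from the two-sided bound on $K(q^{(k)})$ without ever opening the limiting OT plan.
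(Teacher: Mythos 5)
Your proof is correct, but it takes a genuinely different route from the paper's. The paper proves (i) by showing that the minibatch plans $q^k$ converge weakly to the population OT plan $q^*$ (assumption \textbf{(A3)}, established via Cuesta-Albertos et al.), that the limiting vector field $u_t^*$ is continuous --- which is what forces the extra regularity assumption \textbf{(A2)} that the Brenier map $T$ is continuous (Caffarelli) --- and then testing weak convergence against the bounded continuous function $f(x_0,x_1)=\E_t\|x_1-x_0-u_t^*(tx_1+(1-t)x_0)\|^2$; parts (ii)--(iii) are then handled by coupling $q^k$ to $q^*$ and comparing $u_t^k$ with $u_t^*$ in $L^2$ (Lemma~\ref{lem:tilde_q_k}). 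You instead sandwich $W_2^2(q_0,q_1)\le C(q)\le K(q)\le \E_q\|x_1-x_0\|^2$ (the upper bound is exactly the paper's Proposition~\ref{thm:transport_cost}, and the two lower bounds appear in \eqref{eq:ii_inequalities}), add the exact identities $\mathcal{L}_{\mathrm{JCFM}}^\star=\E_q\|x_1-x_0\|^2-K(q)$ (conditional-variance/Pythagoras in $L^2$) and $S=K-C$, and thereby reduce all three claims to the single scalar limit $\E_{q^{(k)}}\|x_1-x_0\|^2=\E\bigl[W_2^2(q_0^k,q_1^k)\bigr]\to W_2^2(q_0,q_1)$; under the paper's bounded-support assumption \textbf{(A1)} this follows from a.s.\ empirical Wasserstein convergence plus bounded convergence, so even the Fournier--Guillin machinery you flag as the ``main obstacle'' is unnecessary. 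Your route dispenses entirely with \textbf{(A2)}, with the non-crossing property, and with weak convergence of the plans, needing only convergence of the expected minibatch OT cost, and it makes transparent why the three conclusions are equivalent. What the paper's heavier argument buys is a stronger structural conclusion not required by the theorem statement: $L^2$-convergence of $u_t^k$ to the optimal vector field $u_t^*$ along the interpolation, rather than mere convergence of its kinetic energy. (As in the paper, you do implicitly rely on Lemma~\ref{lem:marginals} for the flow $\phi_t$ being well defined with $[\phi_t]_\sharp q_0$ equal to the law of $tx_1+(1-t)x_0$; that is shared ground, not a gap.)
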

As $k \rightarrow \infty$, result \textit{(i)} implies that the gradient variance both during training and at convergence is reduced due to \cref{eq:avg_total_variance}; result \textit{(ii)} implies the optimal model will be easier to simulate between $t$=0 and $t$=1; result \textit{(iii)} implies that Multisample Flow Matching can be used as a simulation-free algorithm for approximating optimal transport maps.

The full version of Thm.~\ref{thm:limiting} can be found in App.~\ref{sec:proofs}, and it makes use of standard, weak technical assumptions which are common in the optimal transport literature. While Thm.~\ref{thm:limiting} only analyzes asymptotic properties, we provide theoretical evidence that the transport cost decreases with $k$, as summarized by a monotonicity result in Thm.~\ref{thm:monotone}.

\subsection{Batch Entropic OT (BatchEOT) Couplings}
For $k$ sufficiently large, the cubic complexity of the BatchOT approach is not always desirable, and instead one may consider approximate methods that produce couplings sufficiently close to BatchOT at a lower computational cost. A popular surrogate, pioneered in \cite{cuturi2013sinkhorn}, is to incorporate an entropic penalty parameter on the doubly stochastic matrix, pulling it closer to the independent coupling:
$$ \min_{q \in \Gamma(q_0,q_1)} \E_{(x_0,x_1)\sim q} \|x_0-x_1\|^2 + \eps H(q)\,,$$
where $H(q) = -\sum_{i,j} q_{i,j}(\log(q_{i,j}) - 1)$ is the entropy of the doubly stochastic matrix $q$, and $\eps > 0$ is some finite regularization parameter. The optimality conditions of this strictly convex program leads to 
Sinkhorn's algorithm, which has a runtime of $\tilde{\mathcal{O}}(k^2/\eps)$ \cite{AltWeeRig17}. 

The output of performing Sinkhorn's algorithm is a doubly-stochastic matrix.
The two limiting regimes of the regularization parameter are well understood (c.f. ~\citet{PeyCut19}, Proposition 4.1, for instance): as $\eps \to 0$, BatchEOT recovers the BatchOT permutation matrix from ~\cref{sec:batch_ot}; as $\eps \to \infty$, BatchEOT recovers the independent coupling on the indices from ~\cref{sec: condot_unif}.

\subsection{Stable and Heuristic Couplings}

An alternative approach is to consider faster algorithms that satisfy at least some desirable properties of an optimal coupling. In particular, an optimal coupling is \emph{stable}. A permutation coupling is stable if \emph{no pair of $x_0^{(i)}$ and $x_1^{(j)}$ favor each other over their assigned pairs based on the coupling.} Such a problem can be solved using the Gale-Shapeley algorithm \citep{gale1962college} which has a compute cost of $\bigO(k^2)$ given the cross set ranking of all samples. Starting from a random assignment, it is an iterative algorithm that reassigns pairs if they violate the stability property and can terminate very early in practice. Note that in a cost-based ranking, one has to sort the coupling costs of each sample with all samples in the opposing set, resulting in an overall $\bigO(k^2\log(k))$ compute cost. 

The Gale-Shapeley algorithm is agnostic to any particular costs, however, as stability is only defined in terms of relative rankings of individual samples. We design a modified version of this algorithm based on a heuristic for satisfying the cyclical monotonicity property of optimal transport, namely that should pairs be reassigned, the reassignment should not increase the total cost of already matched pairs. We refer to the output of this modified algorithm as a \emph{heuristic coupling} and discuss the details in Appendix \ref{app:heuristic_couplings}.

\section{Related Work}
Generative modeling and optimal transport are inherently intertwined topics, both often aiming to learn a transport between two distributions but with very different goals. Optimal transport is widely recognized as a powerful tool for large-scale generative modeling as it can be used to stabilize training \citep{arjovsky2017wasserstein}.
In the context of continuous-time generative modeling, optimal transport has been used to regularize continuous normalizing flows for easier simulation \citep{finlay2020train,onken2021ot}, and increase interpretability \citep{tong2020trajectorynet}. However, the existing methods for encouraging optimality in a generative model generally require either solving a potentially unstable min-max optimization problem (\eg \citep{arjovsky2017wasserstein,makkuva2020optimal,albergo2022building}) or require simulation of the learned vector field as part of training (\eg \citet{finlay2020train,liu2022flow}). In contrast, the approach of using batch optimal couplings can  be used to avoid the min-max optimization problem, but has not been successfully applied to generative modeling as they do not satisfy marginal constraints---we discuss this further in the following \cref{sec:related_batchot}. On the other hand, neural optimal transport approaches are mainly centered around the quadratic cost \citep{makkuva2020optimal,amos2022amortizing,finlay2020learning} or rely heavily on knowing the exact cost function \citep{fan2021scalable,asadulaev2022neural}. Being capable of using batch optimal couplings allows us to build generative models to approximate optimal maps under any cost function, and even when the cost function is unknown.

\begin{figure}[t]
    \centering
    \begin{subfigure}[b]{0.49\linewidth}
        \includegraphics[width=\linewidth]{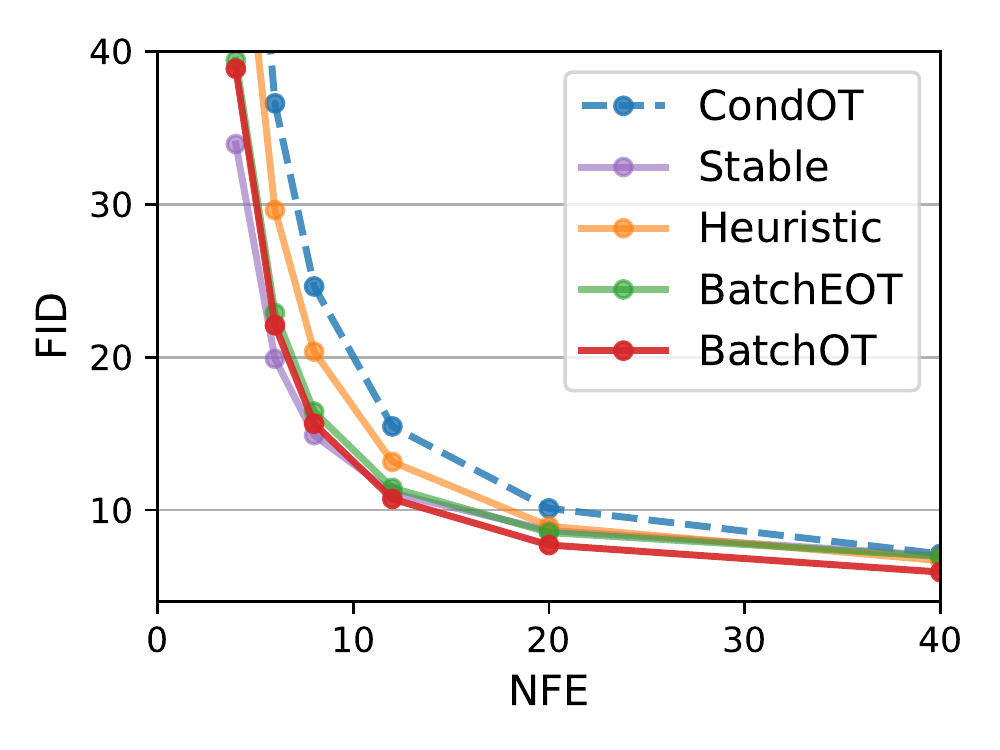}
        \caption*{ImageNet 32$\times$32}
    \end{subfigure}
    \begin{subfigure}[b]{0.49\linewidth}
        \includegraphics[width=\linewidth]{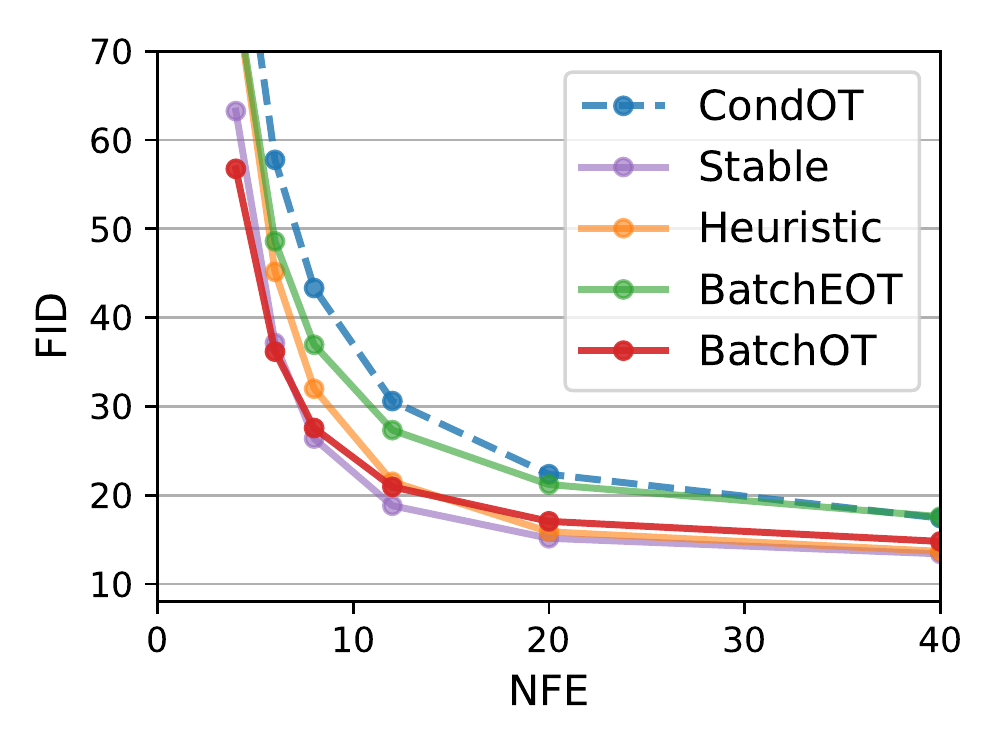}
        \caption*{ImageNet 64$\times$64}
    \end{subfigure}
    \caption{Sample quality (FID) vs compute cost (NFE) using Euler discretization. CondOT has significantly higher FID at lower NFE compared to proposed methods.}
    \label{fig:nfefid_euler}
\end{figure}

\subsection{Minibatch Couplings for Generative Modeling}\label{sec:related_batchot}
Among works that use optimal transport for training generative models are those that make use of batch optimal solutions and their gradients such as \citet{li2017mmd,genevay2018learning,fatras2019learning,liu2019wasserstein}. However, \textit{na\"ively using solutions to batches only produces, at best, the barycentric map}, \ie the map that fits to average of the batch couplings \citep{ferradans2014regularized,seguy2017large,pooladian2021entropic}, and does not correctly match the true marginal distribution.
This is a well-known problem and while multiple works (\eg \citet{fatras2021minibatch,nguyen2022improving}) have attempted to circumvent the issue through alternative formulations of optimality, the lack of marginal preservation has been a major downside of using batch couplings for generative modeling as they do not have the ability to match the target distribution for finite batch sizes. 
This is due to the use of building models within the \emph{static} setting, where the map is parameterized directly with a neural network. 
In contrast, we have shown in \cref{lem:q_marginals} that in our \emph{dynamic} setting, where we parameterize the map as the solution of a neural ODE, it is possible to preserve the marginal distribution exactly. Furthermore, we have shown in Proposition~\ref{thm:transport_cost} (App.~\ref{subsec:monotone}) that our method produces a map that is no higher cost than the joint distribution induced from BatchOT couplings.

Concurrently, \citet{tong2023conditional} motivates the use of BatchOT solutions within a similar framework as our Joint CFM, but from the perspective of obtaining accurate solutions to dynamic optimal transport problems. Similarly, \citet{lee2023minimizing} propose to explicitly learn a joint distribution, parameterized with a neural network, with the aim of minimizing trajectory curvature; this is done using through an auxiliary VAE-style objective function. 
In contrast, we propose a family of couplings that all satisfy the marginal constraints, all of which are easy to implement and have negligible cost during training.
Our construction allow us to focus on (i) fixing consistency issues within simulation-free generative models, and (ii) using Joint CFM to obtain more optimal solutions than the original BatchOT solutions.
\looseness=-1

\section{Experiments}

\begin{table}[t]
\centering
\ra{1.1}
\resizebox{0.95\columnwidth}{!}{%
\begin{tabular}{ l r r }\toprule
  & \multicolumn{1}{c}{\bf ImageNet 32$\times$32}
  & \multicolumn{1}{c}{\bf ImageNet 64$\times$64} \\
& {NFE @ FID $= 10$}  
& {NFE @ FID $= 20$} \\
\midrule
Diffusion & 
$\geq$40 &
$\geq$40 \\
FM \textsuperscript{w}/ CondOT & 
20 & 
29 \\
MultisampleFM \textsuperscript{w}/ Heuristic &
18 & 
12 \\
MultisampleFM \textsuperscript{w}/ Stable & 
\textbf{14} &  
\textbf{11}  \\
MultisampleFM \textsuperscript{w}/ BatchOT  & 
\textbf{14} &  
12 \\
\bottomrule
\end{tabular}
}
\caption{ Derived results shown in \cref{fig:nfefid_euler}, we can determine the approximate NFE required to achieve a certain FID across our proposed methods. The baseline diffusion-based methods (e.g. ScoreFlow and DDPM) require more than 40 NFE to achieve these FID values.}
\label{tab: NFE_at_FID_table}
\end{table}

\begin{table}[t]
\centering
\ra{1.0}
\resizebox{0.9\columnwidth}{!}{%
\begin{tabular}{ c r r r r } \toprule
\multicolumn{1}{l}{ NFE} & \multicolumn{1}{r}{ DDPM} & \multicolumn{1}{r}{ ScoreSDE} & \multicolumn{1}{r}{ BatchOT} & \multicolumn{1}{r}{ Stable} \\ \midrule
\multicolumn{1}{l}{ Adaptive \hspace{1em}}  & 5.72 & 6.84 & \textbf{4.68} & 5.79 \\
\multicolumn{1}{l}{ 40}  & 19.56  & 16.96  & \textbf{5.94}  & 7.02  \\
\multicolumn{1}{l}{ 20}  & 63.08  & 58.02  & \textbf{7.71}  & 8.66  \\
\multicolumn{1}{l}{ 8}   & 232.97 & 218.66 & 15.64 & \textbf{14.89} \\
\multicolumn{1}{l}{ 6}   & 275.28 & 266.76 & 22.08 & \textbf{19.88} \\
\multicolumn{1}{l}{ 4}   & 362.37 & 340.17 & 38.86 & \textbf{33.92} \\
\bottomrule
\end{tabular}
}
\caption{FID of model samples on ImageNet 32$\times$32 using varying number of function evaluations (NFE) using Euler discretization. }
\label{tab:NFE_at_FID_table_euler_32}
\end{table}

\begin{table}[t]
\centering
\ra{1.1}
\setlength{\tabcolsep}{2.0pt}
\resizebox{0.9\columnwidth}{!}{%
\begin{tabular}{ l R{4em} R{4em} R{4em} R{4em} }\toprule
  & \multicolumn{2}{c}{\bf ImageNet 32$\times$32}
  & \multicolumn{2}{c}{\bf ImageNet 64$\times$64} \\
& {CondOT} & {BatchOT}  
& {CondOT} & {BatchOT} \\
\midrule
Consistency($m$=4) & 
0.141 & \textbf{0.101}  & 
0.174 & \textbf{0.157}  \\
Consistency($m$=6) & 
0.105 & \textbf{0.071} & 
0.151 & \textbf{0.134} \\
Consistency($m$=8) & 
0.079 & \textbf{0.052} & 
0.132 & \textbf{0.115} \\
Consistency($m$=12) & 
0.046 & \textbf{0.030} & 
0.106 & \textbf{0.08}5 \\
\bottomrule
\end{tabular}
}
\caption{BatchOT produces samples with more similar content to its true samples at low NFEs (using midpoint discretization). Visual examples of this consistency are shown in \cref{fig:samples_vs_nfe_imagenet64}.}
\label{tab:Consistency_table}
\end{table}

We empirically investigate Multisample Flow Matching on a suite of experiments. First, we show how different couplings affect the model on a 2D distribution. We then turn to benchmark, high-dimensional datasets, namely ImageNet \citep{deng2009imagenet}. We use the official \emph{face-blurred} ImageNet data and then downsample to 32$\times$32 and 64$\times$64 using the open source preprocessing scripts from \citet{chrabaszcz2017downsampled}. Finally, we explore the setting of unknown cost functions while only batch couplings are provided. Full details on the experimental setting can be found in Appendix \ref{app:exp_stat_vs_dyn}.\looseness=-1

\begin{table*}
\centering
\ra{1.3}
\setlength{\tabcolsep}{3.0pt}
\resizebox{1.0\linewidth}{!}{%
\begin{tabular}{@{} l @{\hspace{5mm}} ccc l@{\hspace{3mm}} cc l@{\hspace{5mm}} ccc l@{\hspace{3mm}} cc l@{\hspace{5mm}} ccc l@{\hspace{3mm}} cc @{}} 
\toprule
 & \multicolumn{3}{c}{2-D Cost} 
 & & \multicolumn{2}{c}{2-D KL}
 & & \multicolumn{3}{c}{32-D Cost} 
 & & \multicolumn{2}{c}{32-D KL} 
 & & \multicolumn{3}{c}{64-D Cost}
 & & \multicolumn{2}{c}{64-D KL} \\ 
\cmidrule(r){2-4} 
\cmidrule{6-7}
\cmidrule(r){9-11}
\cmidrule{13-14}
\cmidrule(r){16-18}
\cmidrule{20-21}
Cost Fn. $c(x_0,x_1)$
 & B & B-\textsc{st} & B-\textsc{fm}
 & & B-\textsc{st} & B-\textsc{fm}
 & & B & B-\textsc{st} & B-\textsc{fm}
 & & B-\textsc{st} & B-\textsc{fm}
 & & B & B-\textsc{st} & B-\textsc{fm}
 & & B-\textsc{st} & B-\textsc{fm} \\
 \midrule
$\norm{x_1-x_0}_2^2$ 
& 0.90 & 0.60 & 0.72 & & 0.07 & 4E-3 &
& 41.08 & 31.58 & 38.73 & & 151.47 & 0.06 &
& 92.90 & 65.57 & 87.97 & & 335.38 & 0.14 \\
$\norm{x_1-x_0}_1$ 
& 1.09 & 0.86 & 0.98 & & 0.18 & 4E-3 &
& 27.92 & 24.51 & 27.26 & & 254.59 & 0.08 &
& 60.27 & 50.49 & 58.38 & & 361.16 & 0.16 \\
$1-\frac{\langle x_0,x_1\rangle}{\norm{x_0}\norm{x_1}}$ 
& 0.03 & 2E-4 & 3E-3 & & 5.91 & 4E-3 &
& 0.62 & 0.53 & 0.58 & & 179.48 & 0.06 &
& 0.71 & 0.60 & 0.68 & & 337.63 & 0.12 \\
$\norm{A(x_1-x_0)}_2^2$ 
& 0.91 & 0.54 & 0.65 & & 0.07 & 4E-3 &
& 32.66 & 24.61 & 30.13 & & 256.90 & 0.06 &
& 78.70 & 58.11 & 78.50 & & 529.09 & 0.19 \\
\bottomrule
\end{tabular}
}
\caption{Matching couplings from an oracle BatchOT solver with unknown costs. Multisample Flow Matching is able to match the marginal distribution correctly while being at least a optimal as the oracle, but static maps fail to preserve the marginal distribution.}
\label{tab:t}
\end{table*}

\begin{figure}[t]
    \centering
    \includegraphics[width=0.75\linewidth]{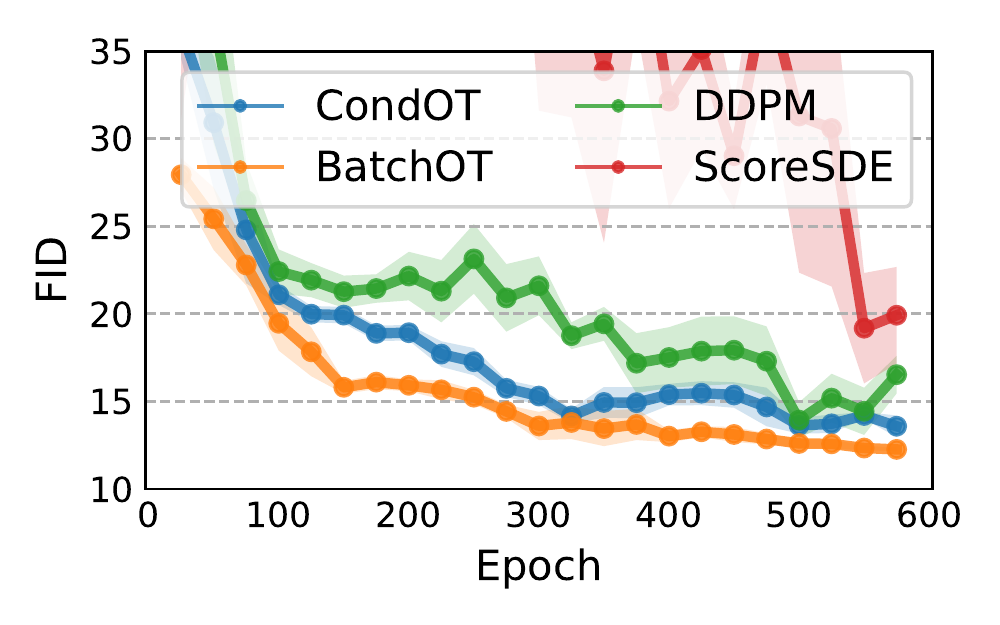}
    \vspace{-1em}
    \caption{Multisample Flow Matching with BatchOT shows faster convergence due to reduced variance (ImageNet64).}
    \label{fig:training_efficiency}
\end{figure}

\subsection{Insights from 2D experiments}
\cref{fig:2D_1} shows the proposed Multisample Flow Matching algorithm on fitting to a checkboard pattern distribution in 2D. We show the marginal probability paths induced by different coupling algorithms, as well as low-NFE samples of trained models on these probability paths.

The diffusion and CondOT probability paths do not capture intricate details of the data distribution until it is almost at the end of the trajectory, whereas Multisample Flow Matching approaches provide a gradual transition to the target distribution along the flow. 
We also see that with a fixed step solver, the BatchOT method is able to produce an accurate target distribution in just one Euler step in this low-dimensional setting, while the other coupling approaches also get pretty close.
Finally, it is interesting that both Stable and Heuristic exhibit very similar probability paths to optimal transport despite only satisfying weaker conditions.

\subsection{Image Datasets}

We find that Multisample Flow Matching retains the performance of Flow Matching while improving on sample quality, compute cost, and variance. 
In \cref{tab:img_results} of \cref{sec:full_imagenet_results}, we report sample quality using the standard Fr{\'e}chet Inception Distance (FID), negative log-likelihood values using bits per dimension (BPD), and compute cost using number of function evaluations (NFE); these are all standard metrics throughout the literature. Additionally, we report the variance of $u_t(x|x_0, x_1)$, estimated using the Joint CFM loss \eqref{eq:cfm_joint} which is an upper bound on the variance. We do not observe any performance degradations while simulation efficiency improves significantly, even with small batch sizes.\looseness=-1

Additionally, in \cref{app:runtime_comparison}, we include runtime comparisons between Flow Matching and Multisample Flow Matching. On ImageNet32, we only observe a 0.8\% relative increase in runtime compared to Flow Matching, and a 4\% increase on ImageNet64.

\vspace{-0.5em}
\paragraph*{Higher sample quality on a compute budget} We observe that with a fixed NFE, models trained using Multisample Flow Matching generally achieve better sample quality. 
For these experiments, we draw $x_0 \sim \mathcal{N}(0,I_d)$ and simulate $v_t(\cdot,\theta)$ up to time $t=1$ using a fixed step solver with a fixed NFE.
Figures \ref{fig:nfefid_euler} show that even on high dimensional data distributions, the sample quality of of multisample methods improves over the na{\"i}ve CondOT approach as the number of function evaluations drops. We compare to the FID of diffusion baseline methods in \cref{tab:NFE_at_FID_table_euler_32}, and provide additional results in \cref{app:FID_vs_NFE}.

Interestingly, we find that the Stable coupling actually performs on par, and some times better than the BatchOT coupling, despite having a smaller asymptotic compute cost and only satisfying a weaker condition within each batch.

As FID is computed over a full set of samples, it does not show how varying NFE affects individual sample paths. We discuss a notion of consistency next, where we analyze the similarity between low-NFE and high-NFE samples.

\vspace{-0.5em}
\paragraph*{Consistency of individual samples}
In Figure \ref{fig:samples_vs_nfe_imagenet64} we show samples at different NFEs, where it can be qualitatively seen that BatchOT produces samples that are more consistent between high- and low-NFE solutions than CondOT, despite achieving similar FID values.

To evaluate this quantitatively, we define a metric for establishing the \textit{consistency} of a model with respect to an integration scheme: let $x^{(m)}$ be the output of a numerical solver initialized at $x$ using $m$ function evalutions to reach $t=1$, and let $x^{(*)}$ be a near-exact sample solved using a high-cost solver starting from $x_0$ as well. We define
\begin{align}\label{eq:consistency_metric}
    \!\!\!\!\! \text{Consistency}(m) = \tfrac{1}{D} \E_{x \sim q_0}\| \mathcal{F}(x^{(m)}) - \mathcal{F}(x^{(*)})\|^2
\end{align}
where $\mathcal{F}(\cdot)$ outputs the hidden units from a pretrained InceptionNet\footnote{We take the same layer as used in standard FID computation.}, and $D$ is the number of hidden units. These kinds of perceptual losses have been used before to check the content alignment between two image samples (\eg \citet{gatys2015neural,johnson2016perceptual}). We find that Multisample Flow Matching has better consistency at all values of NFE, shown in \cref{tab:Consistency_table}.

\paragraph{Training efficiency} \Cref{fig:training_efficiency} shows the convergence of Multisample Flow Matching with BatchOT coupling compared to Flow Matching with CondOT and diffusion-based methods. We see that by choosing better joint distributions, we obtain faster training. This is in line with our variance estimates reported in \cref{tab:img_results} and supports our hypothesis that gradient variance is reduced by using non-trivial joint distributions.\looseness=-1

\subsection{Improved Batch Optimal Couplings}
\label{sec:improve_batchot}
We further explore the usage of Multisample Flow Matching as an approach to improve upon batch optimal solutions.
Here, we experiment with a different setting, where the cost is unknown and only samples from a batch optimal coupling are provided. In the real world, it is often the case that the preferences of each person are not known explicitly, but when given a finite number of choices, people can more easily find their best assignments. This motivates us to consider the case of unknown cost functions, and information regarding the optimal coupling is only given by a weak oracle that acts on finite samples, denoted $q_{OT,c}^k$. We consider two baselines: (i) the BatchOT cost (B) which corresponds to $\E_{q_{OT,c}^k(x_0, x_1)}\brac{c(x_0, x_1)}$, and (ii) learning a static map that mimics the BatchOT couplings (B-ST) by minimizing the following objective:
\begin{equation}
    \E_{ q_{OT,c}^k(x_0,x_1) }\norm{x_1 - \psi_\theta(x_0)}^2\,.
\end{equation}
This can be viewed as learning the barycentric projection \citep{ferradans2014regularized,seguy2017large}, \ie $\psi^*(x_0) = E_{q_{OT,c}^k(x_1 | x_0)} \left[x_1\right]$, a well-studied quantity but is known to not preserve the marginal distribution \citep{fatras2019learning}. 

\begin{figure}[t]
    \centering
        \includegraphics[width=\linewidth]{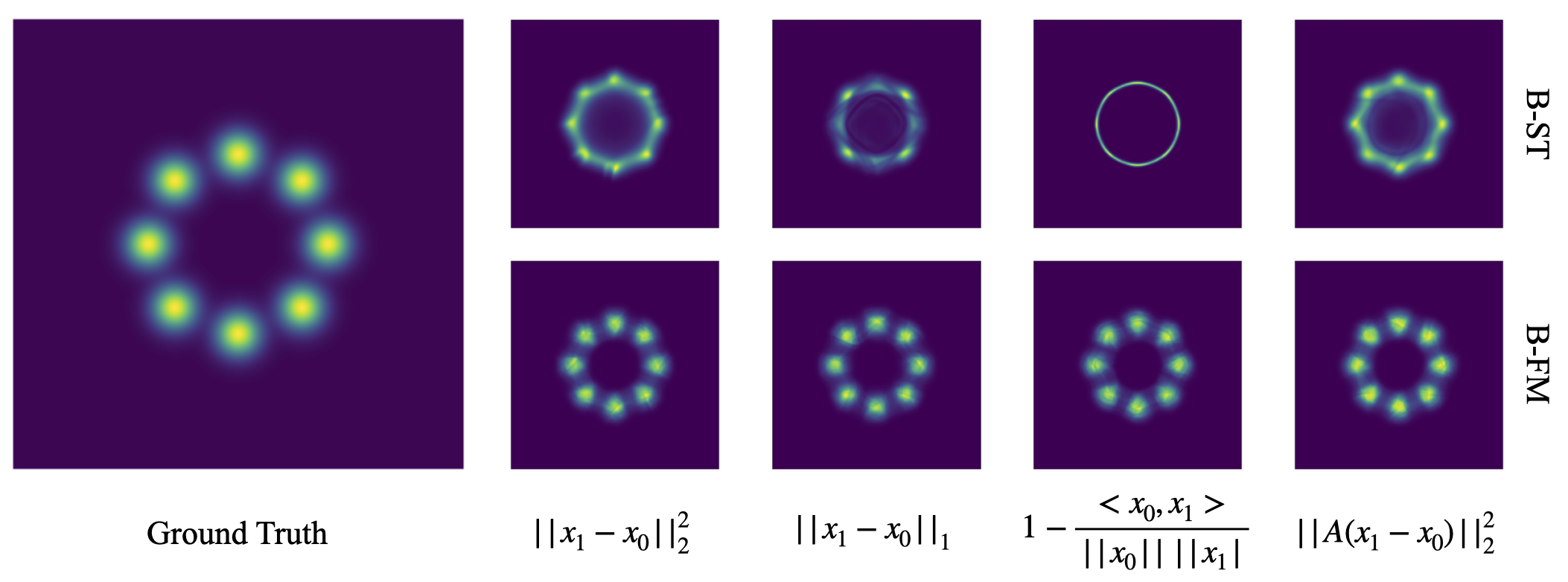}
        \vspace{-1.5em}
    \caption{2D densities on the 8-Gaussians target distribution. (Left) Ground truth density. (Right) Learned densities with static maps in the top row and Multisample Flow Matching dynamic maps in the bottom row. Models within each column were trained using batch optimal couplings with the corresponding cost function.}
    \label{fig:nfefid_midpoint}
\end{figure}

\begin{figure}[t]
    \centering
        \includegraphics[width=0.7\linewidth]{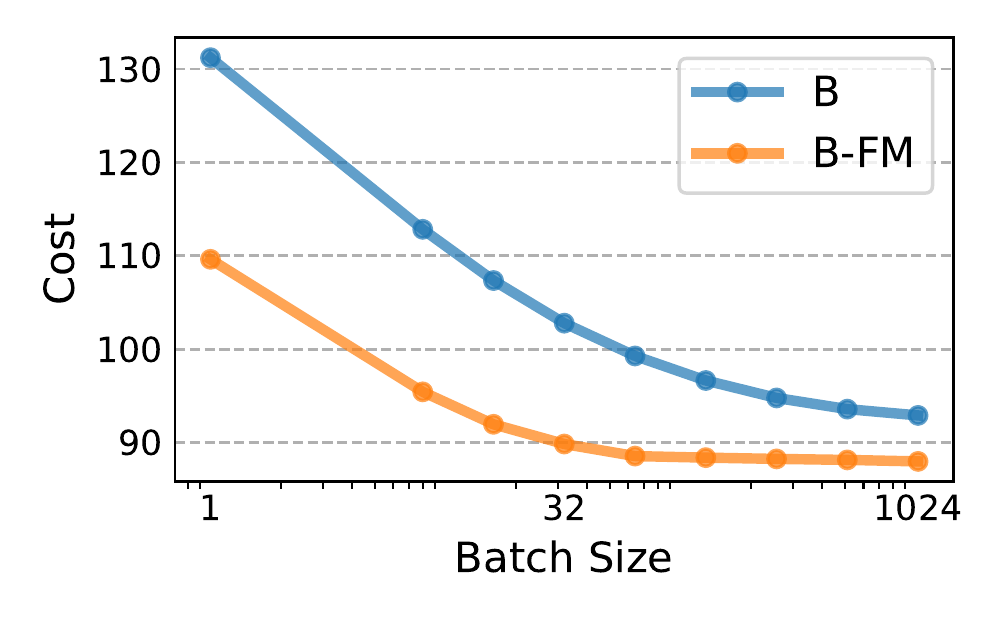}
        \vspace{-1.5em}
    \caption{Transport cost vs. batch size ($k$) for computing couplings on the 64D synthetic dataset. The number of samples used for performing gradient steps during training and the resulting KL divergences were kept the same.}
    \label{fig:cost_vs_bs}
\end{figure}

We experiment with 4 different cost functions on three synthetic datasets in dimensions $\{2,32,64\}$ where both $q_0$ and $q_1$ are chosen to be Gaussian mixture models. 
In Table \ref{tab:t} we report both the transport cost and the KL divergence between $q_1$ and the distribution induced by the learned map, \ie $[\psi_1]_{\sharp}q_0$. 
We observe that while B-ST always results in lower transport costs compared to B-FM, its KL divergence is always very high, meaning that the pushed-forward distribution by the learned static map poorly approximates $q_1$. 
Another interesting observation is that B-FM always reduces transport costs compared to B, providing experimental support to the theory (\cref{thm:monotone}). 
\paragraph{Flow Matching improves optimality} \cref{fig:cost_vs_bs} shows the cost of the learned model as we vary the batch size for computing couplings, where the models  are trained sufficiently to achieve the same KL values as reported in \cref{tab:t}. We see that our approach decreases the cost compared to the BatchOT oracle for any fixed batch size, and furthermore, converges to the OT solution faster than the batchOT oracle. Thus, since Multisample Flow Matching retains the correct marginal distributions, it can be used to better approximate optimal transport solutions than simply relying on a minibatch solution.\looseness=-1

\section{Conclusion}

We propose Multisample Flow Matching, building on top of recent works on simulation-free training of continuous normalizing flows. 
While most prior works make use of training algorithms where data and noise samples are sampled independently, Multisample Flow Matching allows the use of more complex joint distribution. 
This introduces a new approach to designing probability paths.
Our framework increases sample efficiency and sample quality when using low-cost solvers. 
Unlike prior works, our training method does not rely on simulation of the learned vector field during training, and does not introduce any min-max formulations.
Finally, we note that our method of fitting to batch optimal couplings is the first to also preserve the marginal distributions, an important property in both generative modeling and solving transport problems.

\ifdefined\isaccepted 
\subsection*{Acknowledgements}

AAP thanks the Meta AI Mentorship program and NSF Award 1922658 as funding sources. HB  was supported by a grant from Israel CHE Program for
Data Science Research Centers.
Additionally, we acknowledge the Python community
\citep{van1995python,oliphant2007python}
for developing
the core set of tools that enabled this work, including
PyTorch \citep{paszke2019pytorch},
PyTorch Lightning \citep{lightning},
Hydra \citep{Yadan2019Hydra},
Jupyter \citep{kluyver2016jupyter},
Matplotlib \citep{hunter2007matplotlib},
seaborn \citep{seaborn},
numpy \citep{oliphant2006guide,van2011numpy},
pandas \citep{mckinney2012python}, 
SciPy \citep{jones2014scipy}, 
pot \citep{flamary2021pot}, and
torchdiffeq \citep{torchdiffeq}.
\fi

\bibliography{biblio}
\bibliographystyle{icml2023}

\newpage
\appendix
\onecolumn

\section{Coupling algorithms} \label{app:couplings}

Multisample FM makes use of batch coupling algorithms to construct an implicit joint distribution satisfying the marginal constraints. While BatchOT coupling is motivated by approximating the OT map, we consider other lower complexity coupling algorithms which produce coupling that satisfy some desired property of optimal couplings. In Table \ref{tab:runtimes} we summarize the runtime complexities for the different algorithms used in this work. We will now describe in detail the Stable and Heuristic coupling algorithms.

\begin{table}[h]
\centering
\ra{1.3}
\resizebox{0.7\columnwidth}{!}{%
 \small
\begin{tabular}{cccccc} 
\toprule
 & CondOT & BatchOT & BatchEOT & Stable & Heuristic \\ 
\hline
Runtime Complexity & $\bigO{(1)}$ & $\bigO{(k^3)}$ & $\tilde{\mathcal{O}}(k^2/\eps)$ & $\bigO{(k^2\log(k))}$ & $\bigO{(k^2\log(k))}$ \\
\bottomrule
\end{tabular}
}
\caption{Runtime complexities of the different coupling algorithms as a function of the batch size $k$.}
\label{tab:runtimes}
\end{table}

\subsection{Stable couplings }
\cite{Wolansky_2020} surveys discrete optimal transport from a stable coupling perspective proving that stability is a necessary condition for OT couplings. Although stable couplings are not OT, they are cheaper to compute and are therefore an appealing approach to pursue. For completeness we formulate the Gale Shapely Algorithm in our setting in Algorithm 1. The rankings $R_0,R_1$ hold the preferences of the samples in $\{x_0^{(i)}\}_{i=1}^k$ and $\{x_1^{(i)}\}_{i=1}^k$ respectively. Where $R_0(i,j)$ is the rank of $x_1^{(j)}$ in $x_0^{(i)}$'s preferences and $R_1(i,j)$ is the rank of $x_0^{(j)}$ in $x_1^{(i)}$'s preferences.

\subsection{Heuristic couplings } \label{app:heuristic_couplings}
The stable coupling is agnostic to the cost of pairing samples and only takes into account the ranks. Therefore, reassignments during the Gale Shapely algorithms might increase the total cost although the rankings of assigned samples are improved. We draw inspiration from the cyclic monotonicity of OT couplings \cite{villani2008optimal} and from the marriage with sharing formulation in \cite{Wolansky_2020} and modify the reassignment condition in the Gale Shapely algorithm (see Algorithm 2). The modified condition encourages "local" monotonicity between the reassigned pairs only, reassigning a pair only if the potentially newly assigned pairs have a lower cost.

\begin{minipage}{0.46\textwidth}
\begin{algorithm}[H]
\label{algo:stable}
\SetAlgoLined
\KwResult{assignment $\sigma$ }
\KwData{$\smash{\{x_0^{(i)}\}_{i=1}^k \sim q_0(x_0)}$, $\smash{\{x_1^{(i)}\}_{i=1}^k \sim q_1(x_1)}$, rankings $R_0,R_1$}
 initialization:\; $\sigma$ empty assignment \\ 
 \While{$\exists \;i\in[k]$ \texttt{s.t.} $\sigma(i)$ is empty }{
  $j \gets$ first sample in $R_0(i,\cdot)$  whom $x_0^{(i)}$ has not tried to match with yet \\ 
  \eIf{$\exists \; i'$ \texttt{s.t.} $\sigma(i')=j$}{
       \If{$R_1(j,i)<R_1(j,i')$}{
           $\sigma(i') \gets $ empty \\
           $\sigma(i) \gets j$
       }
   }
   {
   $\sigma(i) \gets j$
  }
 }
 \caption{Stable Coupling (Gale Shapely)}
 \vspace{54pt}
\end{algorithm}
\end{minipage}
\hfill
\begin{minipage}{0.46\textwidth}
\begin{algorithm}[H]
\SetAlgoLined
\KwResult{assignment $\sigma$ }
\KwData{$\smash{\{x_0^{(i)}\}_{i=1}^k \sim q_0(x_0)}$, $\smash{\{x_1^{(i)}\}_{i=1}^k \sim q_1(x_1)}$, rankings $R_0,R_1$, cost matrix $C$}
 initialization:\; $\sigma$ empty assignment \\ 
 \While{$\exists \;i\in[k]$ \texttt{s.t.} $\sigma(i)$ is empty }{
  $j \gets$  first sample in $R_0(i,\cdot)$  whom $x_0^{(i)}$ has not tried to match with yet  \\ 
  \eIf{$\exists \; i'$ \texttt{s.t.} $\sigma(i')=j$}{
        $j' \gets$  first sample in $R_0(i',\cdot)$  whom $x_0^{(i')}$ has not tried to match with yet   \\ 
        $l \gets$ second sample in $R_0(i,\cdot)$  whom $x_0^{(i)}$ has not tried to match with yet  \\
       \If{$C(i,j) + C(i',j') < C(i,l) + C(i',j)$}{
           $\sigma(i') \gets $ empty \\
           $\sigma(i) \gets j$
       }
   }
   {
   $\sigma(i) \gets j$
  }
 }
 \caption{Heuristic Coupling}
\end{algorithm}
\end{minipage}

\pagebreak
\section{Additional tables and figures} \label{sec:additional_figs}
\subsection{Full results on ImageNet data}\label{sec:full_imagenet_results}

\begin{table}[H]
\centering
\ra{1.05}
\resizebox{1.0\linewidth}{!}{%
\begin{tabular}{@{} l  r r r r  r  r r r r  r @{}}\toprule
 & \multicolumn{4}{c}{\bf ImageNet 32$\times$32} &
  & \multicolumn{4}{c}{\bf ImageNet 64$\times$64} \\
\cmidrule(lr){2-6} \cmidrule(l){7-10}
Model 
& {NLL} & {FID} & {NFE} & {Var($u_t$)}
& & {NLL} & {FID} & {NFE} & {Var($u_t$)} \\
\cmidrule(r){1-1}\cmidrule(lr){2-6} \cmidrule(l){7-10}
\textit{\footnotesize Ablations$^\dagger$}\\
\;\; DDPM~{\scriptsize \citep{ho2020denoising}} & 
3.61 & 5.72 & 330 &  & &
3.27 & 13.80 & 323 &  \\
\;\; ScoreSDE~{\scriptsize \citep{song2020score}} & 
3.61 & 6.84 & 198 &  & &
3.30 & 26.64 & 365 &  \\
\;\; ScoreFlow~{\scriptsize \citep{song2021maximum}}  & 
3.61 & 9.53 & 189 &  & &
3.34 & 32.78 & 554 &  \\
\;\; Flow Matching \textsuperscript{w}/ {\footnotesize Diffusion}~{\scriptsize \citep{lipman2022flow}} & 
3.60 & 6.36 & 165 &  & &
3.35 & 15.11 & 162 &  \\
\;\; Rectified Flow~{\scriptsize \citep{liu2022flow}} & 
3.59 & 5.55 & 111 &  & &
3.31 & 13.02 & 129 &  \\
\;\; Flow Matching \textsuperscript{w}/ {\footnotesize CondOT}~{\scriptsize \citep{lipman2022flow}} & 
3.58 & 5.04 & 139 & 594 & &
3.27 & 13.93 & 131 & 1880 \\
\cmidrule(r){1-1}\cmidrule(lr){2-6} \cmidrule(l){7-10}
\textit{\footnotesize Ours}\\
\;\; Multisample Flow Matching \textsuperscript{w}/ {\footnotesize StableCoupling} &  
3.59 & 5.79 & 148 & 523 & &
3.27 & 11.82 & 132 & 1782 \\
\;\; Multisample Flow Matching \textsuperscript{w}/ {\footnotesize HeuristicCoupling} &  
3.58 & 5.29 & 133 & 555 & &
3.26 & 13.37 & 110 & 1816 \\
\;\; Multisample Flow Matching \textsuperscript{w}/ {\footnotesize BatchEOT} &  
3.58 & 6.14 & 132 & 508 & &
3.26 & 14.92 & 141 & 1736 \\
\;\; Multisample Flow Matching \textsuperscript{w}/ {\footnotesize BatchOT} &  
3.58 & 4.68 & 146 & 507 & &
3.27 & 12.37 & 135 & 1733  \\
\bottomrule
\end{tabular}
}
\caption{Multisample Flow Matching improves on sample quality and sample efficiency while not trading off performance at all compared to Flow Matching. $^\dagger$Reproduction using the same training hyperparameters (architecture, optimizer, training iterations) as our methods.}
\label{tab:img_results}
\end{table}

\subsection{How batch size affects the marginal probability paths on 2D checkerboard data}

\begin{figure}[H]
    \centering
    \rotatebox{90}{\hspace{0.5em}\small BatchOT}
    \includegraphics[width=0.98\textwidth]{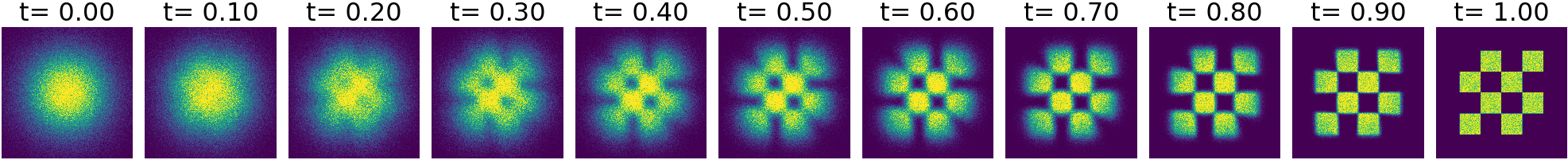} \\
    \rotatebox{90}{\hspace{0.7em}\small Stable}
    \includegraphics[width=0.98\textwidth]{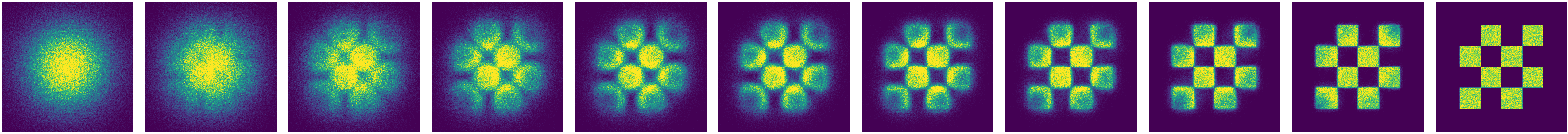} \\
    \rotatebox{90}{\hspace{0.5em}\small Heuristic}
    \includegraphics[width=0.98\textwidth]{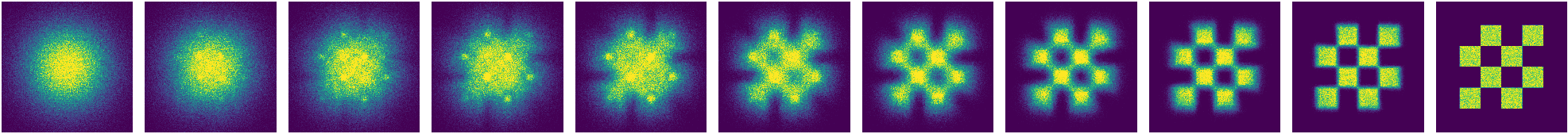}
    \vspace{0pt}
    \hrule
    \vspace{6pt}
    \rotatebox{90}{\hspace{0.5em}\small BatchOT}
    \includegraphics[width=0.98\textwidth]{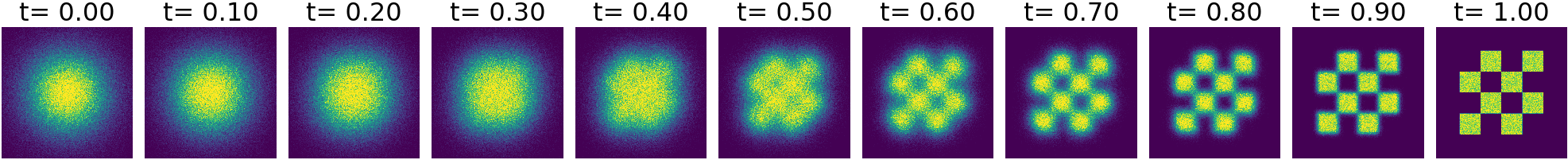} \\
    \rotatebox{90}{\hspace{0.7em}\small Stable}
    \includegraphics[width=0.98\textwidth]{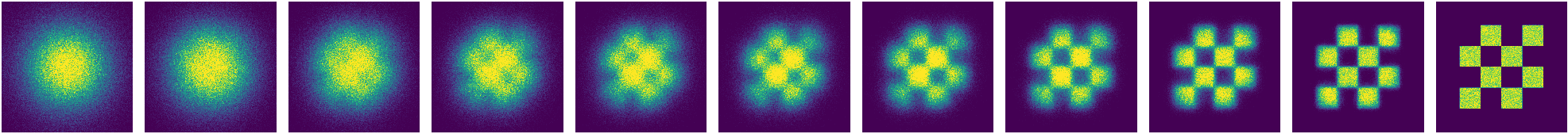} \\
    \rotatebox{90}{\hspace{0.5em}\small Heuristic}
    \includegraphics[width=0.98\textwidth]{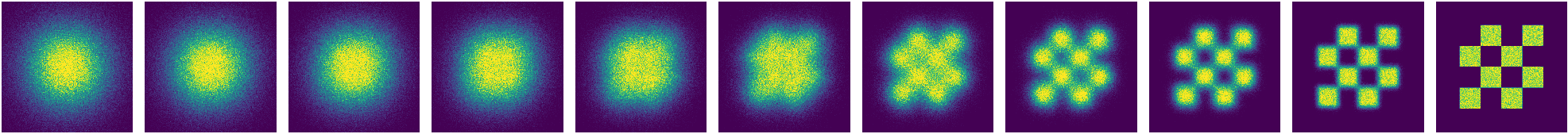}
    \caption{Marginal probability paths. (\emph{Top}) Batch size 64. (\emph{Bottom}) Batch size 8.}
    \label{fig:2D_3}
\end{figure}

\subsection{FID vs NFE using midpoint discretization scheme}

\begin{figure}[H]
    \centering
    \begin{subfigure}[b]{0.4\linewidth}
        \includegraphics[width=\linewidth]{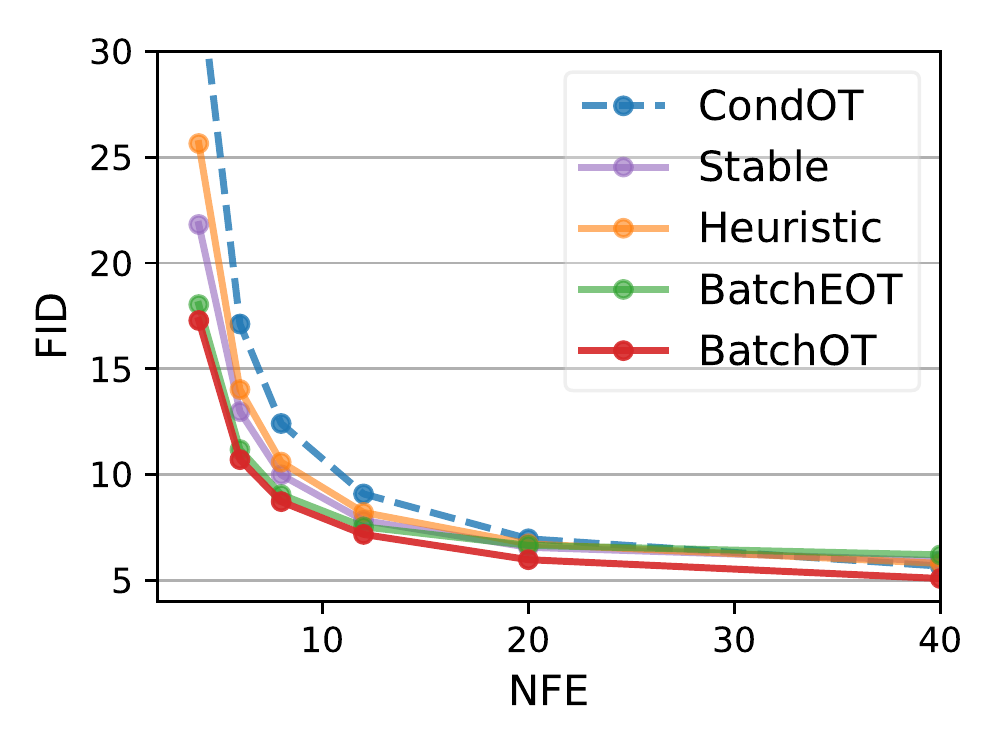}
        \vspace{-1.5em}
        \caption*{ImageNet 32$\times$32}
    \end{subfigure}
    \begin{subfigure}[b]{0.4\linewidth}
        \includegraphics[width=\linewidth]{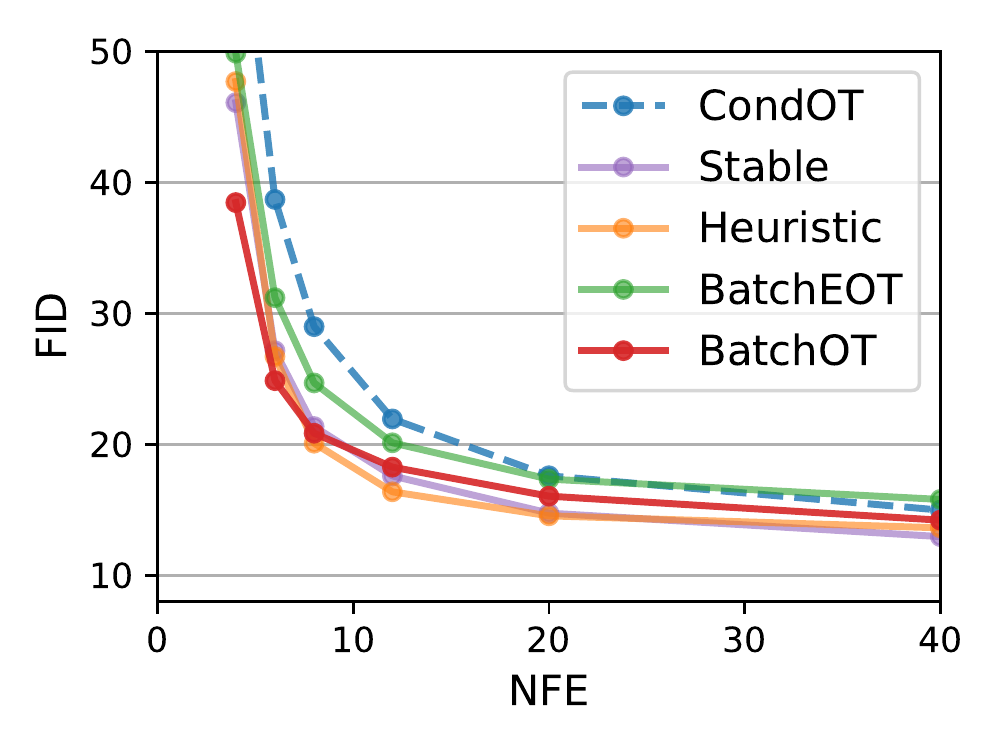}
        \vspace{-1.5em}
        \caption*{ImageNet 64$\times$64}
    \end{subfigure}
    \caption{Sample quality (FID) vs compute cost (NFE); midpoint discretization.}
    \label{fig:nfefid_midpoint}
\end{figure}

\subsection{Comparison of FID vs NFE for baseline methods DDPM and ScoreSDE}\label{app:FID_vs_NFE}

\begin{table}[H]
\centering
\begin{minipage}{.5\linewidth}

\begin{tabular}{ c r r r r } 
\multicolumn{5}{c}{ ImageNet32 FID (Euler) }  \\ 
\toprule
\multicolumn{1}{l}{ NFE} & \multicolumn{1}{r}{ DDPM} & \multicolumn{1}{r}{ ScoreSDE} & \multicolumn{1}{r}{ BatchOT} & \multicolumn{1}{r}{ Stable} \\ \midrule
\multicolumn{1}{l}{ Adaptive \hspace{1em}}  & 5.72 & 6.84 & \textbf{4.68} & 5.79 \\
\multicolumn{1}{l}{ 40}  & 19.56  & 16.96  & \textbf{5.94}  & 7.02  \\
\multicolumn{1}{l}{ 20}  & 63.08  & 58.02  & \textbf{7.71}  & 8.66  \\
\multicolumn{1}{l}{ 12}  & 152.59 & 140.95 & \textbf{10.72} & 11.10 \\
\multicolumn{1}{l}{ 8}   & 232.97 & 218.66 & 15.64 & \textbf{14.89} \\
\multicolumn{1}{l}{ 6}   & 275.28 & 266.76 & 22.08 & \textbf{19.88} \\
\multicolumn{1}{l}{ 4}   & 362.37 & 340.17 & 38.86 & \textbf{33.92} \\
\bottomrule
\end{tabular}

\end{minipage}%
\begin{minipage}{.5\linewidth}
\begin{tabular}{ c r r r r }
\multicolumn{5}{c}{ ImageNet32 FID (Midpoint) }  \\ 
\toprule
\multicolumn{1}{l}{ NFE} & DDPM & ScoreSDE & BatchOT & Stable \\ 
\midrule
\multicolumn{1}{l}{ Adaptive \hspace{1em}}  & 5.72 & 6.84 & \textbf{4.68} & 5.79 \\
\multicolumn{1}{l}{ 40}  & 6.68 & 6.48 & \textbf{5.09} & 5.94 \\
\multicolumn{1}{l}{ 20}  & 7.80 & 8.96 & \textbf{5.98} & 6.57 \\
\multicolumn{1}{l}{ 12}  & 14.87 & 16.22 & \textbf{7.18} & 7.84 \\
\multicolumn{1}{l}{ 8}   & 56.41 & 56.73 & \textbf{8.73} & 9.99 \\
\multicolumn{1}{l}{ 6}   & 188.08 & 168.99 & \textbf{10.71} & 12.98 \\
\multicolumn{1}{l}{ 4}   & 319.41 & 279.06 & \textbf{17.28} & 21.82 \\
\bottomrule
\end{tabular}
\label{tab:fid_nfe_all_32}
\end{minipage}
\caption{Comparing the FID vs. NFE on ImageNet32 for two baselines and two of our methods.}
\end{table}

\begin{table}[H]
\label{tab:fid_nfe_all_64}
\centering
\begin{minipage}{0.5\linewidth}
\begin{tabular}{ c r r r r }
\multicolumn{5}{c}{ ImageNet64 FID (Euler) }  \\ 
\toprule
\multicolumn{1}{l}{ NFE} & \multicolumn{1}{c}{ DDPM} & \multicolumn{1}{c}{ ScoreSDE} & \multicolumn{1}{c}{ BatchOT} & \multicolumn{1}{c}{ Stable} \\ \midrule
\multicolumn{1}{l}{ Adaptive \hspace{1em}}  & 13.80 & 26.64 & 12.37 & \textbf{11.82} \\
\multicolumn{1}{l}{ 40} & 25.83 & 44.16 & 14.79 & \textbf{13.39} \\
\multicolumn{1}{l}{ 20} & 66.42 & 82.97 & 17.06 & \textbf{15.15} \\
\multicolumn{1}{l}{ 12} & 158.46 & 141.79 & 20.94 & \textbf{18.81} \\
\multicolumn{1}{l}{ 8}  & 258.49 & 210.29 & 27.56 & \textbf{26.38} \\
\multicolumn{1}{l}{ 6}  & 321.04 & 262.20 & \textbf{36.17} & 37.14 \\
\multicolumn{1}{l}{ 4}  & 373.08 & 335.54 & \textbf{56.75} & 63.25 \\
\bottomrule
\end{tabular}
\end{minipage}%
\begin{minipage}{0.5\linewidth}
\begin{tabular}{ c r r r r }
\multicolumn{5}{c}{ ImageNet64 FID (Midpoint)} \\
\toprule
\multicolumn{1}{l}{ NFE} & \multicolumn{1}{c}{ DDPM} & \multicolumn{1}{c}{ ScoreSDE} & \multicolumn{1}{c}{ BatchOT} & \multicolumn{1}{c}{ Stable} \\ 
\midrule
\multicolumn{1}{l}{ Adaptive \hspace{1em}}  & 13.80 & 26.64 & 12.37 & \textbf{11.82} \\
\multicolumn{1}{l}{ 40}  & 15.3 & 26.67 & 14.22 & \textbf{12.97} \\
\multicolumn{1}{l}{ 20}  & 15.05 & 25.73 & 16.05 & \textbf{14.76} \\
\multicolumn{1}{l}{ 12}  & 18.91 & 29.99 & 18.27 & \textbf{17.60} \\
\multicolumn{1}{l}{ 8}   & 53.15 & 67.83 & \textbf{20.85} & 21.36 \\
\multicolumn{1}{l}{ 6}   & 179.79 & 155.91 & \textbf{24.87} & 27.15 \\
\multicolumn{1}{l}{ 4}   & 330.53 & 279.00 & \textbf{38.45} & 46.08 \\
\bottomrule
\end{tabular}
\end{minipage}
\caption{Comparing the FID vs. NFE on ImageNet64 for two baselines and two of our methods.}
\end{table}

\subsection{Runtime per iteration is not significantly affected by solving for couplings}\label{app:runtime_comparison}

\begin{table}[H]
\centering
\ra{1.1}
\resizebox{0.6\columnwidth}{!}{%
\begin{tabular}{@{} l c c c c @{}}\toprule
  & \multicolumn{2}{c}{\bf ImageNet 32$\times$32}
  & \multicolumn{2}{c}{\bf ImageNet 64$\times$64} \\
& {It./s} & {Rel. increase}  
& {It./s} & {Rel. increase} \\
\midrule
CondOT (reference) & 
1.16 & ---  & 
1.31 & ---  \\
BatchOT  & 
1.15 & 0.8\% & 
1.26 & 3.9\% \\
Stable & 
1.15 & 0.8\% & 
1.26 & 3.9\% \\
\bottomrule
\end{tabular}
}
\caption{ Absolute and relative runtime comparisons between CondOT, BatchOT and Stable matching. ``It./s" denotes the number of iterations per second, and ``Rel. increase" is the relative increase with respect to CondOT. Note that these are on relatively standard batch sizes (refer to \cref{app:exp} for exact batch sizes).}
\label{tab:relativeruntime_table}
\end{table}

\subsection{Convergence improves when using larger coupling sizes}

\begin{figure}[H]
    \centering
    \includegraphics[width=0.5\linewidth]{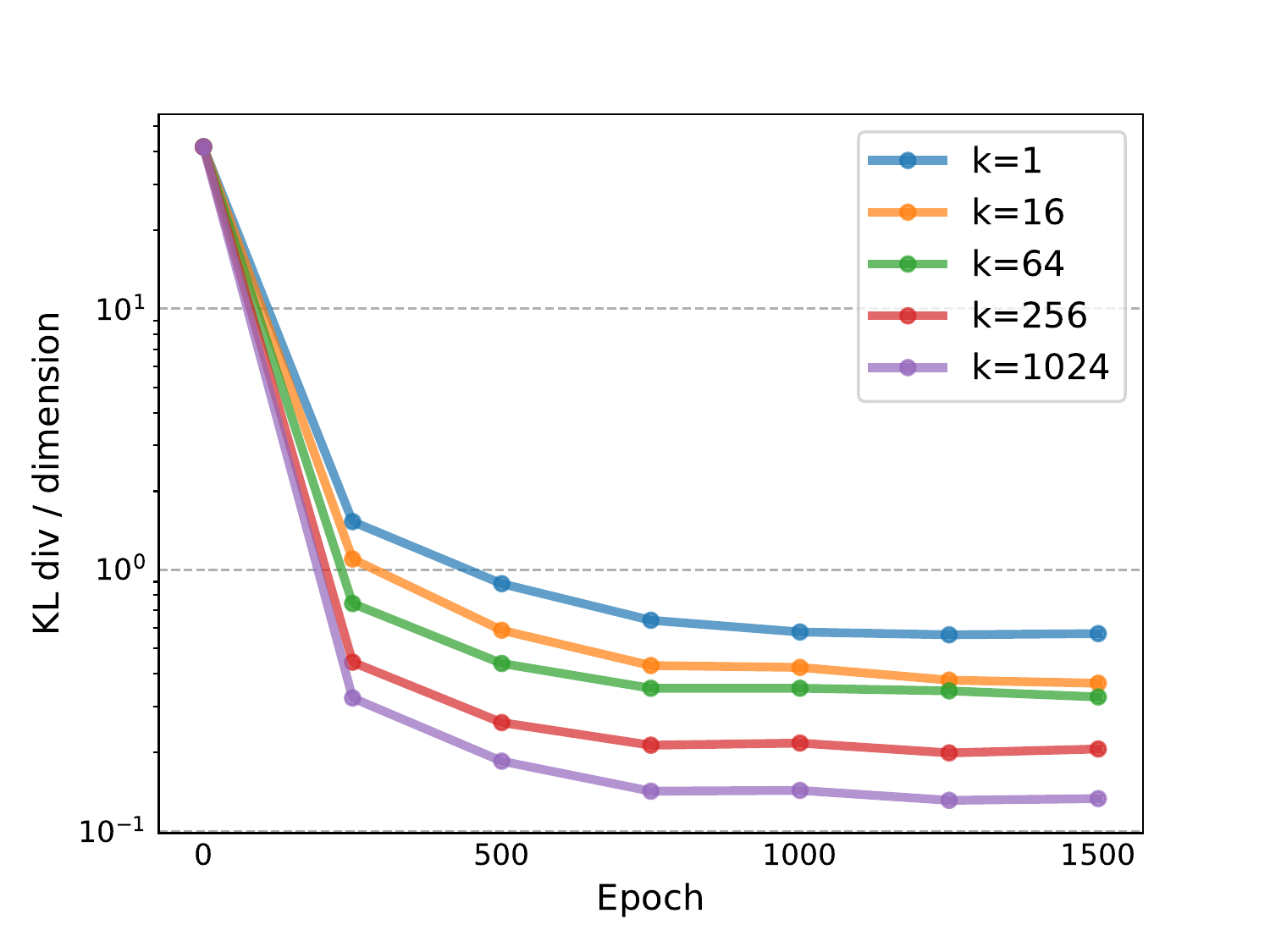}
    \vspace{-1.5em}
    \caption{Larger couplings sizes ($k$) for defining the multisample coupling results in faster and more stable convergence. This is done on the 64-D experiments in \cref{sec:improve_batchot}. The batch size (number of samples) for training is kept thestr same and only $k$ is varied for solving the couplings.}
    \label{fig:convergence_vs_k}
\end{figure}

\pagebreak
\section{Generated samples}

\begin{figure}[H]
    \centering
    \begin{subfigure}[b]{0.3\linewidth}
    {\tiny \; NFE=400 \;\;\;\;\;\;\;\;\;\;\;\;\; 12 \;\;\;\;\;\;\;\;\;\;\;\;\;\;\;\;\;\; 8 \;\;\;\;\;\;\;\;\;\;\;\;\;\;\;\;  6} \\
        \includegraphics[width=\linewidth]{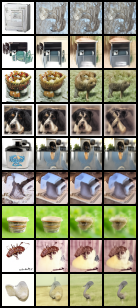}
        \vspace{-1.5em}
        \caption*{Flow Matching}
    \end{subfigure}
    \begin{subfigure}[b]{0.3\linewidth}
    {\tiny \; NFE=400 \;\;\;\;\;\;\;\;\;\;\;\;\; 12 \;\;\;\;\;\;\;\;\;\;\;\;\;\;\;\;\;\; 8 \;\;\;\;\;\;\;\;\;\;\;\;\;\;\;\;  6} \\
        \includegraphics[width=\linewidth]{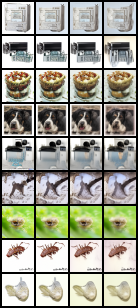}
        \vspace{-1.5em}
        \caption*{Multisample Flow Matching}
    \end{subfigure}
    \caption{Multisample Flow Matching trained with batch optimal couplings produces more consistent samples across varying NFEs on ImageNet32. From left to right, the NFEs used to generate these samples are 200, 12, 8, and 6 using a midpoint discretization. Note that both flows on each row start from the same noise sample.}
    \label{fig:samples_vs_nfe_imagenet32_paths}
\end{figure}

\begin{figure}[H]
    \centering
    \begin{subfigure}[b]{0.3\linewidth}
    {\tiny \; NFE=400 \;\;\;\;\;\;\;\;\;\;\;\; 12 \;\;\;\;\;\;\;\;\;\;\;\;\;\;\;\;\;\; 8 \;\;\;\;\;\;\;\;\;\;\;\;\;\;\;\;\;\; 6} \\
        \includegraphics[width=\linewidth]{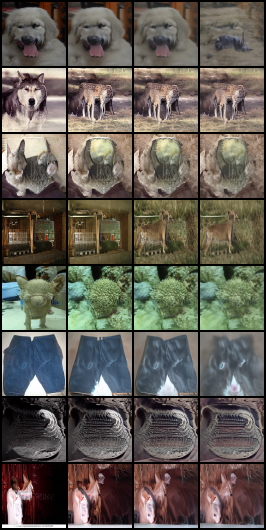}
        \vspace{-1.5em}
        \caption*{Flow Matching}
    \end{subfigure}
    \begin{subfigure}[b]{0.3\linewidth}
        {\tiny \; NFE=400 \;\;\;\;\;\;\;\;\;\;\;\; 12 \;\;\;\;\;\;\;\;\;\;\;\;\;\;\;\;\;\; 8 \;\;\;\;\;\;\;\;\;\;\;\;\;\;\;\;\;\; 6} \\
        \includegraphics[width=\linewidth]{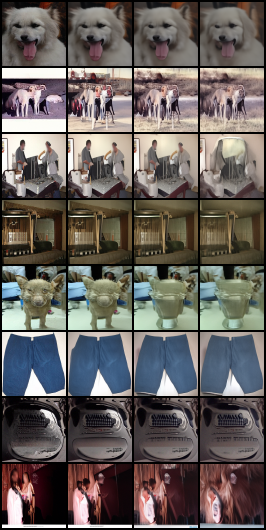}
        \vspace{-1.5em}
        \caption*{Multisample Flow Matching}
    \end{subfigure}
    \caption{Multisample Flow Matching trained with batch optimal couplings produces more consistent samples across varying NFEs on ImageNet64. From left to right, the NFEs used to generate these samples are 200, 12, 8, and 6 using a midpoint discretization. Note that both flows on each row start from the same noise sample.}
    \label{fig:samples_vs_nfe_imagenet64_morepaths}
\end{figure}

\begin{figure}[H]
    \centering
        \includegraphics[width=0.8\linewidth]{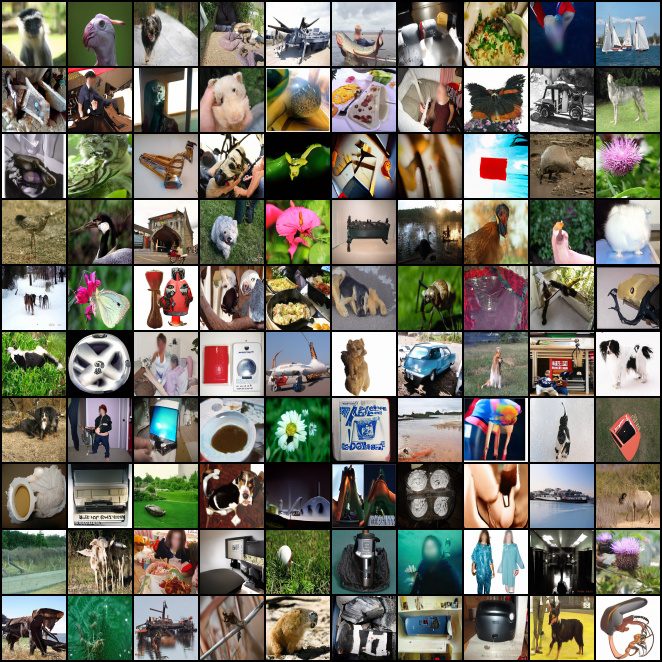}
    \caption{Non-curated generated images for ImageNet64 using Multisample Flow Matching with BatchOT coupling.}
    \label{fig: imagenet64_generated_batchot}
\end{figure}

\pagebreak
\section{Theorems and proofs}\label{sec:proofs}
\subsection{Proof of Lemma~\ref{lem:marginals}} \label{subsec:marginals}

We need only prove that the marginal probability path interpolates between $q_0$ and $q_1$. 
\begin{equation}
p_{0}(x) = \int p_{0}(x | x_1) q_1(x_1) dx_1 = \int q(x | x_1) q_1(x_1) dx_1 = q_0(x).
\end{equation}
Then since $u_t(x | x_1)$ transports all points $x \in \R^D$ to $x_1$ at time $t=1$, we satisfy $p_{t=1}(x | x_1) = \delta(x - x_1)$.
\begin{equation}
    p_{1}(x) = \int p_{1}(x | x_1) q_1(x_1) dx = \int \delta(x - x_1) q_1(x_1) dx_1 = q_1(x).
\end{equation}
Theorems 1 and 2 of \citet{lipman2022flow} can then be used to prove that (i) the marginal vector field $u_t(x)$ transports between $p_0 = q_0$ and $p_1 = q_1$, and (ii) the Joint CFM objective has the same gradient in expectation as the Flow Matching objective and is uniquely minimized by $v_t(x;\theta) = u_t(x)$.

\subsection{Proof of Lemma~\ref{lem:var_bound}} \label{subsec:grad_variance}
Note that

\begin{equation}\label{eq:vargrad}
\begin{split}
    &\text{Cov}_{p_t(x_1|x)} \left( \nabla_\theta \norm{v_t(x;\theta) - u_t(x | x_1)}^2 \right) =\text{Cov}_{p_t(x_1|x)} \left( \nabla_\theta \norm{v_t(x;\theta)}^2 - \left( \nabla_\theta v_t(x;\theta)\right)\tran u_t(x | x_0, x_1) \right) \\
    &=\left( \nabla_\theta v_t(x;\theta) \right)\tran \text{Cov}_{p_t(x_1|x)} \left( u_t(x | x_1) \right) \left( \nabla_\theta v_t(x;\theta) \right),
\end{split}
\end{equation}
and that 
\begin{align} \label{eq:cov_expression}
\text{Cov}_{p_t(x_1|x)} \left( u_t(x | x_1) \right) = \mathbb{E}_{p_t(x_1|x)} \left( u_t(x | x_1) - u_t(x) \right) \left( u_t(x | x_1) - u_t(x) \right)^{\top}.
\end{align}
Here, we used that $u_t(x) = \E_{p_t( x_1 | x)} \left[ u_t( x | x_1) \right]$ by \eqref{eq:marg_vf}.
If we take the trace on both sides of \eqref{eq:vargrad}, we get
\begin{align}
\begin{split}
    &\mathrm{Tr} \big[\text{Cov}_{p_t(x_1|x)} \left( \nabla_\theta \norm{v_t(x;\theta) - u_t(x | x_1)}^2 \right)\big] = \mathrm{Tr} \big[ \left( \nabla_\theta v_t(x;\theta)\right)\tran \text{Cov}_{p_t(x_1|x)} \left( u_t(x | x_1) \right) \left( \nabla_\theta v_t(x;\theta)\right) \big] \\ &= \mathrm{Tr} \big[ \text{Cov}_{p_t(x_1|x)} \left( u_t(x | x_1) \right) \left( \nabla_\theta v_t(x;\theta) \right) \left( \nabla_\theta v_t(x;\theta) \right)\tran \big] = \langle \text{Cov}_{p_t(x_1|x)} \left( u_t(x | x_1) \right), \left( \nabla_\theta v_t(x;\theta) \right) \left( \nabla_\theta v_t(x;\theta) \right)\tran \rangle_{F} \\ &\leq \| \text{Cov}_{p_t(x_1|x)} \left( u_t(x | x_1) \right)\|_{F} \| \left( \nabla_\theta v_t(x;\theta) \right) \left( \nabla_\theta v_t(x;\theta) \right)\tran \|_{F} 
    \\ &\leq \mathbb{E}_{p_t(x_1|x)} \| \left( u_t(x | x_1) - u_t(x) \right) \left( u_t(x | x_1) - u_t(x) \right)^{\top} \|_{F} \| \left( \nabla_\theta v_t(x;\theta) \right) \left( \nabla_\theta v_t(x;\theta) \right)^{\top} \|_{F} \\ &= 
    \| \nabla_\theta v_t(x;\theta) \|^2 \mathbb{E}_{p_t(x_1|x)} \| u_t(x | x_1) - u_t(x) \|^2.
\end{split}
\end{align}
The second equality holds because $\mathrm{Tr}(AB) = \mathrm{Tr}(BA)$ when both expressions are well defined, and the third equality holds by the definition of the Frobenius inner product $\langle \cdot, \cdot \rangle_{F}$. The first inequality holds by the Cauchy-Schwarz inequality. The second inequality holds by equation \eqref{eq:cov_expression} and by the triangle inequality.
In the last equality we used that for any vector $v$, $\|  v v^{\top}\|_{F} = (\mathrm{Tr}(v v^{\top},v v^{\top}))^{1/2} = \|v\|^2$. This proves \eqref{eq:bound_tr_cov}. 

To prove \eqref{eq:avg_total_variance}, we write:
\begin{align}
\begin{split}
    &\E_{t, p_t(x)}[\sigma^2_{t,x}] \\ &\leq 
    \E_{t, p_t(x)}[\| \nabla_\theta v_t(x;\theta) \|^2 \mathbb{E}_{p_t(x_1|x)} \| u_t(x | x_1) - u_t(x) \|^2]
    \\ &\leq 
    \max_{x,t} \| \nabla_\theta v_t(x;\theta) \|^2 \times \E_{t, p_t(x)}[ \mathbb{E}_{p_t(x_1|x)} \| u_t(x | x_1) - u_t(x) \|^2]
    \\ &= 
    \max_{x,t} \| \nabla_\theta v_t(x;\theta) \|^2 \times \E_{t, q(x_0,x_1)}[ \| u_t(x_t | x_1) - v_t(x_t;\theta) \|^2] \leq \max_{t,x} \norm{\nabla_\theta v_t(x;\theta)}^2 \times \mathcal{L}_\mathrm{JCFM}
\end{split}
\end{align}
Here, the first inequality holds by \eqref{eq:bound_tr_cov}, and the last inequality holds because $u_t(x)$ is the minimizer of $\mathcal{L}_\mathrm{JCFM}$.  

\subsection{Proof of Lemma~\ref{lem:q_marginals}} \label{subsec:q_marginals}
For an arbitrary test function $f$, by the construction of $q$ we write
\begin{align}
    \mathbb{E}_{q(x_0,x_1)} f(x_0) = \mathbb{E}_{\smash{\{x_0^{(i)}\}_{i=1}^k \sim q_0},\smash{\{x_1^{(i)}\}_{i=1}^k \sim q_1}} \mathbb{E}_{q^k(x_0,x_1)} f(x_0).
\end{align}
Since $q^k$ has marginal $\frac{1}{k} \sum_{i=1}^{k} \delta(x_0 - x_0^{(i)})$ because $\pi$ is a doubly stochastic matrix, we obtain that $\mathbb{E}_{q^k(x_0,x_1)} f(x_0) = \frac{1}{k} \sum_{i=1}^{k} f(x_0^{(i)})$ and then the right-hand side is equal to
\begin{align}
    \mathbb{E}_{\smash{\{x_0^{(i)}\}_{i=1}^k \sim q_0},\smash{\{x_1^{(i)}\}_{i=1}^k \sim q_1}} \frac{1}{k} \sum_{i=1}^{k} f(x_0^{(i)}) = \mathbb{E}_{q_0(x_0)} f(x_0),
\end{align}
which proves that the marginal of $q$ for $x_0$ is $q_0$.
The same argument works for the $x_1$ marginal.

\subsection{Proof of Theorem~\ref{thm:limiting}} \label{sec:proof_limiting}

\paragraph{Notation} We begin by recalling and introducing some additional notation. Let $\bm{X}_0 = (x_0^i)_{i=1}^{+\infty}$, $\bm{X}_1 = (x_1^i)_{i=1}^{+\infty}$ be sequences of i.i.d. samples from the distributions $q_0$ and $q_1$, and denote by $\bm{X}_0^k = (x_0^i)_{i=1}^{k}$, $\bm{X}_1^k = (x_1^i)_{i=1}^{k}$ the finite sequences containing the initial $k$ samples. We denote by $q_0^k$ and $q_1^k$ the empirical distributions corresponding to $\bm{X}_0^k$ and $\bm{X}_1^k$, i.e. $q_0^k = \frac{1}{k} \sum_{i=1}^{k} \delta_{x_0^i}$, $q_1^k = \frac{1}{k} \sum_{i=1}^{k} \delta_{x_1^i}$. Let $q^k$ be the distribution over $\R^d \times \R^d$ which is output by the matching algorithm; $q^k$ has marginals that are equal to $q_0^k$ and $q_1^k$. Let $q^*$ be the optimal transport plan between $q_0$ and $q_1$,  and let $\Tilde{q}^k$ be the optimal transport plan between $q^k$ and $q$ under the quadratic cost. Using this additional notation, we rewrite some of the objects that were defined in the main text in a lengthier, more precise way:
\begin{enumerate}[label=(\roman*)]
    \item The marginal vector field corresponding to sample size $k$: 
    \begin{align} \label{eq:u_t_k}
        u_t^{k}(x) = \mathbb{E}_{\bm{X}_0^k \overset{\mathrm{iid}}{\sim} q_0,\bm{X}_1^k \overset{\mathrm{iid}}{\sim} q_1,(x_0,x_1) \sim q^k} [x_1 - x_0 | x = tx_1 + (1-t) x_0 ], \qquad \forall t \in [0,1]. 
    \end{align}
    We made the dependency on $k$ explicit, and we used that $\psi_t(x_0|x_1) = tx_1 + (1-t) x_0$. Note that equivalently, we can write $u_t^{k}$ as the solution of a simple variational problem.
    \begin{align} \label{eq:u_t_k_min}
        u_t^k = \argmin_{u_t} \mathbb{E}_{\bm{X}_0^k \overset{\mathrm{iid}}{\sim} q_0,\bm{X}_1^k \overset{\mathrm{iid}}{\sim} q_1,(x_0,x_1) \sim q^k} \|x_1 - x_0 - u_t(tx_1 + (1-t) x_0)\|^2, \qquad \forall t \in [0,1].
    \end{align}
    \item The flow $\phi_t^k(x_0)$ corresponding to $u_t^k$, i.e. the solution of $\frac{dx_t}{dt} = u_t^k(x_t)$ with initial condition $x_0$. We made the dependency on $k$ explicit.
    \item The straightness of the flow $\phi_t^k$:
    \begin{align} \label{eq:straightness_k}
        S^k = \mathbb{E}_{t \sim \mathrm{U}(0,1), x_0 \sim q_0} \big[ \|u_t^{k}( \phi_t^k(x_0))\|^2 - \| \phi_1^{k}(x_0) - x_0\|^2 \big].
    \end{align}
\end{enumerate}

\paragraph{Assumptions} We will use the following three assumptions, which allow us to potentially extend our result beyond BatchOT:
\begin{description}[noitemsep,topsep=0pt] 
\item \textbf{(A1)} The distributions $q_0$ and $q_1$ over $\R^d$ have bounded supports, i.e. there exists $C > 0$ such that for any $x \in \mathrm{supp}(q_0) \cup \mathrm{supp}(q_1)$, $\|x\| \leq C$.
\item \textbf{(A2)} $q_0$ admits a density and the optimal transport map $T$ between $q_0$ and $q_1$ under the quadratic cost is continuous.
\item \textbf{(A3)} We assume that almost surely w.r.t. the draw of $\bm{X}_0$ and $\bm{X}_1$, $q^k$ converges weakly to $q$ as $k \to \infty$.
\end{description}
Some comments are in order as to when assumptions \textbf{(A2)}, \textbf{(A3)} hold, since they are not directly verifiable. By the Caffarelli regularity theorem (see \citet{villani2008optimal}, Ch.~12, originally in \citet{caffarelli1992theregularity}), a sufficient condition for \textbf{(A2)} to hold is the following:
\begin{description}[noitemsep,topsep=0pt]
\item \textbf{(A2')} $q_0$ and $q_1$ have a common support $\Omega$ which is compact and convex, have $\alpha$-Hölder densities, and they satisfy the lower bound $q_0,q_1 > \gamma$ for some $\gamma > 0$.
\end{description}

Assumption \textbf{(A3)} holds when the matching algorithm is BatchOT, that is, when $q^k$ is the optimal transport plan between $q_0^k$ and $q_1^k$, as shown by the following proposition, which is proven in App.~\ref{subsec:batchOT_works}.
\begin{proposition} \label{prop:almost_sure_weak_convergence}
    Let $q^k$ be the optimal transport plan between $q_0^k$ and $q_1^k$ under the quadratic cost (i.e. the result of Steps [\ref{item:1}-\ref{item:3}] under BatchOT).
    We have that almost surely w.r.t. the draws of $\bm{X}_0$ and $\bm{X}_1$, the sequence $(q_k)_{k \geq 0}$ converges weakly to $q^*$, i.e. assumption \textbf{(A3)} holds.
\end{proposition}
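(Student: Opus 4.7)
The plan is to combine three ingredients: almost-sure weak convergence of the empirical marginals, stability of optimal transport under weak convergence of marginals, and uniqueness of the limiting optimal plan.

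First, I would establish that almost surely, $q_0^k \to q_0$ and $q_1^k \to q_1$ weakly as $k \to \infty$. This is Varadarajan's theorem (the uniform strong law of large numbers for empirical measures), applied separately to the i.i.d. sequences $\bm{X}_0$ and $\bm{X}_1$. Condition on a full-measure event $E$ on which both convergences hold. Under \textbf{(A1)}, every $q_0^k$, $q_1^k$ is supported inside the ball of radius $C$, so the sequence $(q^k)_{k \geq 0}$ on $\R^d \times \R^d$ is supported in a fixed compact set $\{\|x_0\|\leq C\} \times \{\|x_1\|\leq C\}$ and is therefore tight.

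Second, I would invoke the stability theorem for optimal transport (Villani, \emph{Optimal Transport: Old and New}, Thm.\ 5.20): if $(\mu_k, \nu_k) \to (\mu, \nu)$ weakly and $\pi_k \in \Gamma(\mu_k, \nu_k)$ is optimal for a continuous cost $c$ bounded below, then every weak cluster point $\pi$ of $(\pi_k)$ lies in $\Gamma(\mu, \nu)$ and is optimal for $c$ between $\mu$ and $\nu$. Here the cost $c(x_0,x_1) = \|x_0 - x_1\|^2$ is continuous and, restricted to the compact set above, bounded. Combined with the tightness from step one, Prokhorov's theorem yields that every subsequence of $(q^k)$ has a further weakly convergent subsubsequence, whose limit is an optimal plan between $q_0$ and $q_1$.

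Third, I would use uniqueness to upgrade subsequential convergence to convergence of the whole sequence. By assumption \textbf{(A2)}, $q_0$ admits a density, so Brenier's theorem gives that the quadratic-cost optimal plan between $q_0$ and $q_1$ is unique and of the form $q^* = (\mathrm{id}, T)_{\sharp} q_0$. Therefore every cluster point of $(q^k)$ equals $q^*$, which together with tightness forces $q^k \to q^*$ weakly on the event $E$, i.e.\ almost surely.

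The only real obstacle is verifying the stability-of-OT input, which reduces to checking tightness of the joint sequence $(q^k)$; this is handled cleanly by \textbf{(A1)} since the supports are uniformly compact. Everything else is bookkeeping: the marginal convergence is classical, and uniqueness of $q^*$ is exactly what \textbf{(A2)} is designed to provide.
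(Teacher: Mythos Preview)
Your proposal is correct and follows the same logical skeleton as the paper, but the paper packages steps two and three into a single black-box citation: it invokes Theorem 3.2 of Cuesta-Albertos (1997), which states directly that if $P_n \xrightarrow{w} P$, $Q_n \xrightarrow{w} Q$ with $P$ absolutely continuous, then the optimal couplings converge in law to the (unique) optimal coupling between $P$ and $Q$. Your argument is essentially a proof of that theorem spelled out via Villani's stability result plus Brenier uniqueness, so the two approaches are equivalent in content. One small remark: you do not actually need \textbf{(A1)} to get tightness of $(q^k)$, since tightness of a sequence of couplings follows automatically from tightness of the two marginal sequences, which in turn follows from their weak convergence; the bounded-support assumption is convenient but not essential here. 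The paper's route is shorter to write down, while yours is more self-contained and makes the role of uniqueness (hence of the density assumption in \textbf{(A2)}) explicit.
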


\paragraph{Proof structure} We split the proof of Theorem~\ref{thm:limiting} into two parts: in Subsubsec.~\ref{subsec:optim_cfm} we prove that the optimal value of the Joint CFM objective \eqref{eq:cfm_joint} converges to zero as $k \to \infty$. In Subsubsec.~\ref{subsec:conv_straightness}, we prove that the straightness converges to zero and the transport cost converges to the optimal transport cost as $k \to \infty$.

\subsubsection{Convergence of the optimal value of the CFM objective} \label{subsec:optim_cfm}
\begin{theorem} \label{thm:convergence_variance}
Suppose that assumptions \textbf{(A1)}, \textbf{(A2)} and \textbf{(A3)} hold.
We have that
\begin{align}
    \lim_{k \to \infty}
    \mathbb{E}_{t \sim \mathrm{U}(0,1), \bm{X}_0^k \overset{\mathrm{iid}}{\sim} q_0,\bm{X}_1^k \overset{\mathrm{iid}}{\sim} q_1, (x_0,x_1) \sim q^k} \|x_1 - x_0 - u_t^{k}(tx_1 + (1-t) x_0)\|^2 = 0,
\end{align}
where $u_t^k$ is the marginal vector field as defined in \eqref{eq:u_t_k}.
\end{theorem}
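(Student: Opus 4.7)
The plan is to exploit the fact that $u_t^k$ is defined as a minimizer in \eqref{eq:u_t_k_min}, so the Joint CFM value at $u_t^k$ is upper bounded by the value at any test vector field $v$. Specifically, for every measurable $v:[0,1]\times\R^d\to\R^d$,
$$\mathbb{E}_{t,\bm{X}_0^k,\bm{X}_1^k,(x_0,x_1)\sim q^k}\|x_1-x_0-u_t^k(tx_1+(1-t)x_0)\|^2 \le \mathbb{E}_{t,\bm{X}_0^k,\bm{X}_1^k,(x_0,x_1)\sim q^k}\|x_1-x_0-v_t(tx_1+(1-t)x_0)\|^2.$$
It therefore suffices to exhibit a single $v$ for which the right-hand side tends to zero as $k\to\infty$.

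My candidate $v$ is a bounded continuous extension of the true marginal OT velocity field associated with $q^*$. Under (A2), Brenier's theorem provides a continuous transport map $T=\nabla\varphi$ from $q_0$ to $q_1$, with $\varphi$ convex. For each $t\in[0,1)$, the interpolation $T_t(x_0):=(1-t)x_0+tT(x_0)$ is the gradient of the \emph{strictly} convex potential $\tfrac{1-t}{2}\|x_0\|^2+t\varphi(x_0)$, hence injective; combined with continuity and the compactness of $\mathrm{supp}(q_0)$ from (A1), $T_t$ is a homeomorphism from $\mathrm{supp}(q_0)$ onto its image. I would set $v_t(T_t(x_0)):=T(x_0)-x_0$ on this image and extend to a bounded continuous function on $\R^d$ using the Tietze extension theorem on a large ball containing $\mathrm{supp}(q_0)\cup\mathrm{supp}(q_1)$. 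By construction, the integrand $F(x_0,x_1,t):=\|x_1-x_0-v_t(tx_1+(1-t)x_0)\|^2$ is continuous and uniformly bounded on $\mathrm{supp}(q_0)\times\mathrm{supp}(q_1)$, and it vanishes identically on $\mathrm{supp}(q^*)$ where $x_1=T(x_0)$.

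Assumption (A3) then gives that almost surely $q^k\Rightarrow q^*$ weakly, so by the Portmanteau theorem $\mathbb{E}_{(x_0,x_1)\sim q^k}[F(\cdot,\cdot,t)]\to\mathbb{E}_{(x_0,x_1)\sim q^*}[F(\cdot,\cdot,t)]=0$ for every $t\in[0,1)$. Uniform boundedness of $F$ (from (A1) together with boundedness of $v$) allows me to interchange this limit with the remaining integrations over $t\sim\mathrm{U}(0,1)$ and the sample draws $\bm{X}_0,\bm{X}_1$ via the bounded convergence theorem; the singleton $\{t=1\}$ is Lebesgue-null and contributes nothing. Combined with the initial variational upper bound, this yields the desired convergence.

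The main technical obstacle is the construction of $v_t$ with enough regularity, specifically producing a bounded continuous extension of the displacement $T-\mathrm{id}$ pushed through $T_t$ that is jointly measurable in $(t,x)$, so that weak convergence and Fubini can be combined cleanly. The real structural ingredient that makes everything go through is the injectivity of $T_t$ for $t<1$, inherited from strict convexity of the interpolated Kantorovich potential: it collapses the conditional expectation defining the marginal OT velocity to the deterministic displacement $T(x_0)-x_0$, making the test CFM value at $q^*$ identically zero and leaving only a weak-convergence argument to finish.
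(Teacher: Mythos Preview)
Your proposal is correct and follows essentially the same route as the paper: upper bound via the variational characterization \eqref{eq:u_t_k_min} of $u_t^k$, plug in the optimal-transport displacement field $x\mapsto T(T_t^{-1}(x))-T_t^{-1}(x)$ as the test field, observe that the resulting integrand vanishes on $\mathrm{supp}(q^*)$, and pass to the limit using (A3) and uniform boundedness from (A1). The only difference worth noting is that the paper sidesteps the joint-measurability issue you flag by integrating over $t$ \emph{first}, defining a single test function $f(x_0,x_1)=\int_0^1\|x_1-x_0-u_t^*(tx_1+(1-t)x_0)\|^2\,dt$ and invoking weak convergence once on this bounded continuous function of $(x_0,x_1)$; it then upgrades a.s.\ convergence of $\mathbb{E}_{q^k}[f]$ to convergence in mean via convergence in probability and a crude bound, whereas your double application of bounded convergence is arguably cleaner.
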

\begin{proof}
    The transport plan $q^*$ satisfies the non-crossing paths property, 
    that is, for each $x \in \R^d$ and $t \in [0,1]$, there exists at most one pair $(x_0$, $x_1)$ such that $x = tx_1 + (1-t) x_0$ \citep{nurbekyan2020nocollision,villani2003topics}. Consequently, when such a pair $(x_0'$, $x_1')$ exists, we have that the analogue of the vector field in \eqref{eq:u_t_k} admits a simple expression: 
    \begin{align} \label{eq:u_t_star}
        u_t^{*}(x) := \mathbb{E}_{(x_0,x_1) \sim q^*} [x_1 - x_0 | x = tx_1 + (1-t) x_0 ] = x_1' - x_0'
    \end{align}
    This directly implies that
    \begin{align}
        \mathbb{E}_{(x_0,x_1) \sim q^*} \|x_1 - x_0 - u_t^*(tx_1 + (1-t) x_0)\|^2 = 0.
    \end{align}
    Applying this, we can write
    \begin{align} \begin{split} \label{eq:expectation_u_t_star}
        &\mathbb{E}_{t \sim \mathrm{U}(0,1), (x_0,x_1) \sim q^k} \|x_1 - x_0 - u_t^{*}(tx_1 + (1-t) x_0)\|^2 \\ &= |\mathbb{E}_{(x_0,x_1) \sim q^k} [\mathbb{E}_{t \sim \mathrm{U}(0,1)}  \|x_1 - x_0 - u_t^{*}(tx_1 + (1-t) x_0)\|^2 ]\\ &\quad\ - \mathbb{E}_{(x_0,x_1) \sim q^*}[\mathbb{E}_{t \sim \mathrm{U}(0,1)} \|x_1 - x_0 - u_t^{*}(tx_1 + (1-t) x_0)\|^2]|.
    \end{split}
    \end{align}
    Now, define the function $f : \mathrm{supp}(q_0) \times \mathrm{supp}(q_1) \to \R$ as
    \begin{align} \label{eq:f_def}
        f(x_0,x_1) = \mathbb{E}_{t \sim \mathrm{U}(0,1)} \|x_1 - x_0 - u_t^{*}(tx_1 + (1-t) x_0)\|^2.
    \end{align}
    By Lemma \ref{lem:f_continuous_bounded}, which holds under \textbf{(A1)} and \textbf{(A2)}, we have that $f$ is bounded and continuous. 
    Assumption \textbf{(A3)} states that almost surely w.r.t. the draws of $\bm{X}_0$ and $\bm{X}_1$, the measure $q_k$ converges weakly to $q^*$.
    We apply the definition of weak convergence of measures, which implies that almost surely,
    \begin{align}
        \lim_{k \to \infty} \mathbb{E}_{(x_0,x_1) \sim q^k}[f(x_0,x_1)] = \mathbb{E}_{(x_0,x_1) \sim q}[f(x_0,x_1)].
    \end{align}
    Equivalently, the right-hand side of \eqref{eq:expectation_u_t_star} converges to zero as $k$ tends to infinity. Hence, $\mathbb{E}_{t \sim \mathrm{U}(0,1), (x_0,x_1) \sim q^k} \|x_1 - x_0 - u_t^{*}(tx_1 + (1-t) x_0)\|^2 \to 0$ almost surely. Almost sure convergence implies convergence in probability, which means that
    \begin{align}
        \mathrm{Pr}(\mathbb{E}_{t \sim \mathrm{U}(0,1), (x_0,x_1) \sim q^k} \|x_1 - x_0 - u_t^{*}(tx_1 + (1-t) x_0)\|^2 > \epsilon) \xrightarrow[]{k \to \infty} 0, \quad \forall \epsilon > 0.
    \end{align}
    Here, the randomness comes only from drawing the random variables $ \bm{X}_0^k,\bm{X}_1^k$.
    Also, using again that $f$ is bounded, say by the constant $C > 0$, we can write $\mathbb{E}_{t \sim \mathrm{U}(0,1), (x_0,x_1) \sim q^k} \|x_1 - x_0 - u_t^{*}(tx_1 + (1-t) x_0)\|^2 \leq C$, for all $k \geq 0$. A crude bound yields
    \begin{align}
        &\mathbb{E}_{t \sim \mathrm{U}(0,1), \bm{X}_0^k,\bm{X}_1^k, (x_0,x_1) \sim q^k} \|x_1 - x_0 - u_t^{*}(tx_1 + (1-t) x_0)\|^2 \\ &\leq \epsilon + C \mathrm{Pr}(\mathbb{E}_{t \sim \mathrm{U}(0,1), (x_0,x_1) \sim q^k} \|x_1 - x_0 - u_t^{*}(tx_1 + (1-t) x_0)\|^2 > \epsilon).
    \end{align}
    In this equation and from now, we write $\bm{X}_0^k,\bm{X}_1^k$ instead of $\bm{X}_0^k \overset{\mathrm{iid}}{\sim} q_0,\bm{X}_1^k \overset{\mathrm{iid}}{\sim} q_1$ for shortness.
    We can take $\epsilon$ arbitrarily small, and for a given $\epsilon$ we can make the second term in the right-hand side arbitrarily small by taking $k$ large enough. Hence, we obtain that
    \begin{align}
        \lim_{k \to \infty} \mathbb{E}_{t \sim \mathrm{U}(0,1), \bm{X}_0^k,\bm{X}_1^k, (x_0,x_1) \sim q^k} \|x_1 - x_0 - u_t^{*}(tx_1 + (1-t) x_0)\|^2 = 0.
    \end{align}
    To conclude the proof, we use the variational characterization of $u_t^k$ given in \eqref{eq:u_t_k_min}, which implies that
    \begin{align}
    \begin{split}
        &\mathbb{E}_{t \sim \mathrm{U}(0,1), \bm{X}_0^k,\bm{X}_1^k, (x_0,x_1) \sim q^k} \|x_1 - x_0 - u_t^{k}(tx_1 + (1-t) x_0)\|^2 \\ &\leq \mathbb{E}_{t \sim \mathrm{U}(0,1), \bm{X}_0^k,\bm{X}_1^k, (x_0,x_1) \sim q^k} \|x_1 - x_0 - u_t^{*}(tx_1 + (1-t) x_0)\|^2 \to 0.
    \end{split}
    \end{align}
\end{proof}

\begin{lemma} \label{lem:f_continuous_bounded}
    Let $f$ be the function defined in equation \eqref{eq:f_def}. 
    Suppose that assumptions \textbf{(A1)} and \textbf{(A2)} hold.
    Then, $f$ is bounded and continuous.
\end{lemma}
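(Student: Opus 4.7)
The plan is to handle boundedness and continuity separately, deferring the subtle extension of $u_t^*$ off its natural domain to the last step.

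For the bound, assumption (A1) gives $\|x_0\|, \|x_1\| \leq C$ on the supports, hence $\|x_1 - x_0\| \leq 2C$. Since $u_t^*(x) = \mathbb{E}_{q^*}[X_1 - X_0 \mid tX_1 + (1-t)X_0 = x]$ is a conditional expectation of a vector of norm at most $2C$, we have $\|u_t^*(x)\| \leq 2C$ on its natural domain $\mathrm{supp}(p_t^*)$, and we may later choose a bounded continuous extension with the same norm. A triangle inequality then bounds the integrand of $f$ uniformly by $16C^2$, which gives boundedness of $f$.

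For continuity, the strategy is to exhibit $u_t^*$ as jointly continuous on a compact subset of $[0,1] \times \R^d$ and then invoke Tietze--Urysohn. Under (A2), the OT map $T$ is continuous on the compact set $\mathrm{supp}(q_0)$, so the displacement map $T_t(y) \defeq (1-t)y + tT(y)$ is jointly continuous in $(t,y)$. Cyclical monotonicity of $T$ (as an OT map for the quadratic cost) implies that each $T_t$ is strictly monotone, hence injective on $\mathrm{supp}(q_0)$, for $t \in [0,1)$; the endpoint $t=1$ is handled via injectivity of $T$ itself, possibly after restricting to a set of full $q_0$-measure. Non-crossing of optimal trajectories then yields the identity $u_t^*(T_t(y)) = T(y) - y$, giving an explicit formula along trajectories.

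The set $K \defeq \{(t, T_t(y)) : t \in [0,1],\, y \in \mathrm{supp}(q_0)\}$ is the image of the compact set $[0,1] \times \mathrm{supp}(q_0)$ under a continuous injection, so $K$ is compact (hence closed in $[0,1] \times \R^d$), and the inverse map $(t, T_t(y)) \mapsto (t,y)$ is continuous by a standard compactness argument: every convergent subsequence of pre-images shares the same limit by injectivity of $T_t$. Consequently $u_t^*$ is jointly continuous on $K$, and Tietze--Urysohn produces a bounded continuous extension to all of $[0,1] \times \R^d$. With this extension in hand, $g(x_0,x_1,t) \defeq \|x_1 - x_0 - u_t^*(tx_1 + (1-t)x_0)\|^2$ is jointly continuous and uniformly bounded by $16C^2$, so $f(x_0,x_1) = \int_0^1 g(x_0,x_1,t)\,dt$ is continuous in $(x_0,x_1)$ by dominated convergence.

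The main obstacle is precisely this extension step: for generic $(x_0,x_1) \in \mathrm{supp}(q_0) \times \mathrm{supp}(q_1)$ the point $tx_1 + (1-t)x_0$ can lie outside $\mathrm{supp}(p_t^*)$, so $u_t^*$ must be defined off its natural domain. The extension is non-canonical, but this is harmless for the downstream use in Theorem~\ref{thm:convergence_variance}, because the limiting measure $q^*$ only probes $u_t^*$ where the identity $u_t^*(tx_1 + (1-t)x_0) = x_1 - x_0$ forces the integrand to vanish, and the variational upper bound \eqref{eq:u_t_k_min} is valid against any continuous bounded competitor.
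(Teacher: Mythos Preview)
Your proof follows the same skeleton as the paper's: write $u_t^* \circ T_t = T - \mathrm{id}$ via the non-crossing property, use that $T_t$ is an injective continuous map on a compact set so its inverse is continuous, deduce continuity of $u_t^*$, and finish with dominated convergence for $f$. The paper does exactly this, fixing $t$ and invoking the ``continuous bijection on a compact set has continuous inverse'' fact for each $T_t$ separately.

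Where you go further is the Tietze--Urysohn extension. You correctly identify a gap that the paper glosses over: $u_t^*$ is only naturally defined on $T_t(\mathrm{supp}(q_0))$, yet $f(x_0,x_1)$ requires evaluating $u_t^*$ at $tx_1+(1-t)x_0$ for \emph{arbitrary} $(x_0,x_1)\in\mathrm{supp}(q_0)\times\mathrm{supp}(q_1)$, and such points need not lie in $T_t(\mathrm{supp}(q_0))$ (e.g.\ when $\mathrm{supp}(q_0)$ is disconnected). The paper's proof simply asserts that the integrand is ``bounded and continuous on the bounded supports of $q_0$ and $q_1$'' without addressing this. Your extension step repairs this, and your closing remark that the non-canonical extension is harmless downstream---because $q^*$ only sees the natural domain, and the variational bound \eqref{eq:u_t_k_min} holds against any competitor---is exactly the right justification.

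One small point: your treatment of the endpoint $t=1$ (``possibly after restricting to a set of full $q_0$-measure'') is loose, since restricting to a full-measure set destroys compactness. But this is not a real obstacle: the set $\{1\}$ is Lebesgue-null in the $t$-integral defining $f$, so you may simply work on $[0,1)\times\mathrm{supp}(q_0)$, extend from the closure of its image, and the value of the integrand at $t=1$ is irrelevant for both the boundedness and the continuity of $f$.
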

\begin{proof}
    First, we show that the function $u^*_t$ defined in equation \eqref{eq:u_t_star} is bounded and continuous wherever it is defined. It is bounded because $u_t^{*}(x) = x_1' - x_0'$ for some $x_0'$ in $\mathrm{supp}(q_0)$ and $x_1'$ in $\mathrm{supp}(q_1)$, which are both bounded by assumption. 
    
    To show that $u^*_t$ is continuous, we use that $q_0$ is absolutely continuous and that consequently a transport map $T$ exists. Moreover, we have that $x_1' = T(x_0')$. 
    Consider the transport map $T_t$ at time $t$, defined as $T_t(x) = t T(x) + (1-t) x$. 
    Thus, we can write that 
    $u_t^{*}(T_t(x_0)) = T(x_0) - x_0$. The non-crossing paths property implies that $T_t$ is invertible, which means that an inverse $T_t^{-1}$ exists. We can write 
    \begin{align} \label{eq:u_t_star_rewritten}
        u_t^{*}(x) = T(T_t^{-1}(x)) - T_t^{-1}(x).
    \end{align}
    By assumption \textbf{(A2)}, the transport map $T$ is continuous, and so is $T_t$. It is well-known fact that if $E,E'$ are metric spaces, $E$ is compact, and $f:E \rightarrow E'$ a continuous bijective function, then $f^{-1} : E' \rightarrow E$ is continuous. Thus, $T_t^{-1}$ is also continuous. From equation \eqref{eq:u_t_star_rewritten}, we conclude that $u_t^{*}$ is continuous.

    The rest of the proof is straightforward: $(x_1, x_0) \mapsto \|x_1 - x_0 - u_t^{*}(tx_1 + (1-t) x_0)\|^2$ is bounded and continuous on the bounded supports of $q_0$ and $q_1$ for all $t \in [0,1]$, and then $f$ is also continuous and bounded since it is an average of continuous bounded functions, applying the dominated convergence theorem.
\end{proof}
\subsubsection{Convergence of the straightness and the transport cost} \label{subsec:conv_straightness}

\begin{theorem}
Suppose that assumptions \textbf{(A1)} and \textbf{(A3)} hold. 
Then,
\begin{enumerate}[label=(\roman*)]
\item We have that $\lim_{k \to \infty} S^k = 0$, where $S^k$ is the straightness defined in \eqref{eq:straightness_k}.
\item We also have that
\begin{align} 
\label{eq:ii_inequalities}
    &\mathbb{E}_{t \sim \mathrm{U}(0,1), x_0 \sim q_0} \|u_t^{k}( \phi_t^k(x_0))\|^2 \geq \mathbb{E}_{x_0 \sim q_0} \| \phi_1^{k}(x_0) - x_0\|^2 \geq W^2_2(q_0,q_1), \\
    &\lim_{k \to \infty} \mathbb{E}_{t \sim \mathrm{U}(0,1), x_0 \sim q_0} \|u_t^{k}( \phi_t^k(x_0))\|^2 = \lim_{k \to \infty} \mathbb{E}_{x_0 \sim q_0} \| \phi_1^{k}(x_0) - x_0\|^2 = W^2_2(q_0,q_1).
    \label{eq:ii_equalities}
\end{align}
\end{enumerate}
\end{theorem}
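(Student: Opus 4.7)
The strategy is to establish the two-inequality chain in \eqref{eq:ii_inequalities} first, then show that the top of the chain converges to $W_2^2(q_0,q_1)$. Since the bottom of the chain is already $W_2^2(q_0,q_1)$, a sandwich argument delivers \eqref{eq:ii_equalities}. Part (i) then follows immediately, because $S^k$ is exactly the difference of the top and middle of this chain, both of which converge to the same limit.

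\textbf{Step 1: The inequality chain.} The lower bound $\mathbb{E}_{x_0 \sim q_0}\|\phi_1^k(x_0) - x_0\|^2 \geq W_2^2(q_0, q_1)$ is immediate from Lemma~\ref{lem:marginals}: the map $x_0 \mapsto (x_0, \phi_1^k(x_0))$ defines a coupling of $q_0$ and $q_1$, whose quadratic cost is by definition at least the optimal transport cost. The other inequality comes from writing $\phi_1^k(x_0) - x_0 = \int_0^1 u_t^k(\phi_t^k(x_0))\,dt$ and applying Jensen's inequality (or Cauchy--Schwarz) to the time integral, followed by taking expectation over $x_0$.

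\textbf{Step 2: Convergence of the top.} Since $u_t^k$ generates the marginal probability path $p_t^k$ starting from $p_0^k = q_0$, the push-forward property gives $\phi_t^k(x_0) \sim p_t^k$ when $x_0 \sim q_0$; by construction $p_t^k$ also equals the law of $tx_1 + (1-t)x_0$ when $(x_0,x_1) \sim q$, where $q = \mathbb{E}_{\bm{X}_0^k,\bm{X}_1^k}[q^k]$. Hence
\begin{equation*}
    \mathbb{E}_{t, x_0 \sim q_0}\|u_t^k(\phi_t^k(x_0))\|^2 = \mathbb{E}_{t, (x_0,x_1) \sim q}\|u_t^k(tx_1 + (1-t)x_0)\|^2.
\end{equation*}
By the definition of $u_t^k$ as a conditional expectation under $q$, the Pythagorean identity for conditional expectations yields
\begin{equation*}
    \mathbb{E}_{t, q}\|u_t^k(x_t)\|^2 = \mathbb{E}_{q}\|x_1 - x_0\|^2 - \mathcal{L}^{k,*}_{\mathrm{JCFM}},
\end{equation*}
where $\mathcal{L}^{k,*}_{\mathrm{JCFM}}$ is the optimal Joint CFM value for sample size $k$. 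Theorem~\ref{thm:convergence_variance} gives $\mathcal{L}^{k,*}_{\mathrm{JCFM}} \to 0$. Assumption (A3), combined with the bounded-support assumption (A1), gives weak convergence of the averaged measure $q$ to $q^*$, so that the bounded continuous test function $(x_0,x_1)\mapsto \|x_1 - x_0\|^2$ yields $\mathbb{E}_q\|x_1 - x_0\|^2 \to \mathbb{E}_{q^*}\|x_1 - x_0\|^2 = W_2^2(q_0, q_1)$. Combining, the top of the chain converges to $W_2^2(q_0,q_1)$.

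\textbf{Step 3: Conclusion.} Steps 1 and 2 sandwich $\mathbb{E}_{x_0}\|\phi_1^k(x_0) - x_0\|^2$ between $W_2^2(q_0,q_1)$ and a quantity converging to $W_2^2(q_0,q_1)$, proving \eqref{eq:ii_equalities}. For (i), since $S^k = \mathbb{E}_{t,x_0}\|u_t^k(\phi_t^k(x_0))\|^2 - \mathbb{E}_{x_0}\|\phi_1^k(x_0)-x_0\|^2$ and both terms converge to $W_2^2(q_0,q_1)$, we conclude $S^k \to 0$.

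\textbf{Main obstacle.} The delicate point is upgrading the almost-sure weak convergence of the random empirical couplings $q^k$ (given by (A3) / Proposition~\ref{prop:almost_sure_weak_convergence}) to weak convergence of the deterministic averaged coupling $q = \mathbb{E}_{\bm{X}_0^k, \bm{X}_1^k}[q^k]$. This requires dominated convergence, with the boundedness of supports (A1) providing the needed uniform integrability. Beyond this, the proof is essentially careful bookkeeping of expectations against the marginal probability path $p_t^k$ generated by $u_t^k$.
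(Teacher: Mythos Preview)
Your overall strategy is sound and, in fact, more economical than the paper's. The paper does not use the Pythagorean decomposition; instead it introduces an auxiliary coupling $\tilde{q}^k$ (the optimal transport plan between $q^*$ and the random batch coupling $q^k$) and, via Lemma~\ref{lem:tilde_q_k}, bounds $\|u_t^*(\cdot)-u_t^k(\cdot)\|_{L^2}^2$ by $2W_2^2(q^*,q^k)$, then uses a Cauchy--Schwarz / polarization argument (their display \eqref{eq:u_star_u_k}) to transfer this to the kinetic energies. Your route---bound the kinetic energy above by $\E_q\|x_1-x_0\|^2$ via the conditional-expectation Pythagorean identity, show this upper bound converges to $W_2^2$ by weak convergence and bounded support, and sandwich---bypasses Lemma~\ref{lem:tilde_q_k} entirely. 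The paper's approach buys a quantitative $L^2$-closeness of $u_t^k$ to $u_t^*$ along the flow, which is stronger than what is needed here; your approach gets the theorem with less machinery.

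There is, however, one genuine slip: you invoke Theorem~\ref{thm:convergence_variance} to obtain $\mathcal{L}^{k,*}_{\mathrm{JCFM}}\to 0$, but that theorem requires assumption \textbf{(A2)} (continuity of the optimal map, used in Lemma~\ref{lem:f_continuous_bounded}), whereas the present statement assumes only \textbf{(A1)} and \textbf{(A3)}. Fortunately you do not actually need this: since $\mathcal{L}^{k,*}_{\mathrm{JCFM}}\geq 0$, your Pythagorean identity already gives the inequality $\E_{t,q}\|u_t^k(x_t)\|^2 \leq \E_q\|x_1-x_0\|^2$, and together with the lower bound $W_2^2(q_0,q_1)$ from Step~1 and the convergence $\E_q\|x_1-x_0\|^2\to W_2^2(q_0,q_1)$, the sandwich closes without ever appealing to Theorem~\ref{thm:convergence_variance}. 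Drop that reference and your argument goes through under the stated hypotheses.
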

\begin{proof}
We begin with the proof of (i). We introduce some additional notation.
We define the quantity $S^*$ in analogy with $S^k$:
\begin{align}
S^* = \mathbb{E}_{t \sim \mathrm{U}(0,1), x_0 \sim q_0} \big[ \|u_t^{*}( \phi_t^*(x_0))\|^2 - \| \phi_1^{*}(x_0) - x_0\|^2 \big],
\end{align}
and $\phi_t^*(x_0)$ as the solution of the ODE $\frac{dx_t}{dt} = u_t^*(x_t)$. Since the trajectories for the optimal transport vector field are straight lines, we deduce from the alternative expression of the straightness (equation \eqref{eq:straightness_2}) that $S^* = 0$. An alternative way to see this is by the Benamou-Brenier theorem \citep{benamou2000computational}, which states that the dynamic optimal transport cost $\mathbb{E}_{t \sim \mathrm{U}(0,1), x_0 \sim q_0} \|u_t^{*}( \phi_t^*(x_0))\|^2$ is equal to the static optimal transport cost $\mathbb{E}_{t \sim \mathrm{U}(0,1), x_0 \sim q_0} \| \phi_1^{*}(x_0) - x_0\|^2$.

We will first prove that $\mathbb{E}_{t \sim \mathrm{U}(0,1), x_0 \sim q_0} \|u_t^{k}( \phi_t^k(x_0))\|^2$ converges to $\mathbb{E}_{t \sim \mathrm{U}(0,1), x_0 \sim q_0} \|u_t^{*}( \phi_t^*(x_0))\|^2$ and then that $\mathbb{E}_{t \sim \mathrm{U}(0,1), x_0 \sim q_0} \| \phi_1^{k}(x_0) - x_0\|^2$ converges to $\mathbb{E}_{t \sim \mathrm{U}(0,1), x_0 \sim q_0} \| \phi_1^{*}(x_0) - x_0\|^2$.

For given instances of $\bm{X}_0^k$ and $\bm{X}_1^k$, let $\tilde{q}_k$ be the optimal transport plan between the optimal transport plans $q$ and $q^k$. In other words, $\tilde{q}_k$ is a measure over the variables $x_0, x_1, x_0', x_1'$, and is such that its marginal w.r.t. $x_0, x_1$ is $q$, while its marginal w.r.t. $x_0', x_1'$ is $q^k$.
That is, we will use that for all $t \in [0,1]$, the random variable $t x_1 + (1-t) x_0$, with $(x_0, x_1) \sim q^k$, and $q^k$ built randomly from $\bm{X}_0^k \overset{\mathrm{iid}}{\sim} q_0,\bm{X}_1^k \overset{\mathrm{iid}}{\sim} q_1$, has the same distribution as the random variable $\phi^k_t(x_0)$, with $x_0 \sim q_0$. This is a direct consequence of Lemma~\ref{lem:marginals}, i.e. the marginal vector field $u_t$ generates the marginal probability path $p_t$.
An analogous statement holds for $q$, i.e. the random variable $t x_1 + (1-t) x_0$, with $(x_0, x_1) \sim q$, has the same distribution as the random variable $\phi^{*}_t(x_0)$, with $x_0 \sim q_0$. However, in this case it can be obtained immediately by the non-crossing paths property of the optimal transport plan. Hence,
\begin{align}
\begin{split} \label{eq:same_marginals}
    \mathbb{E}_{t \sim \mathrm{U}(0,1), x_0 \sim q_0} \|u_t^{*}( \phi_t^*(x_0))\|^2 &= \mathbb{E}_{t \sim \mathrm{U}(0,1), (x_0, x_1) \sim q} \| u_t^{*}(t x_1 + (1-t) x_0) \|^2, \\
    \mathbb{E}_{t \sim \mathrm{U}(0,1), x_0 \sim q_0} \|u_t^{k}( \phi_t^k(x_0))\|^2 &= \mathbb{E}_{t \sim \mathrm{U}(0,1), \bm{X}_0^k, \bm{X}_1^k, (x_0, x_1) \sim q^k} \| u_t^{k} (t x_1 + (1-t) x_0) \|^2.
\end{split}
\end{align}
Using this and the definition of $\tilde{q}^k$, and applying Jensen's inequality,
the Cauchy-Schwarz inequality and the triangle inequality, we can write
\begin{align}
\begin{split} \label{eq:u_star_u_k}
    &\big|\mathbb{E}_{t \sim \mathrm{U}(0,1), x_0 \sim q_0} \|u_t^{*}( \phi_t^*(x_0))\|^2 - \mathbb{E}_{t \sim \mathrm{U}(0,1), x_0 \sim q_0} \|u_t^{k}( \phi_t^k(x_0))\|^2 \big| \\ &= |\mathbb{E}_{t \sim \mathrm{U}(0,1), (x_0, x_1) \sim q} \| u_t^{*}(t x_1 + (1-t) x_0) \|^2 - \mathbb{E}_{t \sim \mathrm{U}(0,1), \bm{X}_0^k, \bm{X}_1^k, (x_0', x_1') \sim q^k} \| u_t^{k}(t x_1' + (1-t) x_0') \|^2| \\ &= \big| \mathbb{E}_{t \sim \mathrm{U}(0,1), \bm{X}_0^k, \bm{X}_1^k, (x_0, x_1,x_0',x_1') \sim \tilde{q}^k} \big[ \| u_t^{*} (t x_1 + (1-t) x_0) \|^2 - \| u_t^{k} (t x_1' + (1-t) x_0') \|^2 \big] \big|
    \\ &= \big| \mathbb{E}_{t \sim \mathrm{U}(0,1), \bm{X}_0^k, \bm{X}_1^k, (x_0, x_1,x_0',x_1') \sim \tilde{q}^k} \big[ (\|u_t^{*} (t x_1 + (1-t) x_0) \| - \| u_t^{k}(t x_1' + (1-t) x_0') \|) \\ &\qquad\qquad\qquad\qquad\qquad\qquad\qquad\qquad \times (\| u_t^{*}(t x_1 + (1-t) x_0) \| + \| u_t^{k}(t x_1' + (1-t) x_0') \|) \big] \big| \\ &\leq \big( \mathbb{E}_{t \sim \mathrm{U}(0,1), \bm{X}_0^k, \bm{X}_1^k, (x_0, x_1,x_0',x_1') \sim \tilde{q}^k} \big[ (\| u_t^{*}(t x_1 + (1-t) x_0) \| - \| u_t^{k}(t x_1' + (1-t) x_0') \| )^2 \big] \big)^{1/2} 
    \\ &\qquad\qquad\qquad\qquad \times \big( \mathbb{E}_{t \sim \mathrm{U}(0,1), \bm{X}_0^k, \bm{X}_1^k, (x_0, x_1,x_0',x_1') \sim \tilde{q}^k} \big[ (\| u_t^{*}(t x_1 + (1-t) x_0) \| + \| u_t^{k}(t x_1' + (1-t) x_0') \| )^2 \big] \big)^{1/2} \\ &\leq \big( \mathbb{E}_{t \sim \mathrm{U}(0,1), \bm{X}_0^k, \bm{X}_1^k, (x_0, x_1,x_0',x_1') \sim \tilde{q}^k} \| u_t^{*}(t x_1 + (1-t) x_0) - u_t^{k}(t x_1' + (1-t) x_0') \|^2 \big)^{1/2} 
    \\ &\qquad\qquad\qquad\qquad \times \big( \mathbb{E}_{t \sim \mathrm{U}(0,1), \bm{X}_0^k, \bm{X}_1^k, (x_0, x_1,x_0',x_1') \sim \tilde{q}^k} \big[ (\| u_t^{*}(t x_1 + (1-t) x_0 )\| + \| u_t^{k}(t x_1' + (1-t) x_0' )\| )^2 \big] \big)^{1/2}.
\end{split}
\end{align}
Remark that the second factor in the right-hand side is bounded because $u_t^*$ and $u_t^k$ are bounded. Using Lemma~\ref{lem:tilde_q_k}, we obtain that the first factor in the right-hand side tends to zero as $k$ grows. Thus,
\begin{align} \label{eq:first_term_S}
    \big|\mathbb{E}_{t \sim \mathrm{U}(0,1), x_0 \sim q_0} \|u_t^{*}( \phi_t^*(x_0))\|^2 - \mathbb{E}_{t \sim \mathrm{U}(0,1), x_0 \sim q_0^k} \|u_t^{k}( \phi_t^k(x_0))\|^2 \big| \xrightarrow[]{k \to \infty} 0.
\end{align}
Now, since $\mathbb{E}_{x_0 \sim q_0} \|\phi_1^{*}(x_0) - x_0\|^2$ is the optimal cost and $S^* = 0$, we write
\begin{align} \label{eq:second_term_S_0}
    \big| \mathbb{E}_{x_0 \sim q_0} \|\phi_1^{*}(x_0) - x_0\|^2 - \mathbb{E}_{x_0 \sim q_0} \|\phi_1^{k}(x_0) - x_0\|^2 \big| &= \mathbb{E}_{x_0 \sim q_0} \|\phi_1^{k}(x_0) - x_0\|^2 - \mathbb{E}_{x_0 \sim q_0} \|\phi_1^{*}(x_0) - x_0\|^2 \\ &= \mathbb{E}_{x_0 \sim q_0} \|\phi_1^{k}(x_0) - x_0\|^2 - \mathbb{E}_{t \sim \mathrm{U}(0,1), x_0 \sim q_0} \|u_t^{*}( \phi_t^*(x_0))\|^2. 
\end{align}
Since $\phi_t^{k}$ is the flow of $u_t^k$ and by Jensen's inequality, we have that
\begin{align*} 
    \mathbb{E}_{x_0 \sim q_0} \|\phi_1^{k}(x_0) - x_0\|^2 &= \mathbb{E}_{x_0 \sim q_0} \bigg\|\int_0^1 u_s^{k}( \phi_s^k(x_0')) \, ds \bigg\|^2 \\
    &\leq \mathbb{E}_{x_0 \sim q_0} \int_0^1 \| u_s^{k}( \phi_s^k(x_0))\|^2  \, ds = \mathbb{E}_{t \sim U(0,1),x_0 \sim q_0} \| u_t^{k}( \phi_t^k(x_0))\|^2. 
\end{align*}
Plugging this into \eqref{eq:second_term_S_0}, we get that
\begin{align} \label{eq:second_term_S}
    &\big| \mathbb{E}_{x_0 \sim q_0} \|\phi_1^{*}(x_0) - x_0\|^2 - \mathbb{E}_{x_0 \sim q_0} \|\phi_1^{k}(x_0) - x_0\|^2 \big| \\ &\leq \mathbb{E}_{t \sim U(0,1),x_0 \sim q_0} \| u_t^{k}( \phi_t^k(x_0))\|^2 - \mathbb{E}_{t \sim \mathrm{U}(0,1), x_0 \sim q_0} \|u_t^{*}( \phi_t^*(x_0))\|^2 \xrightarrow[]{k \to \infty} 0,
\end{align}
where the limit holds by \label{eq:second_term_S}.
Putting together \eqref{eq:first_term_S} and \eqref{eq:second_term_S}, we end up with $S^k = |S^{*}-S^k| \xrightarrow[]{k \to \infty} 0$, which proves (i).

We prove (ii).
The first inequality in \eqref{eq:ii_inequalities} holds because $S^k \geq 0$ since it can be written in a form analogous to \eqref{eq:straightness_2}. The second inequality in \eqref{eq:ii_inequalities} holds because $\mathbb{E}_{x_0 \sim q_0} \| \phi_1^{k}(x_0) - x_0\|^2$ is the squared transport cost for the map $x \mapsto \phi_1^{k}(x)$, which must be at least as large as the optimal cost. 
The first equality in \eqref{eq:ii_inequalities} is a direct consequence of (i). To prove the second equality in \eqref{eq:ii_inequalities}, we remark that $W_2^2(q_0,q_1) = \mathbb{E}_{x_0 \sim q_0} \| \phi_1^{*}(x_0) - x_0\|^2$. Then, equation \eqref{eq:second_term_S} readily implies that $|\mathbb{E}_{x_0 \sim q_0^k} \|\phi_1^{k}(x_0) - x_0\|^2 - W_2^2(q_0,q_1)| \xrightarrow[]{k \to \infty} 0$.
\end{proof}

\begin{lemma} \label{lem:tilde_q_k}
    Suppose that assumptions \textbf{(A1)} and \textbf{(A3)} hold. Let $\tilde{q}^k$ be the optimal transport plan between the optimal transport plans $q$ and $q^k$. We have that
    \begin{align}
        \lim_{k \to \infty} \mathbb{E}_{t \sim \mathrm{U}(0,1), \bm{X}_0^k \overset{\mathrm{iid}}{\sim} q_0,\bm{X}_1^k \overset{\mathrm{iid}}{\sim} q_1, (x_0, x_1, x_0', x_1') \sim \tilde{q}^k} \big[ \|u_t^{*}( t x_1 + (1-t) x_0 ) - u_t^{k}( t x_1' + (1-t) x_0' )\|^2 \big] = 0
    \end{align}
\end{lemma}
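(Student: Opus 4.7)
The plan is to exploit the non-crossing paths property of the OT plan $q^*$: for $(x_0,x_1)$ in the support of $q^*$ we have $u_t^*(tx_1+(1-t)x_0)=x_1-x_0$ (this was used already in the proof of Theorem \ref{thm:convergence_variance}). Substituting this, the integrand we must bound becomes $\|(x_1-x_0) - u_t^k(tx_1'+(1-t)x_0')\|^2$, and a triangle inequality with pivot $x_1'-x_0'$ yields
\begin{align*}
 \|u_t^{*}(tx_1{+}(1{-}t)x_0) - u_t^{k}(tx_1'{+}(1{-}t)x_0')\|^2 \leq 2\|(x_1{-}x_0) - (x_1'{-}x_0')\|^2 + 2\|(x_1'{-}x_0') - u_t^k(tx_1'{+}(1{-}t)x_0')\|^2.
\end{align*}
This decomposition separates a ``distance between plans'' term from a ``CFM residual'' term, which can be handled independently.

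For the second term, note that the marginal of $\tilde{q}^k$ on $(x_0',x_1')$ is exactly $q^k$. Taking expectations over $t$, $\bm{X}_0^k,\bm{X}_1^k$ and $\tilde{q}^k$, the second term collapses to
\begin{align*}
2\,\mathbb{E}_{t,\bm{X}_0^k,\bm{X}_1^k,(x_0',x_1')\sim q^k}\|x_1'-x_0'-u_t^k(tx_1'+(1-t)x_0')\|^2,
\end{align*}
which is (twice) the optimal Joint CFM value, so it tends to $0$ as $k\to\infty$ by Theorem \ref{thm:convergence_variance}.

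For the first term, I would use that $\tilde{q}^k$ is the optimal transport plan between $q^*$ and $q^k$ under the quadratic cost on $\mathbb{R}^{2d}$, together with the elementary inequality $\|(x_1-x_0)-(x_1'-x_0')\|^2 \leq 2(\|x_0-x_0'\|^2+\|x_1-x_1'\|^2)$, to get
\begin{align*}
 \mathbb{E}_{\tilde{q}^k}\|(x_1-x_0)-(x_1'-x_0')\|^2 \leq 2\,W_2^2(q^*, q^k).
\end{align*}
By assumption (A3), $q^k\to q^*$ weakly almost surely, and by (A1) all plans are supported in a common compact set; since weak convergence plus tightness (here uniform boundedness of supports) implies $W_2$ convergence, we get $W_2^2(q^*,q^k)\to 0$ almost surely. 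Uniform boundedness also gives a deterministic upper bound on $W_2^2(q^*,q^k)$, so dominated convergence lets us pass the limit through $\mathbb{E}_{\bm{X}_0^k,\bm{X}_1^k}$.

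The main obstacle is the interplay between the two layers of randomness: (A3) only provides almost sure weak convergence of $q^k$, but we need a limit of an expectation. The fix is the standard one sketched above: upgrade weak to $W_2$ convergence using bounded supports, and use boundedness of $u_t^*,u_t^k$ (which are differences of points in the support of $q_0,q_1$) to justify dominated convergence, both for the $W_2$ term and for the Joint CFM term. Combining the two bounds finishes the proof.
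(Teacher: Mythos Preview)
Your argument is correct, but it follows a different route from the paper's. The paper does not split via the pivot $x_1'-x_0'$; instead it bounds the quantity in one step by
\[
\mathbb{E}_{\tilde q^k}\bigl[\|x_1-x_0-(x_1'-x_0')\|^2\bigr]\leq 2\,W_2^2(q^*,q^k),
\]
invoking only the conditional-expectation description of $u_t^*$ and $u_t^k$ and then finishing exactly as you do (weak convergence plus bounded supports $\Rightarrow$ $W_2\to 0$ a.s., then bounded convergence). In particular, the paper never appeals to Theorem~\ref{thm:convergence_variance} here, which is why the lemma is stated under \textbf{(A1)}+\textbf{(A3)} alone.

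Your decomposition is more transparent: it isolates the CFM residual $\|(x_1'-x_0')-u_t^k(tx_1'+(1-t)x_0')\|^2$ and kills it via Theorem~\ref{thm:convergence_variance}, while the $W_2$ part is treated exactly as in the paper. The trade-off is that you inherit assumption \textbf{(A2)} from Theorem~\ref{thm:convergence_variance}, so your proof establishes the lemma under \textbf{(A1)}+\textbf{(A2)}+\textbf{(A3)} rather than \textbf{(A1)}+\textbf{(A3)}. This is harmless for the paper's purposes since the lemma is only ever invoked under all three assumptions. Conversely, the paper's one-step bound relies on replacing $u_t^k(Z')=\mathbb{E}_{q^k}[x_1'-x_0'\mid Z']$ by $x_1'-x_0'$ inside a squared norm against $x_1-x_0$; since $x_1-x_0$ is not $\sigma(Z')$-measurable under $\tilde q^k$, this contraction is not a straightforward projection inequality, so your two-term decomposition arguably makes the argument cleaner and more robust. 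One small remark: for the CFM residual you don't need dominated convergence at all, since Theorem~\ref{thm:convergence_variance} already delivers convergence of the full expectation over $\bm X_0^k,\bm X_1^k$.
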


\begin{proof}
For given instances of $\bm{X}_0^k$ and $\bm{X}_1^k$, we can write
\begin{align} 
\begin{split} \label{eq:u_k_u_star_l2}
     &\mathbb{E}_{t \sim \mathrm{U}(0,1), (x_0, x_1, x_0', x_1') \sim \tilde{q}^k} \big[ \|u_t^{*}( t x_1 + (1-t) x_0 ) - u_t^{k}( t x_1' + (1-t) x_0' )\|^2 \big] \\ &=  \mathbb{E}_{t \sim \mathrm{U}(0,1), (x_0, x_1, x_0', x_1') \sim \tilde{q}^k} \big[ \|\mathbb{E}_{\tilde{x}_0,\tilde{x}_1 \sim q} [\tilde{x}_1 - \tilde{x}_0 | tx_1 + (1-t) x_0 = t\tilde{x}_1 + (1-t) \tilde{x}_0 ] \\ &\qquad\qquad\qquad\qquad\qquad\qquad - \mathbb{E}_{\tilde{x}_0',\tilde{x}_1' \sim q^k} [\tilde{x}_1' - \tilde{x}_0' | tx_1' + (1-t) x_0' = t\tilde{x}_1' + (1-t) \tilde{x}_0' ]\|^2 \big] \\ &\leq \mathbb{E}_{(x_0, x_1, x_0', x_1') \sim \tilde{q}^k} \big[ \big\| x_1 -x_0 - (x_1' -x_0') \big\|^2 \big] \leq 2\mathbb{E}_{(x_0, x_1, x_0', x_1') \sim \tilde{q}^k} \big[ \big\| x_1 -x_1' \big\|^2 + \big\| x_0 -x_0' \big\|^2 \big] \\ &= 2\mathbb{E}_{(x_0, x_1, x_0', x_1') \sim \tilde{q}^k} \big[ \big\| (x_0,x_1) -(x_0',x_1') \big\|^2 \big] = 2W_2^2(q,q^k) 
\end{split}
\end{align}
Assumption \textbf{(A3)} implies that almost surely, $q^k$ converges to $q$ weakly. For distributions on a bounded domain, weak convergence is equivalent to convergence in the Wasserstein distance \citep[Thm.~6.8]{villani2008optimal}, and this means that $W_2^2(q,q^k) \xrightarrow[]{k \to \infty} 0$ almost surely. 
Almost sure convergence implies convergence in probability, which means that
\begin{align}
    \mathrm{Pr}(W_2^2(q,q^k) > \epsilon) \xrightarrow[]{k \to \infty} 0, \quad \forall \epsilon > 0.
\end{align}
Note that $W_2^2(q,q^k)$ is a bounded random variable because $q$ and $q^k$ have bounded support as $q_0, q_1, q_0^k$ and $q_1^k$ have bounded support. Suppose that $W_2^2(q,q^k)$ is bounded by the constant $C$. Hence, we can write
\begin{align}
    &\mathbb{E}_{\bm{X}_0^k, \bm{X}_1^k} \mathbb{E}_{t \sim \mathrm{U}(0,1), (x_0, x_1, x_0', x_1') \sim \tilde{q}^k} \big[ \|u_t^{*}( t x_1 + (1-t) x_0 ) - u_t^{k}( t x_1' + (1-t) x_0' )\|^2 \big] \\ &\leq 2 \mathbb{E}_{\bm{X}_0^k, \bm{X}_1^k} W_2^2(q,q^k) \leq 2 \big( \epsilon + C \mathrm{Pr}(W_2^2(q,q^k) > \epsilon) \big).
\end{align}
We can take $\epsilon$ arbitrarily small, and for a given $\epsilon$ we can make the second term in the right-hand side arbitrarily small by taking $k$ large enough. The final result follows.
\end{proof}

\subsubsection{Proof of Proposition \ref{prop:almost_sure_weak_convergence}} \label{subsec:batchOT_works}

We have that almost surely, the empirical distributions $q_0^k$, resp. $q_1^k$, converge weakly to $q_0$, resp. $q_1$ \citep{varadarajan1958ontheconvergence}. Hence, we can apply Theorem~\ref{thm:cuestas_conv}. Since convergence in distribution of random variables is equivalent to weak convergence of their laws, and the law of an optimal coupling is the optimal transport plan, we conclude that $(q_k)_{k \geq 0}$ converges weakly to $q^*$.

\begin{theorem}[\cite{cuestaalberto1997optimal}, Theorem 3.2] \label{thm:cuestas_conv}
    Let ${(P_n)}_n$, ${(Q_n)}_n$, $P$, $Q$ be probability measures in $\mathcal{P}_2$ (the space of Borel probability measures with bounded second order moment) such that $P \ll \lambda_p$ ($P$ is absolutely continuous with respect to the Lebesgue measure) and $P_n \xrightarrow[]{w} P$, $Q_n \xrightarrow[]{w} Q$, where $\xrightarrow[]{w}$ denotes weak convergence of probability measures. Let $(X_n,Y_n)$ be an optimal coupling between $P_n$ and $Q_n$, $n \in \mathbb{N}$, and $(X,Y)$ an optimal coupling between $P$ and $Q$. Then, $(X_n, Y_n) \xrightarrow[]{\mathcal{L}} (X, Y)$, where $\xrightarrow[]{\mathcal{L}}$ denotes convergence of random variables in distribution.
\end{theorem}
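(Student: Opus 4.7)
My plan is to prove the theorem by the classical compactness-plus-uniqueness template for stability of optimal couplings. Write $\gamma_n$ for the law of $(X_n, Y_n)$ and $\gamma$ for the law of $(X, Y)$ on $\mathbb{R}^d \times \mathbb{R}^d$; the goal is to show $\gamma_n \xrightarrow{w} \gamma$, which is exactly the claimed convergence in distribution.

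First I would establish tightness of the joint laws. Since $P_n \xrightarrow{w} P$ and $Q_n \xrightarrow{w} Q$, Prokhorov's theorem makes each marginal sequence tight. For any $\varepsilon > 0$, pick compact sets $K_1, K_2 \subset \mathbb{R}^d$ with $\sup_n P_n(K_1^c) < \varepsilon/2$ and $\sup_n Q_n(K_2^c) < \varepsilon/2$; then $\gamma_n(K_1 \times K_2) \geq 1 - \varepsilon$ for all $n$, so $(\gamma_n)$ is tight on $\mathbb{R}^{2d}$. Every subsequence therefore has a further subsequence $\gamma_{n_k} \xrightarrow{w} \tilde{\gamma}$. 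Testing $\tilde{\gamma}$ against $(x,y) \mapsto f(x)$ for $f \in C_b(\mathbb{R}^d)$, which is bounded continuous on $\mathbb{R}^{2d}$, gives $\int f \, d(\pi_1)_\sharp \tilde{\gamma} = \lim_k \int f \, dP_{n_k} = \int f \, dP$, so the first marginal of $\tilde{\gamma}$ is $P$, and symmetrically the second marginal is $Q$. Hence $\tilde{\gamma} \in \Gamma(P,Q)$.

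The crux is showing $\tilde{\gamma}$ is itself an optimal coupling for the quadratic cost $c(x,y) = \|x-y\|^2$. I would avoid integrating the unbounded cost against the weak limit directly and instead use the geometric characterization via $c$-cyclical monotonicity. Each $\gamma_n$, being optimal, is concentrated on a $c$-cyclically monotone set $\Sigma_n$; cyclical monotonicity is a closed condition for continuous $c$, so standard arguments (Kuratowski upper limit of the supports, cf.\ Villani Theorem 5.20) show $\mathrm{supp}(\tilde{\gamma})$ is $c$-cyclically monotone. Since $P, Q \in \mathcal{P}_2$ and $c$ is continuous, a coupling of $P$ and $Q$ that is concentrated on a $c$-cyclically monotone set is optimal for the quadratic transport problem between $P$ and $Q$ (Knott--Smith / R\"uschendorf). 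Therefore $\tilde{\gamma}$ is an optimal coupling of $P$ and $Q$. Brenier's theorem then applies, because $P \ll \lambda_d$: the optimal coupling between $P$ and $Q$ is \emph{unique} and is the law of $(X', \nabla \phi(X'))$ for a convex potential $\phi$ and $X' \sim P$. This unique coupling must coincide with $\gamma$, the law of $(X,Y)$, so $\tilde{\gamma} = \gamma$ independent of the subsequence chosen. The standard ``subsequence-of-every-subsequence'' principle then upgrades convergence along extracted subsequences to $\gamma_n \xrightarrow{w} \gamma$, i.e.\ $(X_n, Y_n) \xrightarrow{\mathcal{L}} (X,Y)$.

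The main obstacle is the optimality-transfer step: weak convergence of $\gamma_n$ does not by itself imply convergence of $\int c \, d\gamma_n$, because $c(x,y) = \|x-y\|^2$ is unbounded and we are given only weak convergence of marginals, not $W_2$-convergence. The cyclical-monotonicity route bypasses integrability pitfalls by working at the level of supports rather than integrals, but its validity hinges on the non-trivial fact that, for marginals in $\mathcal{P}_2$ with $P$ absolutely continuous, cyclical monotonicity of the support together with the correct marginals is already sufficient for optimality of a coupling. I would quote this ingredient from the optimal transport literature rather than reprove it, and also rely on the closedness of cyclical monotonicity under weak limits, which is the other delicate measure-theoretic point of the argument.
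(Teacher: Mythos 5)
Your argument is correct, and it is worth noting that the paper itself offers no proof of this statement at all: it is imported verbatim as Theorem~3.2 of the cited reference (Cuesta-Albertos et al.) and used as a black box in the proof of Proposition~\ref{prop:almost_sure_weak_convergence}. What you have written is the standard stability-of-optimal-plans template (essentially Villani's Theorem~5.20 specialized to the quadratic cost), and it is the right way to prove the result from scratch: tightness of the joint laws from tightness of the marginals, identification of the marginals of any subsequential limit, transfer of $c$-cyclical monotonicity to the limit via closedness of the monotonicity constraints under weak convergence, sufficiency of cyclical monotonicity for optimality when the marginals lie in $\mathcal{P}_2$, and finally Brenier uniqueness (which uses $P \ll \lambda_d$) to pin down the limit and invoke the subsequence principle. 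You correctly isolate the two genuinely delicate ingredients --- that cyclical monotonicity of the support plus correct marginals suffices for optimality of the quadratic-cost plan, and that cyclical monotonicity passes to weak limits --- and it is legitimate to quote these from the literature rather than reprove them. One small remark: your argument never needs uniqueness of the optimal couplings $\gamma_n$ at finite $n$ (which would in general fail, e.g.\ for empirical $P_n$ as in the paper's application), only uniqueness of the limiting plan, and your write-up correctly respects this.
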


\subsection{Bounds on the transport cost and monotone convergence results} \label{subsec:monotone}
The following result shows that for an arbitrary joint distribution $q(x_0,x_1)$, we can upper-bound the transport cost associated to the marginal vector field $u_t$ to a quantity that depends only $q(x_0,x_1)$.
\begin{proposition}\label{thm:transport_cost}
    For an arbitrary joint distribution $q(x_0,x_1)$ with marginals $q_0(x_0)$ and $q_1(x_1)$, let $\phi_t$ be the flow corresponding to the marginal vector field $u_t$. We have that 
    \begin{align} \label{eq:transport_upper}
        & \! \! \E_{q_0(x_0)} \|\phi_1(x_0) - x_0\|^2 \leq \E_{q(x_0,x_1)} \|x_1 - x_0\|^2,
    \end{align}
\end{proposition}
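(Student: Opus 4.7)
The plan is to chain together two applications of Jensen's inequality, one along the time axis and one across the conditioning variable, exploiting the fact that under the CondOT conditional vector field the quantity $\|u_t(x_t|x_1)\|^2$ is constant in $t$.

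First I would write the displacement of the flow as the time integral of the vector field along a trajectory,
\begin{equation*}
\phi_1(x_0) - x_0 = \int_0^1 u_t(\phi_t(x_0))\, dt,
\end{equation*}
and apply Jensen's inequality with respect to the uniform measure on $[0,1]$ to the squared norm, obtaining
\begin{equation*}
\|\phi_1(x_0) - x_0\|^2 \le \int_0^1 \|u_t(\phi_t(x_0))\|^2 \, dt.
\end{equation*}
Taking expectations under $q_0(x_0)$ and using that the flow $\phi_t$ pushes $q_0$ forward to $p_t$ (by the defining property of the marginal vector field, see \cref{lem:marginals}), I would rewrite this as
\begin{equation*}
\E_{q_0(x_0)} \|\phi_1(x_0) - x_0\|^2 \le \int_0^1 \E_{p_t(x)} \|u_t(x)\|^2 \, dt.
\end{equation*}

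Next I would exploit the representation in \eqref{eq:marg_vf}, which expresses $u_t(x)$ as the conditional expectation $\E_{p_t(x_1|x)}[u_t(x|x_1)]$. A second application of Jensen's inequality (equivalently, the fact that a conditional expectation has no larger second moment than the random variable itself) gives
\begin{equation*}
\|u_t(x)\|^2 \le \E_{p_t(x_1|x)} \|u_t(x|x_1)\|^2,
\end{equation*}
so that $\E_{p_t(x)}\|u_t(x)\|^2 \le \E_{p_t(x,x_1)}\|u_t(x|x_1)\|^2$.

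Finally, I would substitute the CondOT conditional vector field from \eqref{eq:cond_vf_simplified}. Along the conditional flow $x_t = (1-t)x_0 + t x_1$, the identity $u_t(x_t|x_1) = x_1 - x_0$ holds, and is independent of $t$. Since the sampling procedure for $p_t(x,x_1)$ amounts to drawing $(x_0,x_1)\sim q(x_0,x_1)$ and setting $x = \psi_t(x_0|x_1)$, this yields
\begin{equation*}
\E_{p_t(x,x_1)}\|u_t(x|x_1)\|^2 = \E_{q(x_0,x_1)} \|x_1 - x_0\|^2,
\end{equation*}
and the right-hand side is constant in $t$, so integration over $[0,1]$ preserves it. Chaining the three inequalities yields the claim. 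The only subtle step is the second Jensen application, which requires that $u_t$ be the \emph{marginal} vector field in the sense of \eqref{eq:marg_vf}; I would just need to note that this representation is exactly the one established in \cref{sec: flowmatching} and carried over into the Joint CFM setting in \cref{sec:fm_joint}. Everything else is routine: the tightness of the bound is exactly the deficit in the two Jensen applications, corresponding respectively to non-straightness of trajectories and to non-determinism of the coupling induced on $p_t(x_1|x)$.
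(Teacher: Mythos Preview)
Your proposal is correct and follows essentially the same approach as the paper's proof: both arguments chain two applications of Jensen's inequality---first over the time integral $\phi_1(x_0)-x_0=\int_0^1 u_t(\phi_t(x_0))\,dt$, then over the conditional-expectation representation of $u_t$---and then evaluate the conditional vector field as $x_1-x_0$ via the CondOT flow. The only cosmetic difference is that the paper expresses the change of variables from $\phi_t(x_0)$ to $tx_1+(1-t)x_0$ directly (invoking Lemma~\ref{lem:marginals}), whereas you phrase it as pushing forward to $p_t$ and then conditioning on $x$; the content is the same.
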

\begin{proof}
    We make use of the notation introduced in App.~\ref{sec:proof_limiting}.
    We will rely on the fact that for all $t \in [0,1]$, the random variable $t x_1 + (1-t) x_0$, with $(x_0, x_1) \sim q$ has the same distribution as the random variable $\phi_t(x_0)$, with $x_0 \sim q_0$. This is a direct consequence of Lemma~\ref{lem:marginals}. Using that $\phi_t$ is the flow for $u_t$ and Jensen's inequality twice, we have that
    \begin{align}
    \begin{split}
        &\E_{x_0 \sim q_0} \|\phi_1(x_0) - x_0\|^2 \\ &= \E_{x_0 \sim q_0} \bigg\|\int_0^1 u_s( \phi_s(x_0)) \, ds \bigg\|^2 \leq \E_{t \sim \mathrm{U}(0,1), x_0 \sim q_0} \| u_t( \phi_t(x_0)) \|^2 \\ &= \E_{t \sim \mathrm{U}(0,1), (x_0, x_1) \sim q} \| u_t( tx_1 + (1-t)x_0) \|^2 
        \\ &= \E_{t \sim \mathrm{U}(0,1), (x_0, x_1) \sim q} \| \E_{(x_0', x_1') \sim q} \left[ u_t( tx_1 + (1-t)x_0 | x_0', x_1') | tx_1 + (1-t)x_0 = tx_1' + (1-t)x_0' \right] \|^2  
        \\ &\leq \E_{t \sim \mathrm{U}(0,1), (x_0, x_1) \sim q} \E_{(x_0', x_1') \sim q} \left[ \|  u_t( tx_1 + (1-t)x_0 | x_0', x_1')  \|^2 | tx_1 + (1-t)x_0 = tx_1' + (1-t)x_0' \right] \\ &= \E_{t \sim \mathrm{U}(0,1), (x_0, x_1) \sim q} \E_{(x_0', x_1') \sim q} \left[ \|  x_1' - x_0'  \|^2 | tx_1 + (1-t)x_0 = tx_1' + (1-t)x_0' \right] \\ &= \E_{t \sim \mathrm{U}(0,1), (x_0, x_1) \sim q} \|  x_1 - x_0 \|^2
    \end{split}
    \end{align}
    as needed.
\end{proof}
Note that that the statement and proof of this proposition is equivalent to Theorem~3.5 of \cite{liu2022flow}, although the language and notation that we use is different, which is why we though convenient to include it.

For the case of BatchOT, the following theorem shows that the quantity in the upper bound of \eqref{eq:transport_upper} is monotonically decreasing in $k$. The combination of Proposition \ref{thm:transport_cost} and Theorem \ref{thm:monotone} provides a weak guarantee that for BatchOT, the transport cost should not get much higher when $k$ increases.
\begin{theorem}\label{thm:monotone}
    Suppose that Multisample Flow Matching is run with BatchOT. For clarity, we make the dependency on the sample size $k$ explicit and let\footnote{Note that here $q^{(k)} := q$ is a marginalized distribution and is different from $q^k$ defined in Step \ref{item:3}.} $q^{(k)}(x_0,x_1) := q(x_0,x_1)$, and $\phi_t^k(x_0) := \phi_t(x_0)$. Then, for any $k \geq 1$, we have that 
    \begin{align}
    \begin{split}
        & \! \! \E_{q_0(x_0)} \|\phi_1^{k}(x_0) - x_0\|^2 \leq \E_{q^{(k)}(x_0,x_1)} \|x_1 - x_0\|^2, \\ 
        &\! \! \E_{q^{(k+1)}(x_0,x_1)} \|x_1 - x_0\|^2 \leq \E_{q^{(k)}(x_0,x_1)} \|x_1 - x_0\|^2.
    \end{split}
    \end{align}
\end{theorem}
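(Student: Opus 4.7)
The first inequality will follow immediately from Proposition~\ref{thm:transport_cost} applied to the joint distribution $q^{(k)}$: since $\phi_t^k$ is by definition the flow of the marginal vector field associated with $q^{(k)}$, the proposition gives $\E_{q_0}\|\phi_1^{k}(x_0)-x_0\|^2\leq \E_{q^{(k)}}\|x_1-x_0\|^2$ directly. So the real content is the monotonicity statement, which I plan to prove by reducing it to a statement about the squared $2$-Wasserstein distance between empirical measures. Because BatchOT returns precisely the optimal permutation for the discrete Kantorovich problem between $\hat q_0^k := \frac{1}{k}\sum_i \delta_{x_0^{(i)}}$ and $\hat q_1^k := \frac{1}{k}\sum_i \delta_{x_1^{(i)}}$, one has
\[
\E_{q^{(k)}(x_0,x_1)}\|x_1-x_0\|^2 \;=\; \E_{\bm{X}_0^k,\bm{X}_1^k}\!\left[W_2^2(\hat q_0^k,\hat q_1^k)\right],
\]
so it suffices to show that $k\mapsto \E\,W_2^2(\hat q_0^k,\hat q_1^k)$ is non-increasing.

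The key algebraic step will be a leave-one-out mixture identity. Writing $\hat q_0^{(-j)}=\frac{1}{k}\sum_{i\neq j}\delta_{x_0^{(i)}}$ and similarly for $\hat q_1^{(-j)}$, a direct counting argument gives
\[
\hat q_0^{k+1} \;=\; \frac{1}{k+1}\sum_{j=1}^{k+1}\hat q_0^{(-j)}, \qquad \hat q_1^{k+1} \;=\; \frac{1}{k+1}\sum_{j=1}^{k+1}\hat q_1^{(-j)},
\]
for any realization of $k{+}1$ samples on each side. I then plan to invoke joint convexity of $W_2^2$ in its two arguments, which holds because averaging the optimal couplings of the pairs $(\hat q_0^{(-j)},\hat q_1^{(-j)})$ with equal weights $\tfrac{1}{k+1}$ produces a valid coupling of $(\hat q_0^{k+1},\hat q_1^{k+1})$ whose associated cost is the weighted average of the individual $W_2^2$'s. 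This yields
\[
W_2^2(\hat q_0^{k+1},\hat q_1^{k+1}) \;\leq\; \frac{1}{k+1}\sum_{j=1}^{k+1} W_2^2(\hat q_0^{(-j)},\hat q_1^{(-j)}).
\]

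Taking expectations and using exchangeability of the i.i.d. samples on each side, every term on the right has the same expectation as $\E\,W_2^2(\hat q_0^k,\hat q_1^k)$, since for any fixed $j$ the pair $((x_0^{(i)})_{i\neq j},(x_1^{(i)})_{i\neq j})$ is distributed exactly as $(\bm{X}_0^k,\bm{X}_1^k)$. This gives $\E\,W_2^2(\hat q_0^{k+1},\hat q_1^{k+1})\leq \E\,W_2^2(\hat q_0^k,\hat q_1^k)$, which is exactly the desired monotonicity. None of the individual steps is deep; the only real obstacle is bookkeeping — the removed index must match on the $x_0$ and $x_1$ sides in order for both empirical measures to decompose simultaneously as a uniform mixture over the same index set, and one must avoid the more naive (and failing) attempt of removing $x_0^{(j)}$ on the left while removing $x_1^{(\sigma^{*}(j))}$ on the right, which does not produce a usable mixture decomposition of $\hat q_1^{k+1}$.
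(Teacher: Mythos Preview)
Your proof is correct and follows essentially the same leave-one-out strategy as the paper: both remove the \emph{same} index $j$ from the $x_0$- and $x_1$-samples and then use exchangeability to identify $\E\big[W_2^2(\hat q_0^{(-j)},\hat q_1^{(-j)})\big]$ with $\E\big[W_2^2(\hat q_0^{k},\hat q_1^{k})\big]$. The only substantive difference is in how the key pointwise inequality
\[
W_2^2(\hat q_0^{k+1},\hat q_1^{k+1}) \;\leq\; \frac{1}{k+1}\sum_{j=1}^{k+1} W_2^2(\hat q_0^{(-j)},\hat q_1^{(-j)})
\]
is justified: you invoke joint convexity of $W_2^2$ (averaging the optimal couplings of the leave-one-out pairs yields a valid coupling of the full empiricals), whereas the paper works directly at the permutation level, comparing the optimal $\sigma_{k+1}$ against each $\sigma_k^{-j}$ extended by $j\mapsto j$. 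Your packaging is a bit cleaner --- it sidesteps the detour through the diagonal terms $\|x_1^{(j)}-x_0^{(j)}\|^2$ that the paper's argument introduces and then has to discard --- but the underlying combinatorics are the same.
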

\begin{proof}
We write
\begin{align}
\begin{split}
        &\E_{t \sim \mathrm{U}(0,1), \bm{X}_0^{k+1},\bm{X}_1^{k+1}} \E_{(x_0, x_1) \sim q^{k+1}} \| x_1 - x_0 \|^2 = \frac{1}{k} \E_{t \sim \mathrm{U}(0,1), \bm{X}_0^{k+1},\bm{X}_1^{k+1}} \bigg[ \sum_{i=1}^{k} \| x_1^{(i)} - x_0^{(\sigma_{k+1}(i))} \|^2 \bigg] \\ &= \frac{1}{k} \frac{1}{k+1} \E_{t \sim \mathrm{U}(0,1), \bm{X}_0^{k+1},\bm{X}_1^{k+1}} \bigg[ \sum_{j=1}^{k+1} \sum_{i \in [k+1]\setminus \{j\}} \| x_1^{(i)} - x_0^{(\sigma_{k+1}(i))} \|^2 \bigg] \\ &\leq \frac{1}{k} \frac{1}{k+1} \E_{t \sim \mathrm{U}(0,1), \bm{X}_0^{k+1},\bm{X}_1^{k+1}} \bigg[ \sum_{j=1}^{k+1} \sum_{i \in [k+1]\setminus \{j\}} \| x_1^{(i)} - x_0^{(\sigma_{k}^{-j}(i))} \|^2 \bigg] \\ &= \E_{t \sim \mathrm{U}(0,1), \bm{X}_0^{k},\bm{X}_1^{k}} \bigg[ \frac{1}{k}\sum_{j=1}^{k} \| x_1^{(i)} - x_0^{(\sigma_{k}(i))} \|^2 \bigg] = \E_{t \sim \mathrm{U}(0,1), \bm{X}_0^{k},\bm{X}_1^{k}} \E_{(x_0, x_1) \sim q^{k}} \| x_1 - x_0 \|^2.
\end{split}
\end{align}
In the first equality, we used that the optimal transport map between the empirical distributions $q_0^k$ and $q_1^k$ can be encoded as a permutation, which we denote by $\sigma_{k+1}$. In the inequality, we introduced the notation $\sigma_{k}^{-j}$ to denote the optimal permutation within $\{x^{(i)}_0\}_{i \in [k+1]\setminus\{j\}}$. The inequality holds because using the optimality of $\sigma_{k+1}$:
\begin{align}
\begin{split}
&\sum_{j=1}^{k+1} \sum_{i \in [k+1]\setminus \{j\}} \| x_1^{(i)} - x_0^{(\sigma_{k+1}(i))} \|^2 \leq \sum_{j=1}^{k+1} \sum_{i \in [k+1]} \| x_1^{(i)} - x_0^{(\sigma_{k+1}(i))} \|^2 \\ &\leq \sum_{j=1}^{k+1} \bigg(\sum_{i \in [k+1] \setminus \{j\}} \| x_1^{(i)} - x_0^{(\sigma_{k}^{-j}(i))} \|^2 + \| x_1^{(j)} - x_0^{(j)} \|^2 \bigg) \leq \sum_{j=1}^{k+1} \sum_{i \in [k+1] \setminus \{j\}} \| x_1^{(i)} - x_0^{(\sigma_{k}^{-j}(i))} \|^2.
\end{split}
\end{align}
\end{proof}

\pagebreak
\section{Experimental \& evaluation details} \label{app:exp}

\begin{table}[H]
\centering
\begin{tabular}{l c c c}
\toprule
  & ImageNet-32 & ImageNet-64 \\
\midrule
Channels & 256 & 192  \\
Depth & 3 & 3 \\
Channels multiple & 1,2,2,2 & 1,2,3,4 \\
Heads & 4 & 4 \\
Heads Channels & 64 & 64 \\
Attention resolution & 4 & 8 \\
Dropout & 0.0 & 0.1 \\
Batch size / GPU & 256 & 50  \\
GPUs & 4 & 16   \\
Effective Batch size & 1024 & 800  \\
Epochs & 350 & 575 \\
Effective Iterations & 438k & 957k   \\
Learning Rate & 1e-4 & 1e-4 \\
Learning Rate Scheduler & Polynomial Decay & Constant  \\
Warmup Steps & 20k & -  \\
\bottomrule
\end{tabular}
\caption{Hyper-parameters used for training each model.}
\label{tab:hyper-params}
\end{table}

\subsection{Image datasets}

We report the hyper-parameters used in Table \ref{tab:hyper-params}. 
We use the architecture from \citet{dhariwal2021diffusion} but with much lower attention resolution.
We use full 32 bit-precision for training ImageNet-32 and 16-bit mixed precision for training ImageNet-64.  
All models are trained using the Adam optimizer with the following parameters: $\beta_1 = 0.9$, $\beta_2=0.999$, weight decay = 0.0, and $\epsilon = 1e{-8}$. 
All methods we trained using identical architectures, with the same parameters for the the same number of epochs (see Table \ref{tab:hyper-params} for details), with the exception of Rectified Flow, which we trained for much longer starting from the fully trained CondOT model.
We use either a constant learning rate schedule or a polynomial decay schedule (see Table \ref{tab:hyper-params}).  
The polynomial decay learning rate schedule includes a warm-up phase for a specified number of training steps. 
In the warm-up phase, the learning rate is linearly increased from $1e{-8}$ to the peak learning rate (specified in Table \ref{tab:hyper-params}). 
Once the peak learning rate is achieved, it linearly decays the learning rate down to $1e{-8}$ until the final training step.

When reporting negative log-likelihood, we dequantize using the standard uniform dequantization \citep{dinh2016density}. We report an importance-weighted estimate using
\begin{equation}
    \text{BPD}(K) = -\frac{1}{D} \log_2 \frac{1}{K} \sum_{k=1}^K p_t (x + u_k), \text{ where } u_k \sim [U(0, 1)]^D,
\end{equation}
with $x$ is in $\{0, \dots, 255\}^D$. We solve for $p_t$ at exactly $t=1$ with an adaptive step size solver \texttt{dopri5} with \texttt{atol=rtol=1e-5} using the \texttt{torchdiffeq}~\citep{torchdiffeq} library. We used $K$=15 for ImageNet32 and $K$=10 for ImageNet64.

When computing FID, we use the TensorFlow-GAN library \url{https://github.com/tensorflow/gan}.

We run coupling algorithms only within each GPU. We also ran coupling algorithms across all GPUs (using the ``Effective Batch Size'') in preliminary experiments, but did not see noticeable gains in sample efficiency while obtaining slightly worse performance and sample quality, so we stuck to the smaller batch sizes for running our coupling algorithms.

For Rectified Flow, we use the finalized FM-CondOT model, generate 50000 noise and sample pairs, then train using the same FM-CondOT algorithm and hyperparameters on these sampled pairs. This is equivalent to their 2-Rectified Flow approach \citep{liu2022flow}. For the rectification process, we train for 300 epochs. 

\subsection{Improved batch optimal couplings} \label{app:exp_stat_vs_dyn}
\textbf{Datasets.} \; We experimented with 3 datasets in dimensions $\{2,32,64\}$ consisting of $50K$ samples. Both $q_0$ and $q_1$ were Gaussian mixtures with number of centers described in Table \ref{tab:hyp_stat}. 

\textbf{Neural Networks Architectures.} \; For B-ST we used stacked blocks of Convex Potential Flows \cite{huang2020convex} as an invertible neural network parametrizing the map, which also allowed us to estimate KL divergence:
\begin{equation}
    \text{KL}(q_1 || \left(\psi_1\right)_{\sharp}q_0) = \E_{x \sim q_1} \left[ \log q_1(x) - \log \left((\psi_1)_{\sharp}q_0\right)(x) \right].
\end{equation}
For B-FM we used a simple MLP with Swish activation. For each dataset we built architectures with roughly the same number of parameters. 

\textbf{Hyperparameter Search.} \; For each dataset and each cost we swept over learning rates $\{0.005, 0.001, 0.0005\}$ and chose the best setting.

\begin{table}[H] 
\centering
\begin{tabular}{c|ccc}
                                     & 2-D & 32-D & 64-D \\ \hline
\multicolumn{1}{l|}{$q_0$ \#centers} & 1   & 50   & 100  \\
\multicolumn{1}{l|}{$q_1$ \#centers} & 8   & 50   & 100  \\
\multicolumn{1}{l|}{\#params}        & 50K & 800K & 800K \\
\multicolumn{1}{l|}{batch size}      & 128 & 1024  & 1024 \\
\multicolumn{1}{l|}{epochs}      & 100 & 1000  & 1000 
\end{tabular}
\caption{Hyperparameters for experiments on synthetic datasets.}
\label{tab:hyp_stat}
\end{table}

\end{document}